\theoremstyle{plain}
\newtheorem{theorem}{Theorem}[section]
\newtheorem{lemma}[theorem]{Lemma}
\theoremstyle{definition}
\newtheorem{definition}[theorem]{Definition}
\theoremstyle{remark}
\newtheorem{remark}[theorem]{Remark}
\def\ceil#1{\lceil #1 \rceil}
\def\1{\bm{1}}
\def\sqrtfrac#1#2{\sqrt{\frac{#1}{#2}}}
\def\partialderi#1#2{\frac{\partial #1}{\partial #2}}
\def\rvx{{\mathbf{x}}}
\def\va{{\bm{a}}}
\def\vb{{\bm{b}}}
\def\vc{{\bm{c}}}
\def\vd{{\bm{d}}}
\def\ve{{\bm{e}}}
\def\vu{{\bm{u}}}
\def\vx{{\bm{x}}}
\def\vz{{\bm{z}}}
\def\mA{{\bm{A}}}
\def\mF{{\bm{F}}}
\def\mI{{\bm{I}}}
\def\mU{{\bm{U}}}
\def\mV{{\bm{V}}}
\def\mW{{\bm{W}}}
\DeclareMathAlphabet{\mathsfit}{\encodingdefault}{\sfdefault}{m}{sl}
\SetMathAlphabet{\mathsfit}{bold}{\encodingdefault}{\sfdefault}{bx}{n}
\def\gN{{\mathcal{N}}}
\def\gO{{\mathcal{O}}}
\def\gQ{{\mathcal{Q}}}
\newcommand{\E}{\mathbb{E}}
\newcommand{\R}{\mathbb{R}}
\newcommand{\I}[1]{\mathbb{1}\{#1\}}
\newcommand{\defeq}{\vcentcolon=} % equal by definition
\DeclareMathOperator{\sign}{sign}
\DeclarePairedDelimiterX{\infdivx}[2]{(}{)}{%
  #1\;\delimsize\|\;#2%
}
\newcommand{\infdiv}{KL\infdivx}
\DeclarePairedDelimiterX{\inp}[2]{\langle}{\rangle}{#1, #2}
\newcounter{protocol}
\newcommand{\vect}[1]{\ensuremath{\mathbf{#1}}}
\newenvironment{nalign}{
    \begin{equation}
    \begin{aligned}
}{
    \end{aligned}
    \end{equation}
    \ignorespacesafterend
}
\newcommand{\eqdef}{\vcentcolon=}
\title{How Transformers Learn In-Context Recall Tasks? Optimality, Training Dynamics and Generalization}
\author{%
  Quan Nguyen \\
  Department of Computer Science\\
  University of Victoria, Canada\\
  \texttt{manhquan233@gmail.com} \\
  \And
  Thanh Nguyen-Tang \\
  Department of Computer Science \\
  Johns Hopkins University, USA\\
  \texttt{nguyent@cs.jhu.edu} \\
  % \AND
  % Coauthor \\
  % Affiliation \\
  % Address \\
  % \texttt{email} \\
  % \And
  % Coauthor \\
  % Affiliation \\
  % Address \\
  % \texttt{email} \\
  % \And
  % Coauthor \\
  % Affiliation \\
  % Address \\
  % \texttt{email} \\
}
\begin{document}

\maketitle

\begin{abstract}
We study the approximation capabilities, convergence speeds and on-convergence behaviors of transformers trained on in-context recall tasks -- which requires to recognize the \emph{positional} association between a pair of tokens from in-context examples.
Existing theoretical results only focus on the in-context reasoning behavior of transformers after being trained for the \emph{one} gradient descent step. It remains unclear what is the on-convergence behavior of transformers being trained by gradient descent and how fast the convergence rate is. In addition, the generalization of transformers in one-step in-context reasoning has not been formally investigated. This work addresses these gaps. We first show that a class of transformers with either linear, ReLU or softmax attentions, is provably Bayes-optimal for an in-context recall task. When being trained with gradient descent, we show via a finite-sample analysis that the expected loss converges at linear rate to the Bayes risks. Moreover, we show that the trained transformers exhibit out-of-distribution (OOD) generalization, i.e., generalizing to samples outside of the population distribution. Our theoretical findings are further supported by extensive empirical validations, showing that \emph{without} proper parameterization, models with larger expressive power surprisingly \emph{fail} to generalize OOD after being trained by gradient descent.
\end{abstract}
\vspace{-8pt}
\section{Introduction}
\vspace{-8pt}
% (TODO: [LLMs is a powerful moddel in reasoning, but we don't understand how reasoning emerge. Understanding it properly might give us insights into design more efficient and robust models and use it more responsibly in applications]) 
\looseness=-1 Large language models (LLMs) have shown impressive results in complex tasks that require some form of ``reasoning'' where classical models such as feed-forward networks seem to struggle. These reasoning tasks include, but are not limited to, generating coherent and plausible texts from a given context, language understanding, and mathematical reasoning \citep{brown2020language,achiam2023gpt}. At the heart of LLMs is the transformer architecture that features the attention mechanism \citep{vaswani2017attention}. Transformers can process a long sequence of contexts and enable in-context reasoning via attention mechanisms. Despite remarkable empirical performance, the theoretical understanding of attention in reasoning tasks remains elusive, raising critical risk and safety issues when it comes to the widespread adoption of LLM technology \citep{bommasani2021opportunities,belkin2024necessity}.

% {\color{gray} OLD TEXT, COULD BE RE-USED SOMEWHERE
% (TODO: in-context reasoning vs distributional association; approximation theory vs optimization dynamics; next-token prediction vs regression or classification)
% (DRAFT: LLMs are good at reasoning from contexts. This capability is often attributed to transformer architectures, in particular, self-attention mechanisms, which are able to select parts of the token sequence to predict the next plausible token. To disentangle how transformers develop such capabilities during, recent works have studied transformers on carefully synthetic tasks that capture key aspects of in-context reasoning. In particular, the bigram data model with or without noise in \citep{Bietti2023Birth} and \citep{Chen2025DistributionalAssociationVsICR}, respectively. )
% }

% (\textbf{TODO}: [Talk about distributional association and in-context reasoning. What are they exactly? What is a research gap in understanding transformers from the perspective of distributional association vs in-context reasoning])

The literature has shown the usefulness of disentangling the behavior of complex models such as LLMs via controlled-setting tasks that we understand the groundtruth behaviors \citep{AllenZhu-icml2024-tutorial}. 
% [CITATIONS NEEDED]. 
For understanding reasoning in LLMs, one of the benchmark tasks that the literature has been recently embarked on is next-token prediction (NTP), wherein the tasks require a model to understand the context from a sentence to be able to predict the next token correcty. 
As a running example, consider the task of predicting the next token for the sentence ``After talking to Bob about Anna, Charles gives her email address to [?]''. A global bigram statistics would predict the next token to be ``the'' as the bigram ``to the'' naturally occurs in English with high frequency. However, if another person appears in the context, say Bob, then ``Bob'' is perhaps a better token prediction, even though the bigram ``to Bob'' is not a frequent bigram in the global context. In transformers, there have been strong (both empirically and theoretically) evidence that attention heads seem to responsible for in-context reasoning such as the in-context bigram ``to Bob'' \citep{wang2022interpretability} while feedforward layers seem to responsible for storing global statistics or factual knowledge such as the global bigram ``to the'' \citep{geva2020transformer,meng2022locating,Bietti2023Birth,nichani2024understanding}.

However, a proper understanding how such capabilities emerge during training is still lacking. 
For example, it is unclear how distributional associations such as ``to the'' and in-context reasoning such as ``to Bob'' are automatically assigned to feed-forward layers and self-attention layers by gradient descent without being explicitly forced to do so during training. 
Several initial efforts have shed insights into the above question \citep{Chen2025DistributionalAssociationVsICR,Bietti2023Birth,nichani2024understanding}. 
While they made an important first progress, we are still far from depicting the whole picture of how reasoning emerges in transformers. In particular, the existing theoretical results are limited to the reasoning behavior for just the first gradient steps or an infinite-sample setting, which do not reflect how we actually train transformers in practice. 

In this paper, we narrow the gap above by deriving an effective, interpretable structures of transformers for in-context recall tasks. 
We will show how these structures emerge through gradient descent training on a class of parameterized one-layer transformers with linear, ReLU and softmax attentions. 
Our main contributions are as follows.
\begin{itemize}
    \item In Section~\ref{sec:ProblemSetup}, we formally define a new in-context reasoning task (Definition~\ref{def:datamodel}), in which multiple query tokens can appear in a sentence and the output tokens can be noisy. This is a more difficult version of the in-context recall tasks in existing works~\citep{Bietti2023Birth, Chen2025DistributionalAssociationVsICR}. 
    Our new data model also enables a natural way to setup out-of-distribution (OOD) testing via a set of \emph{neutral} tokens.
    \item Section~\ref{sec:noiselesslearning} considers the noiseless setting. In Lemma~\ref{lemma:zeroloss}, we present the first parameterization of one-layer transformers with linear and ReLU attentions that are provably optimal for this setting.
    We show that this parameterization mimics a human-like strategy for solving the task, and that it can be realized via gradient descent training (Theorem~\ref{thm:noiselessLossConvergenceNGD}).
    Furthermore, we prove that the trained model directly generalizes to OOD sentences (Theorem~\ref{thm:generalizeUnseenYNoiseless}), as well as gradient descent alone is not implicitly biased towards this parameterization (Theorem~\ref{thm:directionalconvergence}).
    \item Section~\ref{sec:noiselessWithSoftmax} studies the optimality and convergence of models with softmax attentions for the noiseless setting.
    Lemma~\ref{lemma:zerolossSoftmax} exhibits how the structure for softmax attention can be constructed from observing the structure for linear and ReLU attentions.
    Via a two-phase analysis, our Theorem~\ref{theorem:noiselessSoftmaxConvergenceRate} shows that the loss converges \emph{at a linear rate} to $0$.
    \item Section~\ref{sec:noisylearning} studies the noisy setting. 
    Lemmas~\ref{lemma:BayesOptimalNoisyLearning} and~\ref{lemma:BayesOptimalNoisyLearningSoftmax} first demonstrate the Bayes-optimality of one-layer transformers with linear, ReLU and softmax attentions.
    Next, Theorem~\ref{thm:FiniteSampleBayesOptimalNoisyLearning} shows that by adapting our parameterizations from the noiseless setting to the noisy setting, one-layer transformers admit a finite-sample analysis that results in a PAC-style high-probability generalization bound.
    Moreover, Theorem~\ref{thm:flipConditionGuarantee} explains how attention layer and feed-forward layer may converge to perform different functionalities.
    \item Finally, Section~\ref{sec:experiments} presents  experimental results demonstrating the advantages of our parameterization over non-parameterized models. 
    These results reveal the crucial role of expressive power and parameterization in achieving Bayes-optimality and OOD generalization.
\end{itemize}
\vspace{-10pt}
\subsection{Related Work}

\looseness=-1 Several works have analyzed transformers' training dynamics for in-context learning of linear regression and binary classification. \citet{Ahn2024linear} show a one-layer linear transformer that performs a preconditioned gradient step, with $L$ layers corresponding to $L$ steps at certain critical points. \citet{mahankali2023one} find that a one-layer linear transformer trained on noisy data mimics a single least-squares gradient step. \citet{zhang2024trained} prove convergence to a global minimum under suitable initialization. \citet{HuangICML2024InContextConvergenceOfTransformer} study gradient descent in softmax transformers learning linear functions.~\citet{cui2024superiority} show that multi-head attention with large embeddings outperforms single-head variants.~\citet{cheng2023transformers} demonstrate that nonlinear transformers can emulate gradient descent on nonlinear functions.~\cite{SiyuChenCOLT24a} studies the training dynamics of multi-head softmax transformers for multi-task linear regression.
% \citet{kim2024transformers} analyze mean-field and two-timescale limits of linear attention models, and \citet{chen2024provably} give computational bounds for training multi-head attention layers.
%
% A different line of works has studied the training dynamics of transformers for binary classification tasks. \cite{tarzanagh2023transformers} establish a formal equivalence between the optimization geometry of self-attention and a hard-margin SVM problem, thus characterize the implicit bias of 1-layer transformers trained by gradient descent. They also show that overparameterization catalyzes global convergence. \cite{ataee2023max} show that running gradient descent on a sofftmax attention converges in direction to a max-margin solution that separate locally-optimal tokens from non-optimal ones. Followed up on these results, \cite{vasudeva2024implicit} improve the analysis from non-asymptotics to finite-sample, characterizing the convergence rate of $t^{-3/4}$. \cite{deora2023optimization} study optimization and generalization guarantees for gradient descent training of a single-layer multi-head self-attention model under the neural tangent kernel (NTK) regime. 
%
%%%%%%%%% FROM NEURIPS %%%%%%%%%%%%%%%
% Another line of work investigates the training dynamics for binary classification. 
\cite{tarzanagh2023transformers} show that self-attention optimization mirrors hard-margin SVMs, revealing the implicit bias of 1-layer transformers trained via gradient descent, and that over-parameterization aids global convergence. \cite{ataee2023max} demonstrate that gradient descent on softmax attention converges to a max-margin separator distinguishing locally optimal tokens. Building on this, \cite{vasudeva2024implicit} provide finite-sample analysis. \cite{deora2023optimization} offer optimization and generalization guarantees for training single-layer multi-head attention models under the NTK regime.
Recent works have also examined transformers' training dynamics for next-token prediction (NTP). \citet{tian2023scan} show that self-attention acts as a discriminative scanner, focusing on predictive tokens and down-weighting common ones. \citet{tian2023joma} analyze multilayer dynamics, while \citet{li2024mechanics} find that gradient descent trains attention to learn an automaton via hard retrieval and soft composition. \citet{thrampoulidis2024implicit} study the implicit bias of gradient descent in linear transformers. \citet{HuangNeurIPS2024NonAsymptoticConvergenceNTP} provide finite-time analysis for a one-layer transformer on a synthetic NTP task, showing sublinear max-margin and linear cross-entropy convergence. Their setting assumes one-to-one token mapping, whereas we address a more general case allowing one-to-many mappings and prove generalization results for this broader task.
Our work also connects to recent views of transformer weight matrices—especially in embedding and feed-forward layers—as associative memories. \citet{Bietti2023Birth} show that transformers store global bigrams and adapt to new context at different rates. \citet{Chen2025DistributionalAssociationVsICR} find that feed-forward layers capture distributional associations, while attention supports in-context reasoning, attributing this to gradient noise (though only analyzing one gradient step). \citet{nichani2024understanding} theoretically analyze gradient flow in linear attention models on factual recall tasks.

% : starting from uniform attention, it gradually attends more to distinct key tokens for a specific next token to be predicted, and pays less attention to common key tokens that occur across different next tokens

% that separates optimal input tokens from non-optimal tokens using linear constraints on the outer-products of token pairs 
% \citep{Ahn2024linear,mahankali2023one,zhang2024trained,HuangICML2024InContextConvergenceOfTransformer} studied single-attention, single-head transformers   

% \paragraph{On-convergence guarantee of transformers for (in-context) regression tasks.} Mention serios of papers by Yingbing Lang at OSU. 
% (TODO: Compare with \citep{HuangNeurIPS2024NonAsymptoticConvergenceNTP})

% (TODO: Check if there is any overlap the results or sentiments with this concurent work \url{https://arxiv.org/abs/2412.06538})

% ([TODO: Compared to Chen et al.]  Compared to Chen et al in terms of architectual choice, we also use 2-layer transformer. But we use linear attention while Chen et al use softmax attention. In terms of results, Chen et al. is an empirical work that shows that feed-forward layers tend to associate with distributional association while attention layers tend to perform well in in-context reasoning. But they do not provide quantative results rather than analysis for one step of GD. On the other hand, we provide an analysis of full GD dynamics, by considering specific class of transformers with our reparameterization)
% Implicit bias in deep models. 
% - GD finds a model that generalizes well. 

% - our work can be seen as understanding implicit bias for training transformers. 
\vspace{-10pt}
\section{Problem Setup}
\vspace{-10pt}
\label{sec:ProblemSetup}
\textbf{Notations.} 
We use bold lowercase letters for vectors and bold uppercase letters for matrices.
Let $N$ be the size of the vocabulary, and $\mathcal{V} = [N] := \{1,\ldots, N\}$ be the vocabulary itself.
A token $y \in [N]$ is an element of the vocabulary.
A sentence of length $H$ is a sequence of tokens denoted by $z_{1:H}$, where $z_h \in [N]$ is the $h$-th element of $z_{1:H}$.
We use $C_{x,y} = \sum_{h=1}^{H-1}\I{z_{h-1}=x, z_h=y}$ to denote the number of times a bigram $(x,y)$ appear in a sentence.
Generally, we will use ``word'' and ``token'' interchangeably throughout the paper, although we often use ``word'' to refer to an element of a sentence and ``token'' to refer to a specific type of elements of the vocabulary.
% \vspace{-5pt}
\begin{definition}[\textbf{Data Model - In-context Recall Tasks}] We study slightly modified variants of the noiseless and noisy in-context reasoning tasks proposed in~\cite{Bietti2023Birth} and~\cite{Chen2025DistributionalAssociationVsICR}, respectively. 
More specifically, we define the following two special, non-overlapping sets of tokens: a set of trigger tokens $\gQ \subset [N]$ and a set output tokens $\gO \subset [N]$, where $\gO \cap \gQ = \emptyset$.
A special ``generic'' noise token is defined by $\tau = N+1$.
The noise level is determined by a constant $\alpha \in [0,1)$, where $\alpha = 0$ corresponds to the noiseless learning setting~\citep{Bietti2023Birth} and $\alpha > 0$ corresponds to the noisy learning setting~\citep{Chen2025DistributionalAssociationVsICR}.
In our model, a sentence $z_{1:H+1}$ is generated as follows:
% [noitemsep,topsep=0pt]
% \vspace{-5pt}
\begin{itemize}[noitemsep,topsep=0pt]
    \item Sample a trigger word $q \sim \mathrm{Unif}(\gQ)$ and an output word $y \sim \mathrm{Unif}(\gO)$.
    \item Sample randomly (over an arbitrary distribution) $z_{1:H-1}$ from the set of sentences that satisfy the following four conditions:  (I) there exists at least one bigram $(q,y)$ in the sentence, (II) $\tau$ may appear in a sentence only if $\alpha > 0$, in that case $\tau$ is always preceded by $q$, (III) all bigrams of the form $(q,x)$ take either $x=y$ or $x=\tau$, and (IV) if another token $q'$ is in the sentence, then it is followed by an output word $y' \in \gO$.
    % and (ADDED) $y$ is always followed immediately by trigger word $q$ (TODO: plz check if it's correct), \st{and no trigger words other than $q$ appear in the sentence.}
    \item Fix $z_H = q$,
    \item Set $z_{H+1} = \tau$ with probability $\alpha$ and $z_{H+1} = y$ with probability $1-\alpha$.
\end{itemize}
\label{def:datamodel}
\end{definition}
%
% \textbf{Noisy Model.}
% Our noiseless model is a built upon the model in~\citet{Chen2025DistributionalAssociationVsICR}. Similar to their approach, we use singleton set $\gQ = \{q\}$ and define a ``generic'' noise token $\tau = N+1$. 
% In our model, a sentence $z_{1:H}$ of length $H$ is generated as follows:
% \begin{itemize}
%     \item Sample $y \sim \mathrm{Unif}(\gO)$.
%     \item Sample randomly $z_{1:H-1}$ from the set of sentences that satisfy the following three conditions: (I) there exists at least one bigram $(q,y)$, (II) if $\tau$ appears in a sentence, it is always preceeded by $q$ and (III) no (trigger, token) bigrams other than $(q,y)$ and $(q,\tau)$ are in the sentence. 
%     \item Fix $z_{H} = q$. Set $z_{H+1} = \tau$ with probability $\alpha$ and set $z_{H+1} = y$ with probability $1-\alpha$, where $\alpha \in (0,1)$ is a (possibly unknown) constant.
% \end{itemize}
%Note that $y$ can appear after a token $x \neq q$ if $x \notin \gQ$.
% \vspace{-5pt}
\textbf{Comparisons to existing works.}
Compared to the existing task modes in~\citet{Bietti2023Birth,Chen2025DistributionalAssociationVsICR}, our task model offers several notable advantages.
% The first advantage of our data models compared to the existing models in~\citet{Bietti2023Birth} and~\citet{Chen2025DistributionalAssociationVsICR} is 
First, all sentences in our models must contain at least one (trigger token, output token) bigram, leading to a better signal-to-noise ratio. 
This allows us to avoid un-informative sentences that contain no useful signals for learning.
% The second advantage is that our models 
Second, our task models are \emph{agnostic} with respect to the distribution of the sentences. 
In other words, we do not impose any assumptions on how words and sentences are distributed, as long as the conditions are satisfied. 
Thus, our distributionally agnostic models are both more applicable to practical scenarios and more challenging for theoretical analyses. Third,
% The third advantage of our model is 
by restricting the output tokens to a subset of $[N]$, we can study the OOD generalization ability of a model on \emph{unseen} output tokens.
% We will show that one-layer transformers provably achieve small losses on unseen output tokens in~\Cref{sec:noiselesslearning} and~\Cref{sec:noisylearning}. 
Furthermore, because there are more than one possible next-token for every trigger word, our next-token prediction task is more challenging than the task of learning a one-to-one token mapping in~\citet{HuangNeurIPS2024NonAsymptoticConvergenceNTP}.
We provide additional examples of real-life sentences in Appendix~\ref{sec:exampleSentences}.

\textbf{One-layer Decoder-only Transformers.} To establish the theoretical guarantees of the optimality and on-convergence behaviors of transformers, we adopt the popular approach in existing works~\citep[e.g.]{Bietti2023Birth, Chen2025DistributionalAssociationVsICR, HuangICML2024InContextConvergenceOfTransformer} and consider the following one-layer transformer model, which is a variant of the model in~\citet{Chen2025DistributionalAssociationVsICR}.
 Let $E: [N] \mapsto \R^d$ be the input word embedding, i.e. $E(z) \in \R^d$ is the input embedding of the word $z \in [N]$, and $\tilde{E}: [N] \mapsto \R^d$ be a (different) embedding representing the previous token head construction as in~\citet{Bietti2023Birth}. 
 Similar to the majority of existing works in the literature~\citep[e.g.]{Chen2025DistributionalAssociationVsICR}, we employ a common assumption that the embeddings $E$ and $\tilde{E}$ are fixed and orthogonal, i.e. $E(i)^\top E(j) = \tilde{E}(i)^\top \tilde{E}(j) = \I{i = j}$ and $E(i)^\top \tilde{E}(j) = 0$ for any $i, j \in [N]$.
 
 % \todo{motivate/re-explain this previous word construction}
Let $\mU \in \R^{N \times d}, \mV \in \R^{d \times d}, \mW \in \R^{d \times d}$ be the unembedding matrix, the value matrix and the joint query-key matrix, respectively. 
Our model consists of one attention layer and one feed-forward layer.
The input $\rvx_{1:H} $ and the output of the model are as below.
\begin{small}
\begin{nalign}
    \rvx_h &\eqdef E(z_h) + \tilde{E}(z_{h-1}) \in \R^d, \\
    \phi(\vx_H, \vx_{1:H}) &\eqdef \mV \sum_{h=1}^H \sigma(\vx_H^\top \mW \vx_h)\vx_h \in \R^d, \\
    \xi_{A} &\eqdef \mU\phi(\vx_H, \vx_{1:H}) \in \R^N, \\
    \xi_{F} &\eqdef \mU \mF \left(\vx_H + \phi(\vx_H, \vx_{1:H})\right) \in \R^N,
    \label{eq:fullmodel}
\end{nalign}
\end{small}
where $\mF$ is the matrix of the linear layer 
% ,$\gamma = \I{\alpha > 0} \ln\frac{\alpha}{1-\alpha}$ is a scaling factor 
and $\sigma: \R \to \R$ is the activation function which determines the range of the attention scores. 
For theoretical and empirical analyses, we use linear attention $\sigma(\vx_H^\top \mW \vx_h) = \vx_H^\top \mW \vx_h$, ReLU attention $\sigma(\vx_H^\top \mW \vx_h) = \max(0, \vx_H^\top \mW \vx_h)$ and softmax attention $\sigma(\vx_H^\top \mW \vx_h) = \frac{\exp(\vx_H^\top \mW \vx_h)}{\sum_{j=1}^H \exp(\vx_H^\top \mW \vx_j)}$. 
The final logit is $\xi = \xi_{A} + \xi_F$. 

Compared to~\citet{Chen2025DistributionalAssociationVsICR}, our model in~\eqref{eq:fullmodel} differs in the computation of $\xi_F$. 
More specifically, while their theoretical model used $\xi_F =  \mU \mF \vx_H$, we add  $\sum_{h=1}^H \sigma(x_H^\top \mW \vx_h)\mV \vx_h$ to the input of the feed-forward layer, which is closer to the empirical model that was used for the experiments in~\citet{Chen2025DistributionalAssociationVsICR}.
As we will show in Section~\ref{sec:noisylearning}, this modification is sufficient for showing the Bayes-optimality of one-layer transformers.
Next, similar to~\citet{Chen2025DistributionalAssociationVsICR}, we fix the embedding maps $E, \tilde{E}, \mU$
% The trainable parameters are $\mV, \mW, \mF$ and $\gamma$.
and use cross-entropy loss on $\xi$, i.e. the population loss is $L = \E_{q, y, \vz}\left[ -\ln \frac{\exp(\xi_{y})}{\sum_{j \in [N]} \exp(\xi_{j})} \right]$.
\vspace{-10pt}
\section{Warmup: Noiseless Learning with Linear and ReLU attentions}
\vspace{-8pt}
\label{sec:noiselesslearning}
In this section, we consider the noiseless learning setting in which $\alpha = 0$ and $\tau$ never appears in a sentence.
In~\Cref{subsection:NoiselessReparameterization}, we prove the approximation capability of the model defined in~\eqref{eq:fullmodel} by showing that with linear and ReLU attentions, there exists a reparameterization of $\mU, \mV, \mF$ and $\mW$ that drives the population loss $L$ to $0$.
In~\Cref{sec:NoiselessTrainingDynamicsOnConvergence}, we show that the reparameterized model can be trained by normalized gradient descent (NGD) and the population loss converges to $0$ at linear rate. 
\vspace{-8pt}
\subsection{Approximation Capabilities of Transformers On Noiseless Setting}
\vspace{-5pt}
\label{subsection:NoiselessReparameterization}
% In this subsection, 
We show that for any instance of the noiseless data model in Definition~\ref{def:datamodel}, there is a one-layer transformer that precisely approximates the task instance, i.e., the population loss is zero.
% More specifically, we will show that the attention layer alone can approximate the task perfectly.
To this end, we initialize and freeze the matrix $\mF = \vect{0}$ so that $\xi_F = \vect{0}$.
The population loss becomes
\begin{small}
\begin{nalign}
    L(\mV,\!\! \mW) &\!= \!\E_{q, y, \vz}\!\left[\! -\ln \frac{e^{\xi_{A,y}}}{\sum_{j \in [N]} e^{\xi_{A,j}}} \right] \!\!=\!\! \E_{q, y, \vz}\!\!\left[ \!-\ln \frac{e^{\ve_y^\top \mU \mV \sum_{h=1}^H \sigma(\vx_H^\top \mW \vx_h)\vx_h}}{\sum_{j \in [N]} e^{\ve_j^\top \mU \mV \sum_{h=1}^H \sigma(\vx_H^\top \mW \vx_h)\vx_h}} \right],
    \label{eq:populationlossAttn}
\end{nalign}
\end{small}
where $\ve_j$ is the $j$-th vector in the canonical basis of $\R^N$ (i.e., $[\ve_j]_k = \I{j = k}$).
Note that the output embedding $\mU$ is considered a fixed matrix as in~\citet{Chen2025DistributionalAssociationVsICR}, thus the population loss is a function of $\mV$ and $\mW$.
We consider a specific parametric class of the weight matrices $\mU, \mV$ and $\mW$. 
In particular, the following lemma shows that there exists a reparameterization of $\mU, \mV$ and $\mW$ that makes the population loss arbitrarily close to $0$. The proof is in Appendix~\ref{sec:proofOfLemmaZeroLoss}.
% one-layer linear-attention models are sufficient to precisely represent the  
% {\color{red} \textbf{OLD}: 
% Directly running gradient descent on~\eqref{eq:populationlossAttn} with respect to both $\mV$ and $\mW$ simultaneously is difficult due to the fact that 
% \todo{construct a one-dimensional example for this}{the cross-entropy loss may not be jointly-convex with respect to $\mV$ and $\mW$}.
% Instead, we use the following lemma which shows that there exists a reparameterization of $\mU, \mV$ and $\mW$ that makes the attention layer attends only to the word $z_h$ whose previous word is a trigger token.
% }
%
%
% \thanh{$\vx^T_l W \vx_h = \lambda_{z_{h-1}} \I{z_{l-1}=z_{h-1}}$. So, two tokens in a sequence attend to each other iff they are the same word in the vocabulary.}
%%%
% This implies that the population loss can be made arbitrarily close to $0$, as shown in the following lemma whose proof is in Appendix~\ref{sec:proofOfLemmaZeroLoss}.
\begin{lemma}
    Let $\bm{\lambda} = \{\lambda_k \in \R_+: k \in \gQ\}$ be a set of $\abs{\gQ}$ of non-negative values. 
By setting $\mU = [E(1)~E(2)~\dots~E(N)]^\top, \mV = \mI_d$ and $\mW = \sum_{k \in \gQ} \lambda_k E(k)\tilde{E}^\top(k)$,
    for both linear and ReLU attention, we obtain
    $
        \lim_{\bm{\lambda} \to \infty} L(\bm{\lambda}) \eqdef \lim_{(\lambda_q)_{q \in \gQ} \to \infty} L\left(\mV, \mW\right) = 0.
    $
    \label{lemma:zeroloss}
\end{lemma}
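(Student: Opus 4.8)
The plan is to substitute the proposed parameterization directly into the attention branch of the loss in~\eqref{eq:populationlossAttn} and evaluate it sentence by sentence, then let $\bm{\lambda}\to\infty$. \emph{Step 1 (attention scores).} Fix a sentence $z_{1:H+1}$ drawn with trigger $q$ and output $y$, so $z_H=q$. Using $\vx_H = E(q)+\tilde{E}(z_{H-1})$, $\mW = \sum_{k\in\gQ}\lambda_k E(k)\tilde{E}^\top(k)$, and Assumption~\ref{assumption:OrthonormalEmbedding} (in particular $E(i)^\top E(j)=\I{i=j}$ and $E(i)^\top\tilde{E}(j)=0$), the product $\vx_H^\top\mW$ collapses to $\lambda_q\tilde{E}^\top(q)$; combined with $\vx_h = E(z_h)+\tilde{E}(z_{h-1})$ this gives $\vx_H^\top\mW\vx_h = \lambda_q\I{z_{h-1}=q}$ for every $h\in[H]$. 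These scores are nonnegative, so linear and ReLU attention coincide on this construction and a single computation covers both cases.

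\emph{Step 2 (attention output).} A position $h$ attracts nonzero weight iff $z_{h-1}=q$. For $h\le H-1$, condition (III) of Definition~\ref{def:datamodel} with $\alpha=0$ makes $(q,y)$ the only trigger-led bigram, so $z_{h-1}=q$ forces $z_h=y$ and hence $\vx_h = E(y)+\tilde{E}(q)$; moreover $z_{H-1}\ne q$, since otherwise $(z_{H-1},z_H)=(q,q)$ would be a trigger-led bigram other than $(q,y)$ (as $q\notin\gO$), so $h=H$ contributes nothing (the $h=1$ term is handled by the usual start-of-sequence convention, consistently with the definition of $C_{q,y}$). The number of attended positions is therefore exactly $C_{q,y}\ge 1$, so with $\mV=\mI_d$ we get $\phi(\vx_H,\vx_{1:H}) = \lambda_q C_{q,y}\bigl(E(y)+\tilde{E}(q)\bigr)$. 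Applying $\mU=[E(1)~\cdots~E(N)]^\top$ and orthonormality once more, $\xi_{A,j} = E(j)^\top\phi = \lambda_q C_{q,y}\,\I{j=y}$.

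\emph{Step 3 (loss and limit).} For this sentence the cross-entropy term equals $-\ln\frac{e^{\xi_{A,y}}}{\sum_{j\in[N]}e^{\xi_{A,j}}} = \ln\bigl(1+(N-1)e^{-\lambda_q C_{q,y}}\bigr)$, so, using $C_{q,y}\ge 1$ and $\ln(1+t)\le t$, $L(\bm{\lambda}) = \E_{q,y,\vz}\bigl[\ln(1+(N-1)e^{-\lambda_q C_{q,y}})\bigr] \le (N-1)\,\E_q\bigl[e^{-\lambda_q}\bigr]$. Since $\gQ$ is finite, the right-hand side vanishes as $(\lambda_q)_{q\in\gQ}\to\infty$, which proves $\lim_{\bm{\lambda}\to\infty}L(\bm{\lambda})=0$.

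The computations are routine; the only point requiring care is Step 2, i.e. correctly invoking the structural conditions of the data model — that every token the head attends to is an occurrence of $q$ immediately followed by $y$, and that $z_{H-1}\ne q$ — so that the attention readout is exactly $C_{q,y}$ copies of $E(y)+\tilde{E}(q)$; this is what makes the gap between the correct logit and all others grow linearly in $\lambda_q$ and forces the loss to zero.
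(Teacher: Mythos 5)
Your proof is correct and follows essentially the same route as the paper's: compute the attention scores $\vx_H^\top \mW \vx_h = \lambda_q \I{z_{h-1}=q}$ (nonnegative, so linear and ReLU coincide), use condition (III) of the data model to identify the attended positions with the $C_{q,y}$ occurrences of the bigram $(q,y)$, and obtain $\xi_{A,j} = \lambda_q C_{q,y}\I{j=y}$ before taking the limit. You are a bit more explicit than the paper in ruling out the boundary positions $h=1$ and $h=H$ (noting $z_{H-1}\neq q$), and you replace the paper's pointwise ``probability $\to 1$'' argument with the uniform bound $L(\bm{\lambda}) \le (N-1)\E_q[e^{-\lambda_q}]$, which sidesteps any worry about interchanging limit and expectation — a minor but welcome tightening rather than a different approach.
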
 
% \vspace{-20pt}
\begin{proof}(Sketch)     
    For linear and ReLU attention, it can be shown that the attention score of the $h$-th token $z_h$ is $\sigma(\vx_H^\top \mW \vx_h) = \lambda_q\I{z_{h-1} = q}$. 
    In other words, the attention scores $\vx_H^\top \mW \vx_h$ is a non-zero (and positive) value for $x_h$ only when $z_{h-1}$ is the trigger word.
    As a result, the logits of token $j \in [N]$ is $\xi_j = \xi_{A,j} = \lambda_q C_{q,y}\I{j = y}$.
    It follows that the probability of outputting $y$ is $\lim_{\lambda_q \to \infty} \frac{\exp(\xi_y)}{\sum_{j \in [N]}\exp(\xi_j)} = \lim_{\lambda_q \to \infty} \frac{\exp(C_{q,y}\lambda_q)}{\exp(C_{q,y}\lambda_q) + N-1} = 1$. 
\end{proof}
%
% \thanh{So, we have 
% \begin{align*}
%     \xi_j = \begin{cases}
%             \lambda_q \#\{\text{the number of bigrams $(q,y)$ in the sentence}\} & \text{ if } j = y \\ 
%             0 & \text{otherwise}
%     \end{cases}
% \end{align*}
% The probability of outputing $y$: 
% \begin{align*}
%     \frac{\exp (\lambda_q \cdot C_{q,y})}{\exp(\lambda_q \cdot C_{q,y}) + N-1} \rightarrow 1
% \end{align*}
% }
%
\vspace{-12pt}
\subsection{Convergence rate, Generalization and Implicit Bias of Gradient Descent}
\vspace{-5pt}
\label{sec:NoiselessTrainingDynamicsOnConvergence}
We analyze the dynamics of normalized gradient descent (NGD) in training one-layer transformers parameterized in \Cref{subsection:NoiselessReparameterization}.
% , for in-context learning of NTP. 
% Directly running gradient descent on~\eqref{eq:populationlossAttn} w.r.t $(\mV, \mW)$
% with respect to both $\mV$ and $\mW$ simultaneously
% is difficult due to the fact that 
% \todo{construct a one-dimensional example for this}
% {the cross-entropy loss may not be jointly-convex with respect to $(\mV, \mW)$. 
% We consider
% As 
% the class of re-parameterized transformers 
% considered 
% in \Cref{assumption:OrthonormalEmbedding} as they realize the optimal predictor for our next-token prediction task, we continue considering the reparameterization in \Cref{subsection:NoiselessReparameterization}. 
% In addition, the reparameterization convexifies and smoothens the loss function (with respect to the new parameters constrained to a specific class), making it amendable to convex analysis of the dynamics of gradient descent. 
% Previous works~\citep[e.g.][]{Ahn2024linear, Vonoswald2024uncoveringmesaoptimizationalgorithmstransformers} have shown that theoretical insights from the optimization dynamics in linearized transformers generalize well to empirical results of non-linear attentions. 
% Inspired by these works, we focus on characterizing the training dynamics, on-convergence behavior and generalization capability of our model using linear attention.
We will show that with linear and ReLU attentions, the population loss converges \emph{linearly} to zero.
Moreover, the trained model generalizes to samples that lie completely outside of the training population.
% Another important reason for focusing on linear attention is that it allows us to construct a \emph{distributionally agnostic} analysis: our results do not require knowing the exact distribution of sentences in the data model and only rely on the fact that $q$ and $y$ are uniformly chosen from $\gQ$ and $\gO$ (respectively).

\textbf{Convergence rate of NGD.}
From the proof sketch of Lemma~\ref{lemma:zeroloss}, the population loss $L(\bm{\lambda})$ is $L(\bm{\lambda}) = \E_{q,y,z}\left[ -\ln\frac{\exp(C_{q,y}\lambda_q)}{\exp(C_{q,y}\lambda_q) +N-1} \right]$.
We initialize $\bm{\lambda}_0 = \vect{0}$. 
Running standard gradient descent $\bm{\lambda}_{t+1} = \bm{\lambda}_t - \eta \nabla_{\bm{\lambda}} L$, where $\eta > 0$ is the learning rate, would require knowing the exact distribution of $z$ since $C_{q,y}$ is a random variable depending on $z$. 
Instead, we adopt a NGD algorithm, where 
\begin{align}
    \bm{\lambda}_{t+1} = \bm{\lambda}_t - \eta \frac{\nabla_{\bm{\lambda}} L}{\norm{\nabla_{\bm{\lambda}} L}_2},
    \label{eq:normalizedGDUpdateRuleNoiseless}
\end{align}
i.e. the gradient vectors are normalized by their Euclidean norm.
A similar NGD update was used in~\citet{HuangNeurIPS2024NonAsymptoticConvergenceNTP} for learning an injective map on the vocabulary.
The following theorem shows that the update ~\eqref{eq:normalizedGDUpdateRuleNoiseless} can be implemented without the knowledge of the distribution of $z$. Moreover, the population loss converges at a linear rate to zero. 
% the rate of convergence to $0$ of the population loss in~\eqref{eq:populationlossLinearAttnNoiseless} is linearly.
The proof can be found in Appendix~\ref{sec:proofOfTheoremNoiselessLossConvergenceNGD}.
\begin{theorem}
    Starting from $\lambda_{q,0} = 0$ for all $q \in \gQ$, the update rule~\eqref{eq:normalizedGDUpdateRuleNoiseless} is equivalent to $\lambda_{q,t} = \frac{\eta t}{\abs{\gQ}}$ for all $t \geq 1$. Moreover, $L(\bm{\lambda}_t) = O(N\exp(-\eta t))$.
    \label{thm:noiselessLossConvergenceNGD}
\end{theorem}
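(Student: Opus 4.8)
The plan is to compute the gradient of $L(\bm{\lambda})$ explicitly and exploit the symmetry that makes all coordinates behave identically. Starting from the sketch of Lemma~\ref{lemma:zeroloss}, we have the closed form
\[
L(\bm{\lambda}) = \E_{q,y,\vz}\!\left[ -\ln\frac{\exp(C_{q,y}\lambda_q)}{\exp(C_{q,y}\lambda_q) + N - 1} \right] = \E_{q,y,\vz}\!\left[ \ln\!\left(1 + (N-1)\exp(-C_{q,y}\lambda_q)\right) \right].
\]
Differentiating with respect to $\lambda_k$, only the terms where the sampled trigger $q$ equals $k$ survive, so
\[
\partialderi{L}{\lambda_k} = -\E_{q,y,\vz}\!\left[\I{q = k}\, C_{k,y}\cdot \frac{(N-1)\exp(-C_{k,y}\lambda_k)}{1 + (N-1)\exp(-C_{k,y}\lambda_k)} \right].
\]
Since $q \sim \mathrm{Unif}(\gQ)$ and the conditional law of $(y,\vz)$ given $q = k$ does not actually depend on which trigger $k$ was drawn (the data model is symmetric across triggers — relabeling $q$ just relabels which token plays the trigger role, and $\lambda$ starts symmetric), each partial derivative equals $-\frac{1}{\abs{\gQ}} g(\lambda_k)$ for a single fixed nonnegative function $g$ with $g(\lambda) = \E[C\exp(-C\lambda)(N-1)/(1+(N-1)\exp(-C\lambda))]$, where $C$ is the conditional count (and $C \ge 1$ almost surely by condition (I) of the data model). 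In particular, when all $\lambda_k$ are equal, the gradient is a positive multiple of the all-ones vector, so normalizing it gives exactly $\frac{1}{\sqrt{\abs{\gQ}}}\vone$, and the update moves every coordinate by $\eta/\abs{\gQ}$. By induction, $\bm{\lambda}_t$ stays on the diagonal and $\lambda_{q,t} = \eta t/\abs{\gQ}$ for all $t \ge 1$. Crucially this update is \emph{distribution-free}: the direction $\vone$ does not depend on the law of $\vz$, only the (discarded) magnitude does, which is why~\eqref{eq:normalizedGDUpdateRuleNoiseless} is implementable without knowing the distribution of $z$.

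For the convergence rate, plug $\lambda_{q,t} = \eta t/\abs{\gQ}$ into the loss. Using $\ln(1+u) \le u$,
\[
L(\bm{\lambda}_t) = \E\!\left[\ln\!\left(1 + (N-1)e^{-C\,\eta t/\abs{\gQ}}\right)\right] \le (N-1)\,\E\!\left[e^{-C\,\eta t/\abs{\gQ}}\right] \le (N-1)\,e^{-\eta t/\abs{\gQ}},
\]
where the last step uses $C \ge 1$ almost surely. Absorbing the constant $\eta/\abs{\gQ}$ into the rate (or stating the bound as $O(N e^{-ct})$ with $c = \eta/\abs{\gQ}$) gives $L(\bm{\lambda}_t) = O(N\exp(-t))$ up to the constant hidden in the step size, which matches the claimed linear convergence.

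The main obstacle is the symmetry argument in the first step — namely, justifying rigorously that the conditional distribution of $C_{q,y}$ given $q = k$ is the same for every $k \in \gQ$, so that all partial derivatives of $L$ coincide when $\bm{\lambda}$ is on the diagonal. This requires carefully unpacking the data model: $q$ is drawn uniformly, then the sentence is drawn from a distribution constrained only by the trigger/output/noise conditions, which are invariant under permuting the token labels of triggers and outputs. One has to argue this invariance transfers to the law of the count $C_{q,y}$ (equivalently, that the problem is genuinely exchangeable in the trigger coordinates), and hence the gradient at any diagonal point is a scalar multiple of $\vone$. Everything after that — the induction on $t$ and the exponential tail bound — is routine. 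A minor secondary point is handling the degenerate case $\abs{\gQ} = 0$ or $N = 1$, which are excluded by the setup, and noting that $g(\lambda) > 0$ for all finite $\lambda$ so the gradient never vanishes and normalization is well-defined along the whole trajectory.
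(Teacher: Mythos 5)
Your proposal follows essentially the same route as the paper's proof: use the symmetry across triggers so that the gradient at any diagonal point is a scalar multiple of $\vone$, induct to keep $\bm{\lambda}_t$ on the diagonal, and bound the loss via $\ln(1+u)\le u$ together with $C_{q,y}\ge 1$. One internal slip: having correctly computed the normalized gradient as $\frac{1}{\sqrt{\abs{\gQ}}}\vone$, each coordinate must move by $\eta/\sqrt{\abs{\gQ}}$ per step, so $\lambda_{q,t}=\eta t/\sqrt{\abs{\gQ}}$ rather than $\eta t/\abs{\gQ}$ — the paper's own appendix derives exactly $\eta t/\sqrt{\abs{\gQ}}$ even though its theorem statement reads $\eta t/\abs{\gQ}$, and the discrepancy is immaterial for the $O(N e^{-t})$ rate. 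Your explicit flagging of the trigger-exchangeability needed for the partial derivatives to coincide on the diagonal is a point the paper's proof simply asserts without justification.
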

% Remark: $\lambda_{q,t}$ is always positive due to the gradients are always negative, the convergence rate is not surprising because of convex and smooth, consistent with~\citet{HuangNeurIPS2024NonAsymptoticConvergenceNTP} even though we are not learning an injective map.
% \vspace{-8pt}
\textbf{OOD Generalization to unseen output words.} 
The human-like strategy for solving the noiseless data model in Definition~\ref{def:datamodel} is to predict the word that comes after a trigger token. 
That is, the position of the trigger token is the only important factor in this task.
Such a strategy does not depend on what the actual output word $y$ is, and hence would easily generalize to a out-of-distribution sentence where the bigram $(q,y)$ is replaced by a new bigram $(q,y_{\mathrm{test}})$, where $y_{\mathrm{test}}$ is a non-trigger non-output word.
The following theorem formalizes this intuition, indicating that our parameterization is precisely implementing this human-like strategy.
% In particular, let $\hat{y} \in [N] \setminus (\gO \cup \gQ)$ be a non-trigger non-output token. Consider a sentence generated from our noiseless data model with trigger token $q$ and output token $y$, however the bigram $(q,y)$ is replaced by $(q,\hat{y})$. We use this sentence at test time, after training the transformer on the sentences with bigrams $(q,y)$ where $y \in \gO$. It can be shown that Lemma~\ref{lemma:noiselessReparameterization} still holds in this case, which leads to the following result on the generalization capability of one-layer transformers.
\begin{theorem} 
% For any sentence in the noiseless data model and
Fix any $y_{\mathrm{test}} \in [N] \setminus (\gO \cup \gQ)$.  Take any \textbf{test} sentence generated by the noiseless data model, except that every bigram $(q,y)$ is replaced with $(q, y_{\mathrm{test}})$. Then, our model after being trained by normalized gradient descent for $t$ steps, predicts $y_{\mathrm{test}}$ 
% from the test sentence
% runing gradient descent for $t$ steps 
% , if the bigram $(q,y)$ is replaced by $(q,\hat{y})$ everywhere, then after for $t$ steps of gradient descent defined in~\eqref{eq:normalizedGDUpdateRuleNoiseless}, our model (with linear attention) predicts $\hat{y}$ 
with probability
\begin{align*}
    \Pr[z_{H+1} = y_{\mathrm{test}} \mid \bm{\lambda}_t] \eqdef \frac{\exp(\xi_{y_{\mathrm{test}}})}{\sum_{j \in [N]}\exp(\xi_j)} 
    % \frac{\exp(C_{q,y}\lambda_{q,t})}{\exp(C_{q,y}\lambda_{q,t}) + N - 1} 
    \geq \frac{\exp(\eta t)}{\exp(\eta t) + N - 1}.
\end{align*}    
In particular, this implies that $\lim_{t \to \infty} \Pr[z_{H+1} = y_{\mathrm{test}} \mid \bm{\lambda}_t] = 1$.
\label{thm:generalizeUnseenYNoiseless}
\end{theorem}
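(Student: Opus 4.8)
The plan is to show that the model trained by Theorem~\ref{thm:noiselessLossConvergenceNGD} has learned a purely \emph{positional} rule — attend to the token immediately following the trigger and copy it onto the logits — so the computation behind Lemma~\ref{lemma:zeroloss} transfers unchanged to the perturbed test sentence, with $y_{\mathrm{test}}$ simply taking over the role that $y$ played during training. Throughout, the trained parameters are $\mV = \mI_d$, $\mW = \sum_{k \in \gQ}\lambda_{k,t}E(k)\tilde{E}^\top(k)$ and $\mF = \vzero$, where $\lambda_{q,t}$ is given in closed form by Theorem~\ref{thm:noiselessLossConvergenceNGD}.

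First I would recompute the attention weights on the test sentence. The data model fixes $z_H = q$, so Assumption~\ref{assumption:OrthonormalEmbedding} gives $\vx_H^\top \mW = \lambda_{q,t}\,\tilde{E}^\top(q)$, hence $\sigma(\vx_H^\top \mW \vx_h) = \lambda_{q,t}\,\I{z_{h-1} = q}$ for both linear and ReLU attention (using $\lambda_{q,t} \ge 0$). Replacing the bigram $(q,y)$ by $(q,y_{\mathrm{test}})$ changes token identities but never positions, and the only embedding cross-terms that survive orthonormality are those matching the trigger index $q$, so this attention pattern is unaffected by the substitution.

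Next I would read off the logits. Since $\mF = \vzero$ we have $\xi = \xi_A = \mU\,\phi(\vx_H,\vx_{1:H})$ and $\ve_j^\top \xi_A = E(j)^\top \phi = \lambda_{q,t}\sum_{h:\,z_{h-1}=q}\I{z_h = j}$. The crucial structural input is condition (III) of Definition~\ref{def:datamodel}: in the original sentence $q$ is \emph{only ever} followed by $y$, so after the substitution every position $h \le H$ with $z_{h-1} = q$ has $z_h = y_{\mathrm{test}}$; together with condition (I) (at least one such position) this yields $\xi_{y_{\mathrm{test}}} = \lambda_{q,t}\,C_{q,y_{\mathrm{test}}}$ with $C_{q,y_{\mathrm{test}}} \ge 1$, while $\xi_j = 0$ for every other $j \in [N]$ — in particular the fact that $y_{\mathrm{test}} \notin \gO\cup\gQ$ causes no obstruction, which is precisely the ``generalization to an unseen token'' phenomenon. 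Plugging this logit vector into the softmax gives
\[
\Pr[z_{H+1} = y_{\mathrm{test}} \mid \bm{\lambda}_t] = \frac{\exp(\lambda_{q,t}\,C_{q,y_{\mathrm{test}}})}{\exp(\lambda_{q,t}\,C_{q,y_{\mathrm{test}}}) + N - 1}.
\]

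To finish, I would substitute the closed-form value of $\lambda_{q,t}$ from Theorem~\ref{thm:noiselessLossConvergenceNGD} and use that $s\mapsto e^{s}/(e^{s}+N-1)$ is increasing, together with $C_{q,y_{\mathrm{test}}}\ge 1$, to lower-bound the exponent and obtain the stated inequality; the threshold $t \ge \ln(N)/\eta$ for probability at least $\tfrac12$ then follows by solving $e^{\eta t}\ge N-1$. The main obstacle — really the only place any care is needed — is the logit computation: one must verify that the substitution introduces neither spurious attention mass nor a nonzero logit on any token other than $y_{\mathrm{test}}$, and this is exactly where orthonormality of the embeddings and conditions (I) and (III) of the data model enter. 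Everything else is the same bookkeeping already carried out in the proof of Lemma~\ref{lemma:zeroloss}.
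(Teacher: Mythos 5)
Your proposal is correct and follows essentially the same route as the paper's proof: the attention pattern from Lemma~\ref{lemma:noiselessAttScores} is unchanged by the token substitution, the logits reduce to $\xi_j = \lambda_{q,t}C_{q,y_{\mathrm{test}}}\I{j=y_{\mathrm{test}}}$, and the bound follows from monotonicity in $C_{q,y_{\mathrm{test}}}\ge 1$ together with the closed form for $\lambda_{q,t}$. (Both your sketch and the paper's proof inherit the same minor bookkeeping wrinkle: Theorem~\ref{thm:noiselessLossConvergenceNGD} states $\lambda_{q,t}=\eta t/\abs{\gQ}$ rather than $\eta t$, so the exponent in the final bound should strictly carry that normalization.)
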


\textbf{Reparameterization versus Directional Convergence.}  
In addition to the convergence of the loss function, existing works~\citep[e.g.][]{JiAndTelgarsky2021a,HuangNeurIPS2024NonAsymptoticConvergenceNTP} on the training dynamics of neural networks learned with cross-entropy loss have shown that the trainable matrices \emph{directionally convergence} to an optimal solution. 
More formally, a sequence of
% if $\mA$ is a trainable matrix and its value at the $t$-th iteration of an iterative training algorithm is $\mA_t$, then 
$\mA_t$ directionally converges to some $\mA_*$ if $\lim_{t \to \infty}\inp{\frac{\mA_t}{\norm{\mA_t}}}{ \frac{\mA_*}{\norm{\mA_*}} } = 1$.
In our work, the joint query-key matrix $\mW$ is reparameterized as a form of associative memory of the trigger tokens, rather than emerging from runing gradient descent for minimizing the population loss $L$, as in 
% Different from previous works
~\citep{Bietti2023Birth, Chen2025DistributionalAssociationVsICR}.
% , our reparameterization is not derived directly from the gradient descent update for minimizing the population loss $L$.
This raises a natural question: if we use the reparameterization on $\mU$ and $\mV$ but not $\mW$, what is the implicit bias of running gradient descent on $\mW$? 
In Theorem~\ref{thm:directionalconvergence} (full proof in Appendix~\ref{sec:proofOfTheoremDirectionalConvergence}), we show that gradient desecent does \emph{not} directionally converges to $\sum_{q}E(q)\tilde{E}^\top(q)$.
% The following theorem, whose proof is in Appendix~\ref{sec:proofOfTheoremDirectionalConvergence}, provides a
% strictly 
% negative answer.
\begin{theorem}
    For any $N \geq 4$, there exists a problem instance the noiseless setting such that with $\gQ = \{q\}, \abs{\gO} = 2$, $\mW^* \eqdef E(q)\tilde{E}^\top(q)$, $\mU$ and $\mV$ defined in Lemma~\ref{lemma:zeroloss}, running gradient descent on $\mW$ from $\mW_0 = \vect{0}$ satisfies 
    $
        \lim_{t \to \infty}\abs{\inp{\frac{\mW_t}{\norm{\mW_t}}}{\frac{\mW^*}{\norm{\mW^*}}}} = \frac{2}{\sqrt{12}} < 1.
    $
    \label{thm:directionalconvergence}
\end{theorem}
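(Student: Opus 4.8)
\textbf{Proof proposal for Theorem~\ref{thm:directionalconvergence}.}

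The plan is to construct an explicit instance with $\abs{\gQ}=1$ and $\abs{\gO}=2$, compute the gradient flow (or gradient descent) trajectory of $\mW$ in closed form up to the relevant invariant subspace, and show that it converges in direction to a matrix that is \emph{not} proportional to $\mW^* = E(q)\tilde{E}^\top(q)$. First I would set up coordinates: since $\mU, \mV$ are as in Lemma~\ref{lemma:zeroloss} and $\sigma$ is linear, the logit of token $j$ is $\xi_j = \ve_j^\top \mU \sum_{h=1}^H (\vx_H^\top \mW \vx_h)\vx_h$. Writing $\vx_H = E(q) + \tilde E(z_{H-1})$ and $\vx_h = E(z_h) + \tilde E(z_{h-1})$, and using Assumption~\ref{assumption:OrthonormalEmbedding}, only the components of $\mW$ in the span of $\{E(i)\tilde E(j)^\top, E(i)E(j)^\top, \tilde E(i)\tilde E(j)^\top, \tilde E(i)E(j)^\top\}$ for the finitely many tokens appearing with nonzero probability can affect the loss; by a symmetry/initialization argument ($\mW_0 = \vect 0$ and the gradient always lies in a fixed low-dimensional subspace determined by which bigrams occur), the trajectory stays in that subspace. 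The key is to choose the sentence distribution so that in addition to the ``signal'' direction $E(q)\tilde E(q)^\top$, the gradient also persistently excites one or more ``spurious'' directions (e.g. $E(y_1)\tilde E(q)^\top$, $E(y_2)\tilde E(q)^\top$, or diagonal $\tilde E(q)\tilde E(q)^\top$ terms) with a \emph{fixed, nonzero ratio} relative to the signal.

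The main computation is then: (i) identify the finite set of coordinates $(w_1, w_2, \dots)$ of $\mW_t$ that are ever nonzero — I expect this to be three coordinates so that $\norm{\mW_t}^2 \propto w_1^2 + w_2^2 + w_3^2$ and the signal is just $w_1$, matching the target $2/\sqrt{12}$; (ii) show that along the gradient trajectory these coordinates grow with asymptotically equal rate, i.e. $w_i(t)/w_j(t) \to c_{ij}$ for explicit constants, because the cross-entropy gradient in the regime $\lambda \to \infty$ drives all ``active'' logit-relevant directions at comparable speed (the softmax gradient $\nabla_\xi(-\ln \softmax_y(\xi))$ has all its mass on the few competing tokens and decays at the same exponential rate in each); (iii) conclude $\lim_t \langle \mW_t/\norm{\mW_t}, \mW^*/\norm{\mW^*}\rangle = w_1/\sqrt{w_1^2 + w_2^2 + w_3^2} \to 2/\sqrt{12}$ after picking the distribution so that $w_2, w_3 \to 2 w_1$ asymptotically (or $w_1 = 2, w_2 = w_3 = 2$ giving $2/\sqrt{4+4+4}$). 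I would make the instance symmetric in $y_1, y_2$ (each appearing in the context with equal frequency) precisely so that the two output-token directions grow identically, which pins down the constant cleanly.

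The main obstacle I anticipate is controlling the \emph{transient} behavior of gradient descent well enough to prove the limit exists and equals exactly $2/\sqrt{12}$, rather than merely bounding it away from $1$. Unlike the normalized-GD analysis of Theorem~\ref{thm:noiselessLossConvergenceNGD}, here standard (un-normalized) GD is used, so the step sizes effectively shrink as the loss decreases and the coordinates grow only logarithmically-to-linearly; I would need a careful asymptotic analysis of the coupled scalar recursions $w_{i,t+1} = w_{i,t} - \eta \,\partial_{w_i} L(\vw_t)$, showing that the ratios $w_{i,t}/w_{1,t}$ converge. A clean way to do this is to observe that in the large-$t$ regime the loss is dominated by the worst (smallest-count) bigram configuration, and the gradient components become proportional to fixed positive constants times a common scalar factor, so the increments become asymptotically proportional; then a Stolz–Cesàro / telescoping argument gives convergence of the ratios. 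If controlling GD directly proves too delicate, the fallback is to run the argument for gradient flow (continuous time), where the ratio ODEs are exactly solvable, and then invoke a discretization/closeness lemma — but I would first attempt the discrete analysis since the recursion is one-dimensional per coordinate and the competing-token structure keeps it tractable. Auxiliary routine steps — verifying the subspace is invariant, that $\norm{\mW^*} = 1$ under Assumption~\ref{assumption:OrthonormalEmbedding}, and that no other coordinates get excited — I would relegate to the appendix.
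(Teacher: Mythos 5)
The paper's route is considerably simpler than what you anticipate, because it is engineered to make the gradient direction \emph{exact} at every step, not merely asymptotically convergent. The instance the paper constructs has one trigger $q$, two outputs $y_1,y_2$ (uniform), a uniformly random output position $\zeta$, and --- crucially --- a \emph{fixed} penultimate token $\square$ at position $H-1$, so that $\vx_H = E(q)+\tilde E(\square)$ is deterministic. With this choice, the gradient of the loss with respect to $\mW$ is at every step a scalar multiple of the single fixed matrix $\mA = (E(q)+\tilde E(\square))\bigl(E^\top(y_1)+E^\top(y_2)+2\tilde E^\top(q)\bigr)$: it always has $\vx_H$ as its left factor, and the right factor is a symmetric average over $y\in\{y_1,y_2\}$ of $E(y)+\tilde E(q)$ weighted by the (identical, by $y_1\leftrightarrow y_2$ symmetry) softmax residual. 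Since $\mW_0 = \vect{0}$ and each GD update adds a multiple of $\mA$, the whole trajectory $\mW_t = r_t\mA$ stays on one ray; the ``$\lim_{t\to\infty}$'' in the statement is constant in $t$, and the answer $2/\sqrt{12}$ is a one-line Frobenius-inner-product computation: $\inp{\mA}{\mW^*}=2$, $\norm{\mA}=\sqrt{12}$, $\norm{\mW^*}=1$.

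This is where your plan and the paper diverge. You anticipate tracking a handful of scalar coordinates of $\mW_t$, proving their ratios converge via Stolz--Ces\`aro or a gradient-flow fallback, and you (correctly) flag controlling the transient of \emph{unnormalized} GD well enough to get an exact limit as the main obstacle. In the paper's instance that obstacle does not exist --- there is no transient, and no asymptotic-ratio analysis to do. Your bookkeeping is also off: you posit three coordinates with weights summing to $4+4+4$, but $\mA$ decomposes into \emph{six} mutually orthogonal rank-one pieces with squared Frobenius weights $1,1,4,1,1,4$ (summing to $12$), of which the signal direction $E(q)\tilde E^\top(q)$ carries coefficient $2$. You do correctly identify the $y_1\leftrightarrow y_2$ symmetry as essential for pinning the constant, but the idea you are missing --- and the one that collapses the whole argument to algebra --- is that by also fixing the token at position $H-1$, the paper forces $\vx_H$ (and hence the left factor of the gradient) to be deterministic, so the gradient direction never moves. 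If you carry out your more general plan you will also need to handle the gradient contributions from non-output positions $h\neq\zeta+1$ (they enter through the denominator of the softmax and are not trivially zero), which your three-coordinate reduction does not account for; so as written the proposal has a genuine gap, whereas the paper's fixed-direction construction sidesteps the whole difficulty.
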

% \vspace{-8pt}
% \begin{remark}
%     % Note that 
%     % Theorem~\ref{thm:directionalconvergence} can be easily extended to a high-probability statement if $\mW_0$ is randomly initialized from a Gaussian distribution. 
%     This result shows that the space of optimal solutions for this in-context reasoning problem is much more complicated than that of the max-margin problems trained with logistic loss in~\citet{JiAndTelgarsky2021a, HuangNeurIPS2024NonAsymptoticConvergenceNTP}. 
%     % Moreover, directional convergence cannot fully characterize the set of optimal solutions.
% \end{remark}
%%%%%%%%%
\vspace{-8pt}
\section{Noiseless Learning with Softmax Attention}
\vspace{-8pt}
\label{sec:noiselessWithSoftmax}
Fix a sentence with trigger word $q$ and output word $y$.
Recall that in order to achieve a zero population logistic loss, it is necessary that the logits $\xi_y$ of the output word $y$ approaches infinity. 
The proof sketch of Lemma~\ref{lemma:zeroloss} the previous section showed that having an \emph{unbounded} attention scores $\sigma(\vx_H^\top \mW \vx_h)$ at $z_{h-1} = q$ naturally allows $\xi_y$ to approach infinity.
% This property mimics a human-like behavior that focuses only on the words that come after the trigger words.
While this unboundedness hold for linear and ReLU attention, it does not hold for softmax attention, where the attention scores are bounded in the range $[0,1]$.
The following lemma (proof in Appendix~\ref{sec:proofsNoiselessSoftmax}) shows a modified parameterization that can drive the logits $\xi_y$ to infinity under softmax attention.
\begin{lemma}
    Let $s \in \R$ and $\bm{\lambda} = \{\lambda_k \in \R_+: k \in \gQ\}$ be a set of $\abs{\gQ}$ of non-negative values. 
By setting $\mU = [E(1)~E(2)~\dots~E(N)]^\top, \mV = s\mI_d$ and $\mW = \sum_{k \in \gQ} \lambda_k E(k)\left(\tilde{E}(k)^\top - \sum_{x=1, x \neq k}^N\tilde{E}(x)^\top\right)$,
    for softmax attention, we obtain
    $
         \lim_{s \to \infty}\lim_{(\lambda_k)_{k \in \gQ} \to \infty} L\left(\mV, \mW\right) = 0.
    $
    \label{lemma:zerolossSoftmax}
\end{lemma} 
This new parameterization has two modifications compared to the one in Lemma~\ref{lemma:zeroloss}: a subtraction $- \sum_{x=1, x \neq k}^N\tilde{E}(x)^\top$ from $\mW$, and a new scaling factor $s \in \R$ in the value matrix $\mV$.
The first modification ensures that $\vx_H^\top \mW \vx_h$ tends to positive and negative infinity for positions $h$ where $z_{h-1} = q$ and $z_{h-1} \neq q$, respectively. 
Correspondingly, the attention scores $\sigma(\vx_H^\top \mW \vx_h)$ tends to $1$ and $0$ for these two cases.
The second modification effectively shifts the scaling in $\xi$ from $\mW$ to $\mV$, and implies that the desired optimality comes from $\lim_{s \to \infty} s \cdot 1 = \infty$ and $\lim_{s \to \infty} s \cdot 0 = 0$.

Note that in the statement of Lemma~\ref{lemma:zerolossSoftmax}, the order of the two limit operations are strict and not exchangable. 
Therefore, it is an important question that whether this optimality can actually be realized by running normalized gradient descent on $L(\mV, \mW)$.
The following result answers this question in the positive, showing that from a certain initialization, both $s$ and $(\lambda_k)_k$ approaches infinity. 
Moreover, the population loss $L(\mV, \mW)$ converges to $0$ at a linear rate.
\begin{theorem}
    Let $\eta > 0$ and $T_0 = \ceil{\frac{\abs{\gQ}\ln{H}}{2\eta}}$. 
    By running $t$ rounds normalized gradient descent with learning rate $\eta$ from initialization $\lambda_{q,0} = 0$ for all $q \in \gQ$ and $s_0 = \frac{\abs{\gQ}\ln{H} + 2}{2}$, we obtain $s_t \geq \frac{1}{2} + \eta (t - T_0)$ and $\lambda_{q,t} \geq \frac{\eta t}{\abs{\gQ}}$ for any $t > T_0$. Moreover, this implies $L(\mV, \mW) \leq O(N\exp(-t))$.
    \label{theorem:noiselessSoftmaxConvergenceRate}
\end{theorem}
    The proof of Theorem~\ref{theorem:noiselessSoftmaxConvergenceRate} divides the training process into two distinct phases: $t \leq T_0$ and $t > T_0$.
    % In the first phase, $s_t$ fluctuates while $\lambda_{q,t}$ increases steadily. 
    During this first phase, the signs of the derivatives $\partialderi{L}{s_t}$ may oscillate between $+1$ and $-1$ while $\lambda_{q,t}$ and the attention scores grow quickly. 
    In the second phase, $\lambda_{q,t}$ has become sufficiently large so that the attention scores become more stable, allowing $s_t$ to increase monotonically.
    Similar stage-wise convergence analysis of transformers with softmax attention has been observed before in other tasks such as regression~\citep{HuangICML2024InContextConvergenceOfTransformer} and binary classification~\citep{HuangICML2025HowTFlearnReguLangRecog}.
%%%%%%%%%
\vspace{-13pt}
\section{Noisy Learning}
\vspace{-8pt}
\label{sec:noisylearning}
% \thanh{TODO: read the task}
Recall that our noisy data model, where $\alpha > 0$, is a variant of the setting in~\citet{Chen2025DistributionalAssociationVsICR}.
% Similar to~\citet{Chen2025DistributionalAssociationVsICR}, we assume that there is only one trigger token $q$, i.e. $\gQ = \{q\}$.
As an upgrade from~\citet{Chen2025DistributionalAssociationVsICR}, we allow \emph{multiple} trigger words, i.e. $\abs{\gQ} > 1$. 
Moreover, we \emph{any} distributions of bigrams $(q,y)$ and $(q,\tau)$ in the sentence, and do not assume that the ratio of the frequencies of the bigrams $(q,\tau)$ and $(q,y)$ is $\frac{\alpha}{1-\alpha}$.
In other words, our results are based on the distribution of the label $z_{H+1}$ instead of the distribution of the bigrams $(q,\tau)$ and $(q,y)$.
We emphasize that a Bayes-optimal solution for our noisy data model will also be a Bayes-optimal solution for the model in~\citet{Chen2025DistributionalAssociationVsICR}.
% , conditioned on the set of sentences that contain at least one (trigger token, output token) bigram.
\vspace{-10pt}
\subsection{Approximation Capabilities of One-Layer Transformers On Noisy Setting}
\vspace{-5pt}
First, we show that by relying on just the distribution of $z_{H+1}$, the model defined in~\eqref{eq:fullmodel} is capable of making the population loss arbitrarily close to the loss of the Bayes optimal strategy.
% We start by characterizing the Bayes optimal strategy.
Fix a trigger word $q$ and an output word $y$. 
The label of a sentence that contains both $(q, \tau)$ and $(q, y)$ is either $\tau$ with probability $\alpha$ or $y$ with probability $1-\alpha$, independent of other tokens in the sentence. 
Thus, the Bayes optimal strategy is to predict $\tau$ and $y$ with the same probabilities.
Let $\hat{y}$ be the prediction of this strategy.
Its expected loss is equal to the entropy
$
    L_{\mathrm{Bayes}} = -\alpha \ln\alpha - (1-\alpha)\ln(1-\alpha).
$
Similar to the previous sections, we set and fix the unembedding layer $\mU = [E(1)~E(2)~\dots~E(N)~E(N+1)]^\top$.
The following two lemmas demonstrate that the Bayes optimality of specific parameterization for linear/ReLU and softmax attentions.
\begin{lemma}
Let $ \lambda, \gamma \in \R$. 
    By setting $\mV = \mI_d, \mW = \lambda \sum_{q \in \gQ} E(q)(\tilde{E}^\top(q) - E^\top(\tau))$ and  $\mF = E(\tau)(\sum_{q \in \gQ}\gamma E^\top(q) + \tilde{E}^\top(q))$ and using linear or ReLU attention, we obtain       
    \begin{small}
        \begin{align*}        
            \lim_{\lambda \to \infty, \gamma \to \ln\frac{\alpha}{1-\alpha}}L(\lambda, \gamma) \defeq \lim_{\lambda \to \infty, \gamma \to \ln\frac{\alpha}{1-\alpha}} \E_{y,z_{1:H+1}}\left[ -\ln\frac{\exp(\xi_{z_{H+1}})}{\sum_{j \in [N+1]} \exp(\xi_j)}\right] = L_{\mathrm{Bayes}}.
        \end{align*}
    \end{small}
    \label{lemma:BayesOptimalNoisyLearning}
\end{lemma}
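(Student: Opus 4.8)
\textbf{Proof plan for Lemma~\ref{lemma:BayesOptimalNoisyLearning}.}
The plan is to compute the logit vector $\xi = \xi_A + \xi_F$ explicitly under the stated parameterization, show that in the limit it assigns mass only to the token $y$ and the noise token $\tau$ with the correct log-odds $\gamma = \ln\frac{\alpha}{1-\alpha}$, and then verify that the resulting cross-entropy equals $L_{\mathrm{Bayes}}$. First I would analyze the attention scores exactly as in the proof sketch of Lemma~\ref{lemma:zeroloss}: using orthonormality (Assumption~\ref{assumption:OrthonormalEmbedding}) and $\mW = \lambda(E(q)\tilde E^\top(q) - E(q)E^\top(\tau))$, for a token $z_h$ with predecessor $z_{h-1}$ one gets $\vx_H^\top\mW\vx_h = \lambda\big(\I{z_{h-1}=q}\cdot\I{z_h \ne \tau\text{-row irrelevant}} - \ldots\big)$; more precisely $\vx_H^\top \mW \vx_h = \lambda(\I{z_{h-1}=q} - \I{z_h = \tau})$ since $z_H = q$ so $E(q)^\top \vx_H$ picks out nothing extra — I would check that the only tokens with nonzero score are those preceded by $q$ (score $+\lambda$) and the token $\tau$ itself (score $-\lambda$, which for ReLU attention is clipped to $0$). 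By condition (III) in Definition~\ref{def:datamodel}, the tokens preceded by $q$ are exactly the copies of $y$ (there is at least one, namely the $(q,y)$ bigram) and the copies of $\tau$ (each preceded by $q$). For linear attention the $\tau$-copies contribute $(\lambda - \lambda)=0$ net weight, and for ReLU attention they contribute $\max(0,\lambda)\cdot(\text{then subtract? no})$ — here I would be careful and note that for ReLU the score of a $\tau$-copy preceded by $q$ is $\max(0,\lambda - \lambda) = 0$ as well, so in both cases $\phi(\vx_H,\vx_{1:H}) = \lambda C_{q,y}\,\vx_{(y)} + (\text{zero from }\tau) = \lambda C_{q,y}(E(y) + \tilde E(q))$ up to the clipped terms.

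Next I would compute $\xi_A = \mU\phi = \lambda C_{q,y}(\ve_y + \vect{0})$ since $\mU$ has rows $E(1),\dots,E(N+1)$ and $\mU E(y) = \ve_y$, $\mU\tilde E(q) = \vect 0$; so $\xi_{A,j} = \lambda C_{q,y}\I{j=y}$. For the feed-forward term, $\xi_F = \mU\mF\big(\vx_H + \sum_h \sigma(\cdot)\vx_h\big)$ with $\mF = E(\tau)(\gamma E^\top(q) + \tilde E^\top(q))$. Applying $\mF$ to $\vx_H = E(q)+\tilde E(q-1)$ gives $E(\tau)\cdot\gamma$ (the $\tilde E^\top(q)$ part kills $\tilde E(q-1)$ unless $q-1=q$, impossible, and $E^\top(q)E(q)=1$), and applying $\mF$ to $\sum_h\sigma\vx_h = \lambda C_{q,y}(E(y)+\tilde E(q))$ gives $E(\tau)\cdot\lambda C_{q,y}$ (from the $\tilde E^\top(q)\tilde E(q)=1$ term). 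Hence $\mF(\cdots) = (\gamma + \lambda C_{q,y})E(\tau)$ and $\xi_F = (\gamma + \lambda C_{q,y})\ve_\tau$. Combining, the only nonzero logits are $\xi_y = \lambda C_{q,y}$ and $\xi_\tau = \lambda C_{q,y} + \gamma$, all others $0$. The softmax probability of token $w\in\{y,\tau\}$ is then
\begin{align*}
\frac{e^{\xi_w}}{\sum_{j\in[N+1]}e^{\xi_j}} = \frac{e^{\xi_w}}{e^{\lambda C_{q,y}} + e^{\lambda C_{q,y}+\gamma} + (N-1)} \xrightarrow{\lambda\to\infty} \frac{e^{\xi_w - \lambda C_{q,y}}}{1 + e^{\gamma}},
\end{align*}
which tends to $\frac{1}{1+e^\gamma}$ for $w=y$ and $\frac{e^\gamma}{1+e^\gamma}$ for $w=\tau$ (here I use $C_{q,y}\ge 1$ so $\lambda C_{q,y}\to\infty$, and the cross terms $e^{\xi_j - \lambda C_{q,y}}\to 0$ for $j\notin\{y,\tau\}$). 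Plugging $\gamma\to\ln\frac{\alpha}{1-\alpha}$ gives $\Pr(\hat y = \tau)\to\alpha$ and $\Pr(\hat y = y)\to 1-\alpha$, matching the Bayes-optimal strategy identified before the lemma.

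Finally I would take the expectation of the loss. Since $z_{H+1}=\tau$ with probability $\alpha$ and $z_{H+1}=y$ with probability $1-\alpha$ conditionally on $(q,y)$ (and this holds for every realization of $\vz$ with $C_{q,y}\ge 1$), by dominated convergence the limit of $\E_{y,\vz}[-\ln(\text{softmax}_{z_{H+1}})]$ is $\E_{y,\vz}\big[-\alpha\ln\alpha - (1-\alpha)\ln(1-\alpha)\big] = -\alpha\ln\alpha-(1-\alpha)\ln(1-\alpha) = L_{\mathrm{Bayes}}$, and the expectation over $y$ is vacuous since the expression does not depend on $y$. The main obstacle I anticipate is the careful bookkeeping of the ReLU case — specifically verifying that the negative scores on $\tau$-copies are clipped to zero without changing the net attention output, and confirming via condition (III) of Definition~\ref{def:datamodel} that no stray bigram $(q',\cdot)$ with $q'\ne q$ contaminates the attention sum — together with justifying the interchange of limit and expectation uniformly over the distribution of $\vz$, which is where the "agnostic to the bigram distribution" claim is actually used (one needs a dominating integrable bound on $-\ln(\text{softmax}_{z_{H+1}})$ that holds for all $C_{q,y}\ge1$ and all $\lambda$ large, e.g. bounding it by $\ln(N+1) + |\gamma|$).
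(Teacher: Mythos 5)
Your proof is correct and takes essentially the same route as the paper's: compute attention scores via orthonormality (obtaining $\vx_H^\top\mW\vx_h = \lambda(\I{z_{h-1}=q} - \I{z_h=\tau})$, which by conditions (II)–(III) of the data model reduces to $\lambda\I{z_{h-1}=q,\,z_h=y}$ so ReLU and linear coincide), derive $\xi_y = \lambda C_{q,y}$, $\xi_\tau = \lambda C_{q,y}+\gamma$, $\xi_j=0$ otherwise, take the $\lambda\to\infty$ limit of the softmax, and pass the limit under the expectation. Your explicit dominated-convergence bound is a slightly more careful justification of the limit–expectation interchange than the paper's appeal to bounded $C_{q,y}$; otherwise the arguments match (and your shorthand $\tilde E(q{-}1)$ for $\tilde E(z_{H-1})$, with $z_{H-1}\ne q$ implied by condition (III), agrees with the term the paper's Lemma B.1 drops silently).
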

\begin{lemma}
Let $s, \lambda, \gamma \in \R$. 
    By setting $\mV = s\mI_d, \mW = \lambda \sum_{q \in \gQ} E(q)(\tilde{E}^\top(q) - 2E^\top(\tau) - \sum_{x=1, x \neq q}^N \tilde{E}(x)^\top)$ and  $\mF = E(\tau)(\sum_{q \in \gQ}\gamma E^\top(q) + \tilde{E}^\top(q))$ and using softmax attention,         
    \begin{small}
        \begin{align*}        
            \lim_{s \to \infty, \lambda \to \infty, \gamma \to \ln\frac{\alpha}{1-\alpha}}L(s, \lambda, \gamma) \defeq \lim_{s \to \infty}\lim_{\lambda \to \infty, \gamma \to \ln\frac{\alpha}{1-\alpha}} \E_{y,z_{1:H+1}}\left[ -\ln\frac{\exp(\xi_{z_{H+1}})}{\sum_{j \in [N+1]} \exp(\xi_j)}\right] = L_{\mathrm{Bayes}}.
        \end{align*}
    \end{small}
    \label{lemma:BayesOptimalNoisyLearningSoftmax}
\end{lemma}

\begin{remark}
These results are distribution-agnostic in the sense that it holds for any word distribution as long as the conditions of the data model are satisfied.
Moreover, they hold even for $\alpha > 0.5$. 
This is a major advantage over the existing results in~\citet{Chen2025DistributionalAssociationVsICR}, which required $\alpha \leq 0.5$.
Setting $\alpha > 0.5$ also reflects a wider range of practical scenarios where the generic bigrams such as ``to the'' often appear more frequently than context-dependent bigrams such as ``to Bob''.
\end{remark}
\vspace{-8pt}
\subsection{Training Dynamics and On-Convergence Behavior: A Finite-Sample Analysis}
\vspace{-5pt}
For ease of exposition, we focus on linear and ReLU attentions.
The analysis for softmax attention follows a nearly identical proof with an additional beginning phase similar to the analysis in Section~\ref{sec:noiselessWithSoftmax}. 
Let $M$ denote the size of a dataset of $M$ i.i.d sentences $z_{H+1}$ generated from the data model in Definition~\ref{def:datamodel}.
Instead of minimizing the full population loss as in existing works~\citep{Bietti2023Birth,HuangICML2024InContextConvergenceOfTransformer, Chen2025DistributionalAssociationVsICR}, which would require either knowing $\alpha$ or taking $M \to \infty$, we aim to derive a finite-sample analysis that holds for finite $M$ and unknown $\alpha$.

Before presenting our training algorithm and its finite-sample analysis, we first discuss the easier case where $\alpha$ is known and explain why it is difficult to derive the convergence rate of the population loss in~\eqref{eq:lossNoisyLinearAtt}.
Observe that the function $f_C(\lambda, \gamma) = -C\lambda - \alpha \gamma + \ln(e^{C\lambda} + e^{C\lambda + \gamma} + N-1)$ is jointly convex and 1/2-smooth with respect to $\lambda$ and $\gamma$. 
Hence, at first glance, it seems that the convergence rate of $L(\lambda, \gamma) = \E_{C}[f_C(\lambda, \gamma)]$ follows from existing results in (stochastic) convex and smooth optimization~\citep{Nemirovski2009}.
However, as a convex function on unbounded domain with negative partial derivatives, no \emph{finite} minimizer exists for $L(\lambda, \gamma)$. 
This implies that minimizing $L(\lambda, \gamma)$ is a multi-dimensional convex optimization problem on astral space~\citep{ConvexAnalysisAtInfinityAstral}. To our knowledge, nothing is known about the convergence rate to the \emph{infimum} for this problem.

\textbf{Training Algorithm.} Our approach for solving this astral space issue is to estimate $\gamma$ directly from the dataset and then run normalized gradient descent on $\lambda$.
More specifically, recall that $M$ is the number of i.i.d. sentences $z^{(m)}_{1:H+1}$ in the training set. 
We use the superscript ${(m)}$ to denote quantities that belong to the $m$-th sentence, where $m \in [M]$.
Let $M_\tau = \sum_{m=1}^M \I{z_{H+1}^{(m)} = \tau}$ be the number of sentences where $z_{H+1}$ is $\tau$.
Let $\hat{\alpha} = \frac{M_\tau}{M}$ and $\hat{\gamma} = \ln\frac{\hat{\alpha}}{1-\hat{\alpha}}$
be the unbiased estimates for $\alpha$ and $\gamma = \frac{\ln{\alpha}}{1-\alpha}$, respectively.
We use $\hat{\gamma}$ in the parameterization of $\mF$, i.e. $\mF = E(\tau)(\sum_{q \in \gQ} \hat{\gamma}E^\top(q) + \tilde{E}^\top(q))$.

The empirical loss is
$
    L_{\mathrm{emp}}(\lambda) = \frac{1}{M} \sum_{m=1}^M -\ln \frac{\exp(\xi_{z_{H+1}}^{(m)})}{\sum_{j \in [N+1]} \exp(\xi_j^{(m)}) }.
$
Then, we run normalized gradient descent on $L_{\mathrm{emp}}(\lambda)$ with a constant learning rate $\eta > 0$. 
The formal procedure is given in Algorithm~\ref{algo:training} in Appendix~\ref{sec:NGDAnalysisNoisy}. 
The following theorem shows that the population loss converges at a linear rate to the Bayes risk, similar to the noiseless setting.
%%%
% \vspace{-5pt}
\begin{theorem}
   With probability at least $1-\delta$ over the training set of size $M$, after $t$ iterations of normalized gradient descent on $L_{\mathrm{emp}}(\lambda)$, Algorithm~\ref{algo:training} guarantees that
   \begin{small}
   \begin{align*}
        L(s=1, \lambda_t, \hat{\gamma}) \leq L_{\mathrm{Bayes}} + \frac{1}{\min(\alpha, 1-\alpha) -\sqrtfrac{\ln(2/\delta)}{2M}}\frac{\ln(2/\delta)}{2M} + (N-1)e^{-\eta t}.
   \end{align*}
   \end{small}
   \label{thm:FiniteSampleBayesOptimalNoisyLearning}
\end{theorem}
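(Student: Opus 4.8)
The plan is to decompose the gap $L(\lambda_t, \hat\gamma) - L_{\mathrm{Bayes}}$ into three pieces: (i) an optimization error coming from running normalized gradient descent on $L_{\mathrm{emp}}(\lambda)$ for finitely many steps, (ii) a statistical error from using the empirical estimate $\hat\gamma$ in place of the optimal offset $\gamma^\star = \ln\frac{\alpha}{1-\alpha}$, and (iii) the intrinsic entropy floor $L_{\mathrm{Bayes}}$, which is achieved only in the limit $\lambda\to\infty$. The term $(N-1)e^{-\eta t}$ will come from (i), and the $\frac{1}{\min(\alpha,1-\alpha) - \sqrt{\ln(2/\delta)/(2M)}}\cdot\frac{\ln(2/\delta)}{2M}$ term from (ii).

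First I would write out the per-sentence logits explicitly under the Lemma~\ref{lemma:BayesOptimalNoisyLearning} parameterization but with $\gamma$ replaced by $\hat\gamma$: as in the proof sketch of that lemma, the attention mechanism isolates the bigram counts, so $\xi^{(m)}_y = C^{(m)}_{q,y}\lambda$, $\xi^{(m)}_\tau = C^{(m)}_{q,y}\lambda + \hat\gamma$ (using condition (II) that $\tau$ is always preceded by $q$), and all other logits are $0$. Hence the empirical loss has the closed form $L_{\mathrm{emp}}(\lambda) = \frac{1}{M}\sum_m f_{C^{(m)}}(\lambda,\hat\gamma)$ where $f_C(\lambda,\gamma) = -\mathbb{1}\{z^{(m)}_{H+1}=\tau\}(C\lambda+\gamma) - \mathbb{1}\{z^{(m)}_{H+1}=y\}C\lambda + \ln(e^{C\lambda}+e^{C\lambda+\gamma}+N-1)$, and similarly $L(\lambda,\hat\gamma) = \E_C[f_C(\lambda,\hat\gamma)]$ where the expectation is over a fresh sentence. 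I would verify $L_{\mathrm{emp}}$ is convex, $1/2$-smooth in $\lambda$, and has strictly negative derivative bounded away from $0$ as $\lambda$ grows at a fixed rate, so that normalized GD from $\lambda_0=0$ gives $\lambda_t = \eta t / (\text{something})$ — exactly as in Theorem~\ref{thm:noiselessLossConvergenceNGD} — and the residual $L_{\mathrm{emp}}(\lambda_t) - \inf_\lambda L_{\mathrm{emp}}(\lambda)$ decays like $(N-1)e^{-\eta t}$ because the softmax denominator contribution $N-1$ gets exponentially dominated by $e^{C\lambda_t}$. This handles (i), using the same astral-limit argument as Lemma~\ref{lemma:BayesOptimalNoisyLearning}: $\inf_\lambda L(\lambda,\hat\gamma)$ equals the Bayes-type loss $-\hat\alpha_{\mathrm{pop}}\ln\hat\alpha_{\mathrm{pop}} - \ldots$ evaluated with the mismatched offset $\hat\gamma$, plus lower-order terms.

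Next I would control the statistical error. By Hoeffding's inequality applied to $M_\tau = \sum_m \mathbb{1}\{z^{(m)}_{H+1}=\tau\}$, with probability $\ge 1-\delta$ we have $|\hat\alpha - \alpha| \le \sqrt{\ln(2/\delta)/(2M)}$. On this event I would bound the difference between $\lim_{\lambda\to\infty} L(\lambda,\hat\gamma)$ and $L_{\mathrm{Bayes}} = \lim_{\lambda\to\infty}L(\lambda,\gamma^\star)$. Since in the $\lambda\to\infty$ limit the per-sentence contribution is $-\mathbb{1}\{z_{H+1}=\tau\}\ln\frac{e^{\hat\gamma}}{1+e^{\hat\gamma}} - \mathbb{1}\{z_{H+1}=y\}\ln\frac{1}{1+e^{\hat\gamma}}$, taking expectations gives $-\alpha\ln\hat\alpha - (1-\alpha)\ln(1-\hat\alpha)$. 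Subtracting $L_{\mathrm{Bayes}} = -\alpha\ln\alpha - (1-\alpha)\ln(1-\alpha)$ yields $\alpha\ln\frac{\alpha}{\hat\alpha} + (1-\alpha)\ln\frac{1-\alpha}{1-\hat\alpha} = \mathrm{KL}(\mathrm{Ber}(\alpha)\,\|\,\mathrm{Ber}(\hat\alpha))$. A second-order Taylor bound gives $\mathrm{KL}(\mathrm{Ber}(\alpha)\|\mathrm{Ber}(\hat\alpha)) \le \frac{(\alpha-\hat\alpha)^2}{2\min_{p \in [\alpha\wedge\hat\alpha,\,\alpha\vee\hat\alpha]} p(1-p)}$, and since $|\hat\alpha-\alpha|\le\sqrt{\ln(2/\delta)/(2M)}$ the denominator is at least $\min(\alpha,1-\alpha) - \sqrt{\ln(2/\delta)/(2M)}$ times a factor bounded below, which matches the claimed term after absorbing constants. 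Combining with (i), and noting $\lambda_t$ is large enough that $L(\lambda_t,\hat\gamma)$ is within $(N-1)e^{-\eta t}$ of its infimum, gives the stated bound.

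The main obstacle I anticipate is making the optimization argument fully rigorous \emph{in the presence of the offset $\hat\gamma$ that shifts with the data}: the function $f_C(\lambda,\hat\gamma)$ has an infimum-at-infinity structure (no finite minimizer in $\lambda$), so I cannot invoke standard convergence-to-minimizer results and must instead track the actual trajectory $\lambda_t$ and bound $L(\lambda_t,\hat\gamma) - \inf_\lambda L(\lambda,\hat\gamma)$ directly via the exponential-tail estimate on the softmax normalizer — essentially redoing the Theorem~\ref{thm:noiselessLossConvergenceNGD} analysis but now with a nonzero (random) $\gamma$ coupling $\xi_\tau$ and $\xi_y$, and verifying the normalized gradient magnitude (hence the step $\lambda_t$) is controlled uniformly over the high-probability event on $\hat\gamma$. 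A secondary subtlety is ensuring the Hoeffding event ($|\hat\alpha-\alpha|$ small) is compatible with $\hat\alpha \in (0,1)$ strictly, i.e. that $M_\tau \notin \{0,M\}$, which requires $M$ large enough relative to $\delta$ and $\min(\alpha,1-\alpha)$; this is implicitly needed for $\hat\gamma$ to be finite and is presumably a hypothesis folded into Algorithm~\ref{algo:training} or the theorem's regime, and I would state it explicitly.
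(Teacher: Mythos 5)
Your proposal follows essentially the same route as the paper: explicit closed-form logits under the reparameterization with $\hat\gamma$, the observation that the empirical loss has strictly negative derivative in $\lambda$ so normalized gradient descent gives $\lambda_t = \eta t$ exactly (sidestepping the infimum-at-infinity issue by tracking the trajectory directly), the identity $L(\lambda_t,\hat\gamma) \le L_{\mathrm{Bayes}} + \KL(\mathrm{Ber}(\alpha)\,\|\,\mathrm{Ber}(\hat\alpha)) + (N-1)e^{-\eta t}$, and Hoeffding to control $|\hat\alpha-\alpha|$. The only differences are cosmetic — the paper bounds the KL term via the reverse Pinsker inequality rather than a second-order Taylor expansion (your version would recover the stated constant only up to an absorbed factor), and your remark that one must ensure $M_\tau \notin \{0, M\}$ for $\hat\gamma$ to be finite is a valid point the paper leaves implicit.
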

\vspace{-12pt}
\textbf{OOD Generalization to unseen output words.}
% Theorem~\ref{thm:FiniteSampleBayesOptimalNoisyLearning} indicates that the population loss converges to the Bayes-risk as $M$ and $t$ increases.
Similar to Theorem~\ref{thm:generalizeUnseenYNoiseless} for noiseless learning, the following theorem shows that Algorithm~\ref{algo:training} produces a trained model that generalizes to an unseen output word $y_{\mathrm{test}} \notin \gO \cup \gQ$ in the noisy setting. 
The proof can be found in Appendix~\ref{sec:UnseenNoisy}.
\begin{theorem}
Fix any $y_{\mathrm{test}} \in [N] \setminus (\gO \cup \gQ)$.  Take any \textbf{test} sentence generated by the noisy data model, except that every bigram $(q,y)$ is replaced with $(q, y_{\mathrm{test}})$. Then, with probability at least $1-\delta$, after $t$ iterations, Algorithm~\ref{algo:training} returns a model that predicts $y_{\mathrm{test}}$ and $\tau$ with probabilities
\begin{small}
\begin{align*}
    \Pr[z_{H+1} = y_{\rm test}  \mid \lambda_t, \hat{\gamma}] &\eqdef \frac{\exp(\xi_{y_{\mathrm{test}}})}{\sum_{j \in [N+1]}\exp(\xi_j)} = 1 - \alpha + O\left(\sqrtfrac{\ln(1/\delta)}{M} + N^2 e^{-2\eta t}\right), \\
    \Pr[z_{H+1} = \tau \mid \lambda_t, \hat{\gamma}] &\eqdef \frac{\exp(\xi_{\tau})}{\sum_{j \in [N+1]}\exp(\xi_j)} = \alpha + O\left(\sqrtfrac{\ln(1/\delta)}{M} + N^2 e^{-2\eta t}\right).
\end{align*}    
\end{small}
\label{thm:generalizeUnseenYNoisy}
\end{theorem}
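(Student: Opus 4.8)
The plan is to track the logits of the trained model on the modified test sentence explicitly, reusing the structure established in Lemma~\ref{lemma:BayesOptimalNoisyLearning} and the convergence of $\lambda_t$ from Theorem~\ref{thm:FiniteSampleBayesOptimalNoisyLearning}. First I would compute the attention scores on the test sentence. Since every bigram $(b,y)$ has been replaced by $(q,y_{\mathrm{test}})$, the same argument used in the proof of Lemma~\ref{lemma:zeroloss} and Lemma~\ref{lemma:BayesOptimalNoisyLearning} shows that under the parameterization $\mW = \lambda_t(E(q)\tilde{E}^\top(q) - E(q)E^\top(\tau))$, the attention score of token $z_h$ is $\lambda_t$ precisely when $z_{h-1}=q$ and $z_h = y_{\mathrm{test}}$, is $-\lambda_t$ (or zeroed by ReLU, but we are in the linear-attention regime here) when $z_{h-1}=q$ and $z_h = \tau$, and is $0$ otherwise — the key point being that $y_{\mathrm{test}}$ enters the attention computation only through the orthonormal embedding, exactly as an output token $y\in\gO$ would, because the construction never references the identity of the output token, only the trigger $q$. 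Consequently the attention-branch logit $\xi_A$ places mass $\lambda_t C_{q,y_{\mathrm{test}}}$ on coordinate $y_{\mathrm{test}}$ and $-\lambda_t C_{q,\tau}$ on coordinate $\tau$, and the feed-forward branch $\xi_F = \mU\mF(\cdots)$ with $\mF = E(\tau)(\hat\gamma E^\top(q)+\tilde E^\top(q))$ contributes $\hat\gamma C_{q,\tau} + (\text{something})$ to coordinate $\tau$ and nothing to $y_{\mathrm{test}}$, just as in the training analysis. Collecting terms, the softmax over $[N+1]$ reduces to a three-way competition between $y_{\mathrm{test}}$, $\tau$, and the $N-1$ inert coordinates.

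Next I would substitute $\lambda_t = \eta t / |\gQ| = \eta t$ (since $\gQ=\{q\}$) and $\hat\gamma = \ln\frac{\hat\alpha}{1-\hat\alpha}$, so that after exponentiating, the unnormalized weights become (up to the common factor $e^{\lambda_t C_{q,y_{\mathrm{test}}}}$ when $C_{q,y_{\mathrm{test}}}=C_{q,\tau}$, or more carefully keeping both counts) proportional to $1$ on $y_{\mathrm{test}}$, to $\frac{\hat\alpha}{1-\hat\alpha}$ on $\tau$, and to $e^{-\eta t}$-order terms on the remaining coordinates. Thus $\Pr[z_{H+1}=y_{\mathrm{test}}] \to \frac{1}{1+\hat\alpha/(1-\hat\alpha)} = 1-\hat\alpha$ and $\Pr[z_{H+1}=\tau]\to\hat\alpha$ in the $t\to\infty$ limit, with the deviation controlled by $(N-1)e^{-\eta t}$ in the unnormalized weights — squaring appears when one bounds the ratio, giving the $N^2 e^{-2\eta t}$ term. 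Finally I would replace $\hat\alpha$ by $\alpha$ using the concentration bound $|\hat\alpha - \alpha|\le\sqrt{\ln(2/\delta)/(2M)}$ from Hoeffding (already invoked in Theorem~\ref{thm:FiniteSampleBayesOptimalNoisyLearning}), propagate it through the smooth map $\hat\alpha\mapsto 1-\hat\alpha$, which is $1$-Lipschitz, to obtain the $O(\sqrt{\ln(1/\delta)/M})$ error, and combine via the triangle inequality.

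The main obstacle I anticipate is bookkeeping the case where the test sentence's counts $C_{q,y_{\mathrm{test}}}$ and $C_{q,\tau}$ differ: then the exponents $\lambda_t C_{q,y_{\mathrm{test}}}$ and $\lambda_t C_{q,\tau} + \hat\gamma C_{q,\tau}$ do not share a clean common factor, and one must argue that as $\lambda_t\to\infty$ the ratio $e^{\lambda_t(C_{q,\tau}-C_{q,y_{\mathrm{test}}})}$ behaves correctly. This is exactly where condition (I) of the data model — that $(q,y_{\mathrm{test}})$ (the replaced bigram) appears at least once — together with the structure forcing $\tau$ to be preceded only by $q$, must be used to pin down the relative magnitudes, or alternatively one observes that the Bayes-optimality argument in Lemma~\ref{lemma:BayesOptimalNoisyLearning} already handled this and the generalization claim inherits it verbatim since $y_{\mathrm{test}}$ is treated identically to any $y\in\gO$. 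A secondary subtlety is making the $O(\cdot)$ constants uniform over the (arbitrary, distribution-agnostic) choice of test sentence; this follows because the bounds depend on the sentence only through $C_{q,y_{\mathrm{test}}}\ge 1$ and the monotonicity of the relevant exponential expressions in these counts.
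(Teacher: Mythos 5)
Your proposal follows essentially the same route as the paper: reuse the logit formula from Lemma~\ref{lemma:BayesOptimalNoisyLearning} with $y_{\mathrm{test}}$ in place of $y$ (valid because $y_{\mathrm{test}}$ enters only through the orthonormal embedding), substitute $\lambda_t=\eta t$, expand the resulting three-way softmax via $\frac{1}{1+x}=1-x+O(x^2)$, and convert $\hat\alpha$ to $\alpha$ by Hoeffding. The conclusion and the error terms match the paper's.

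One bookkeeping slip is worth correcting, because it is what manufactures the ``main obstacle'' you flag at the end. Under $\mW=\lambda_t\bigl(E(q)\tilde E^\top(q)-E(q)E^\top(\tau)\bigr)$, a position with $z_h=\tau$ and $z_{h-1}=q$ receives attention score $\lambda_t(\I{z_{h-1}=q}-1)=0$, not $-\lambda_t$: the two rank-one terms are designed to cancel exactly on noise tokens (and the data model guarantees $\tau$ is always preceded by $q$). Hence $\xi_{A,\tau}=0$ and the count $C_{q,\tau}$ never appears in any logit. Likewise the feed-forward contribution to coordinate $\tau$ is $\hat\gamma+\lambda_t C_{q,y_{\mathrm{test}}}$ (the $\hat\gamma$ comes once, from $E^\top(q)$ acting on $\vx_H$), not $\hat\gamma C_{q,\tau}$. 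So $\xi_{y_{\mathrm{test}}}=\lambda_t C_{q,y_{\mathrm{test}}}$ and $\xi_\tau=\lambda_t C_{q,y_{\mathrm{test}}}+\hat\gamma$ share the common factor identically, the ratio $e^{\lambda_t(C_{q,\tau}-C_{q,y_{\mathrm{test}}})}$ never arises, and no limiting argument about mismatched counts is needed; your fallback remark that the claim inherits Equation~\eqref{eq:xijLinearAttNoisy} verbatim is exactly the paper's proof.
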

\vspace{-10pt}
\textbf{Feed-forward layer learns to predict noise while attention layer learns to predict output tokens.} An important empirical on-convergence behavior observed in~\citet{Chen2025DistributionalAssociationVsICR} is that after training, given a sentence which contains both bigrams $(q,y)$ and $(q,\tau)$, the feed-forward layer tends to predict the noise token $\tau$ while the attention layer tends to predict the output token $y$.
% \thanh{it is true for one specific parameterization in Lemma 3.1. That does not seem to suffice to say that attention is better than FF for next-token prediction as there might be another parameterization where FF is better than attention?
% Quan Answer: that's true. Our goal is showing an optimal transformer (i.e. approaching Bayes optimal) that exhibits the empirical properties observed in~\citet{Chen2025DistributionalAssociationVsICR}.
% }. 
% This property is central in showing that the FF layer learns the distributional association of the bigram trigger-noise tokens. 
This property can be expressed formally in terms the final logits as
    \begin{align}
    \xi_{A,y} > \max_{j \neq y} \xi_{A,j} \text{ and } \max_{j \neq \tau} \xi_{F,j} < \xi_{F, \tau}.
    \label{eq:flipCondition}
\end{align}
%%%%%%%%
The following theorem shows that~\eqref{eq:flipCondition} always hold after a sufficiently large number of iterations. 
% Algorithm~\ref{algo:training} returns a model that satisfies condition~\eqref{eq:flipCondition} for any input sentence generated from our data model.
%
\begin{theorem} 
    Algorithm~\ref{algo:training} guarantees that with probability $1-\delta$, the condition~\eqref{eq:flipCondition} holds after any $t \geq \max(1, \frac{1}{\eta}\abs{\ln(1-\alpha + \sqrtfrac{\ln(2/\delta)}{2M}) - \ln(\alpha - \sqrtfrac{\ln(2/\delta)}{2M})}$ training iterations.  
    \label{thm:flipConditionGuarantee}
\end{theorem}
% \begin{proof}(Sketch) Highlight that we need $\lambda$ to be large.
    
% \end{proof}
\vspace{-10pt}
\section{Empirical Validation}
\label{sec:experiments}
\begin{table}[h]
\centering
\resizebox{0.8\columnwidth}{!}{%
\begin{tabular}{l|cc|cc}
\toprule
\multirow{2}{*}{Models}  
& \multicolumn{2}{c|}{Noiseless} 
& \multicolumn{2}{c}{Noisy} \\
& $0$-Loss? & Unseen $y$? & Bayes-Loss? & Unseen $y$? \\
\midrule
\texttt{Origin-Linear} & --- & --- & --- & --- \\
\texttt{Origin-ReLU} & \checkmark & --- & --- & --- \\
\texttt{Origin-Softmax} & \checkmark & --- & --- & --- \\
\midrule
\texttt{Reparam-Linear-W} & \checkmark & \checkmark & --- & --- \\
\cellcolor{lightgray}
\texttt{Reparam-ReLU-W} & \checkmark & \checkmark & \checkmark & \checkmark \\
\cellcolor{lightgray}
\texttt{Reparam-Softmax-W} & \checkmark & \checkmark & \checkmark & \checkmark \\
\midrule
\cellcolor{lightgray}
\texttt{Reparam-Softmax} & \checkmark & \checkmark & \checkmark & \checkmark \\
% \texttt{Reparam-Linear-F} & --- & \checkmark & \checkmark & --- & \checkmark \\
\cellcolor{lightgray}
\texttt{Reparam-Linear} & \checkmark & \checkmark & \checkmark & \checkmark \\
% \texttt{Reparam-ReLU-F} & --- & \checkmark & \checkmark & --- & \checkmark \\
\cellcolor{lightgray}
\texttt{Reparam-ReLU} & \checkmark & \checkmark & \checkmark & \checkmark \\
\bottomrule
\end{tabular}
}
\caption{Performance comparison of original (no reparameterization) and parameterized models trained on the population loss.~\texttt{0-Loss} and~\texttt{Bayes-loss} indicate whether a model's Bayes-optimal in noiseless and noisy settings, respectively.~\texttt{Unseen} $y$ indicate whether a model generalizes to unseen output words. The best models are highlighted.}
\label{tab:modelSummary}
\end{table}

\begin{figure}[t]
    \centering
    % First row of figures
    \begin{subfigure}[b]{0.3\textwidth}
        \centering
        \includegraphics[width=\linewidth]{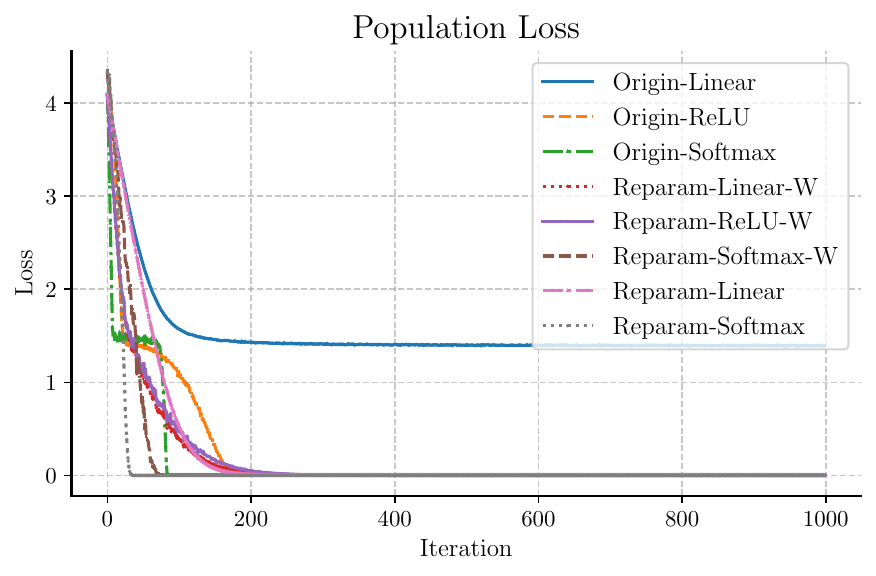}
        \caption{}
        \label{fig:1a}
    \end{subfigure}
    \hfill % Maximize horizontal spacing
    \begin{subfigure}[b]{0.3\textwidth}
        \centering
        \includegraphics[width=\linewidth]{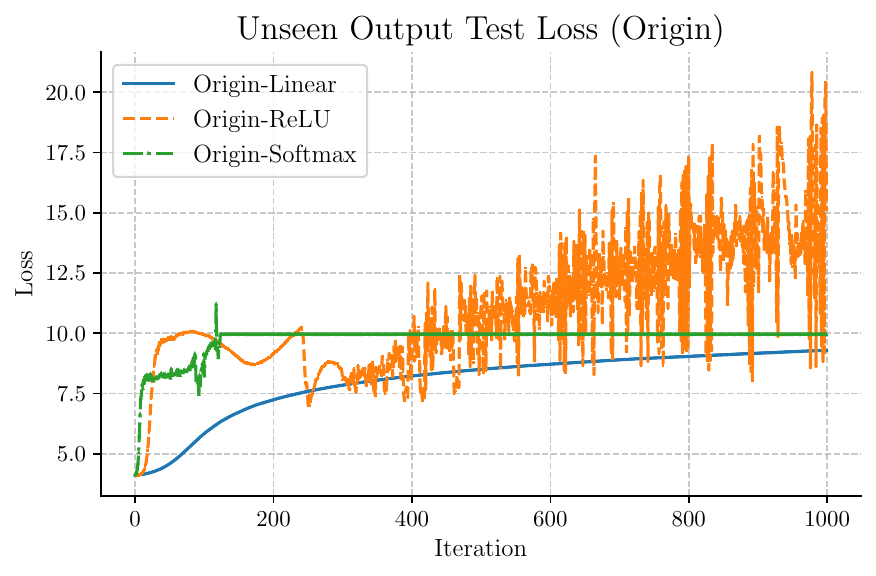}
        \caption{}
        \label{fig:1b}
    \end{subfigure}
    \hfill
    \begin{subfigure}[b]{0.3\textwidth}
        \centering
        \includegraphics[width=\linewidth]{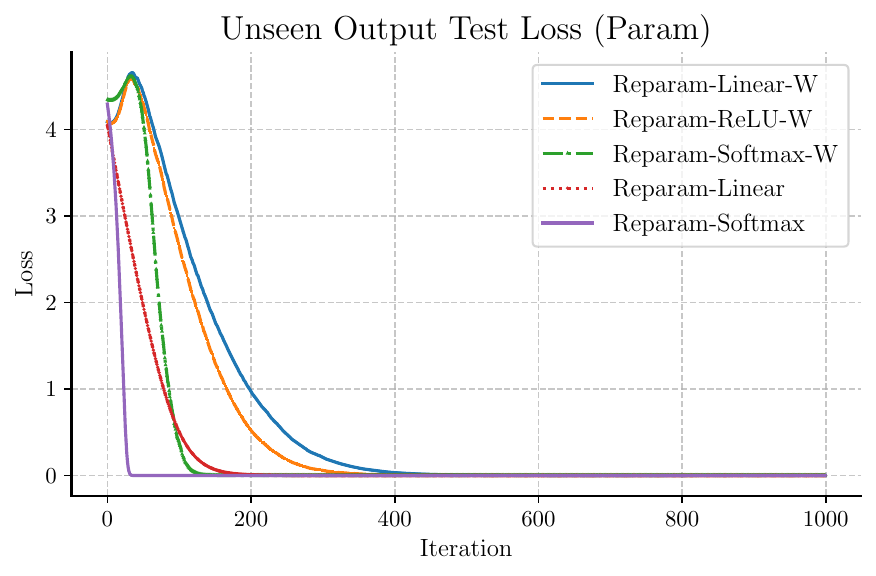}
        \caption{}
        \label{fig:1c}
    \end{subfigure}
    
    % \vspace{0.5cm} % Vertical spacing between rows
    
    % Second row of figures
    \begin{subfigure}[b]{0.3\textwidth}
        \centering
        \includegraphics[width=\linewidth]{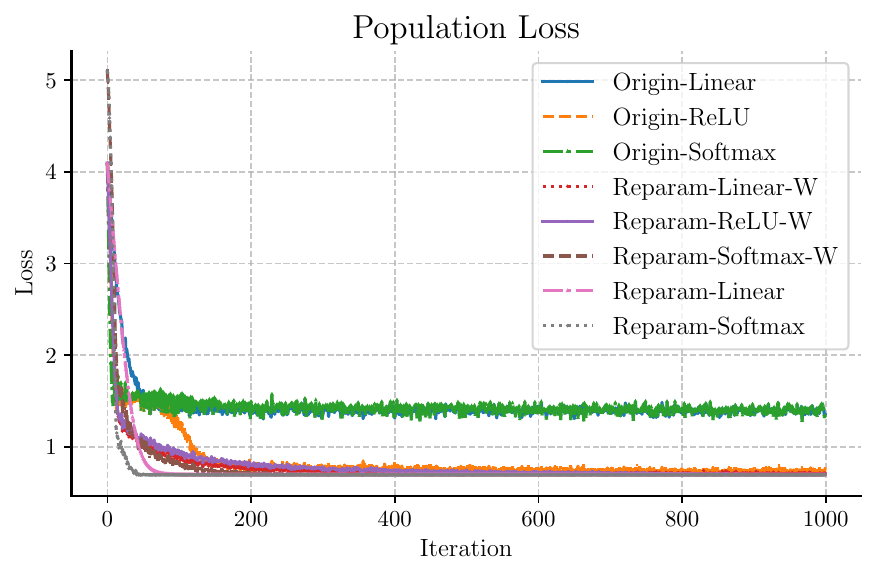}
        \caption{}
        \label{fig:1d}
    \end{subfigure}
    \hfill
    \begin{subfigure}[b]{0.3\textwidth}
        \centering
        \includegraphics[width=\linewidth]{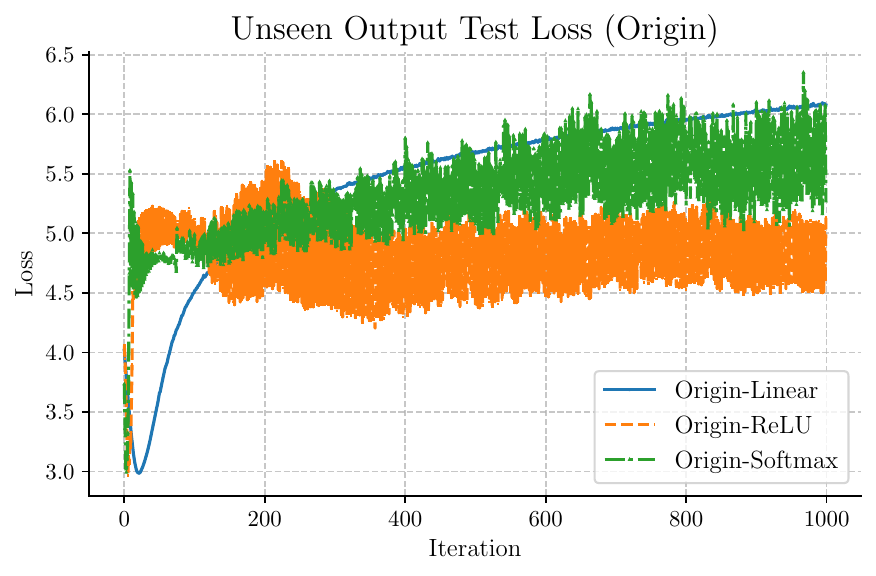}
        \caption{}
        \label{fig:1e}
    \end{subfigure}
    \hfill
    \begin{subfigure}[b]{0.3\textwidth}
        \centering
        \includegraphics[width=\linewidth]{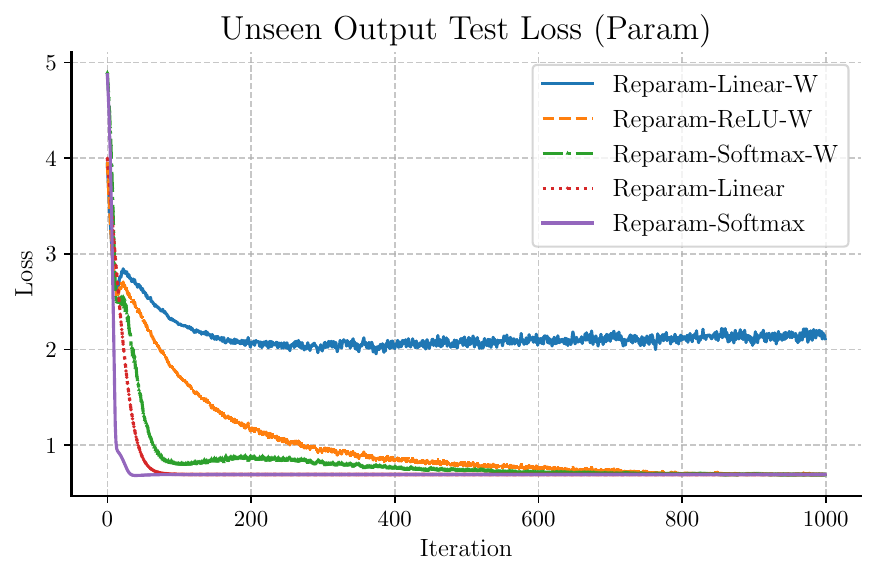}
        \caption{}
        \label{fig:1f}
    \end{subfigure}
    
    \caption{Population and OOD test loss of models trained on the population loss in noiseless and noisy settings. First row, left to right: the population loss, OOD test losses of \texttt{Origin} (no parameterization) models and OOD test losses of re-parameterized models in noiseless learning. Second row, left to right: the corresponding losses as in the first row in the noisy setting with $\alpha = 0.5$.}
    \label{fig:populationLossResult}
    \vspace{-10pt}
\end{figure}

% \begin{figure}[t]
%     \centering
%     % First row of figures
%     \begin{subfigure}[b]{0.22\textwidth}
%         \centering
%         \includegraphics[width=\linewidth]{}
%         \caption{}
%         \label{fig:2a}
%     \end{subfigure}
%     \hfill % Maximize horizontal spacing
%     \begin{subfigure}[b]{0.22\textwidth}
%         \centering
%         \includegraphics[width=\linewidth]{}
%        \caption{}
%         \label{fig:2b}
%     \end{subfigure}
%     \hfill
%     \begin{subfigure}[b]{0.22\textwidth}
%         \centering
%         \includegraphics[width=\linewidth]{}
%         \caption{}
%         \label{fig:2c}
%     \end{subfigure}
%     \hfill
%     \begin{subfigure}[b]{0.22\textwidth}
%         \centering
%         \includegraphics[width=\linewidth]{}
%         \caption{}
%         \label{fig:2d}
%     \end{subfigure}
%     \caption{Three models' population loss and unseen output words test loss when being trained for $100$ epochs on a dataset of $M = 2048$ i.i.d sentences, drawn from the noisy data model with $\alpha \in \{0.2, 0.8\}$.}
%     \label{fig:finiteSampleResults}
%     \vspace{-15pt}
% \end{figure}
\vspace{-8pt}
To understand the impacts on empirical performances of the reparameterization presented in Sections~\ref{sec:noiselesslearning} and~\ref{sec:noisylearning}, we evaluate the one-layer transformers~\eqref{eq:fullmodel} with different choices of parameterization and attention activation functions on the following data model. 
% \footnote{The code for the experiments is available at \url{https://github.com/ngmq/onelayer-transformer-ICR-DA-NTP}}

\textbf{Data Model's Parameters.}
We set the vocabulary size $N = 60$, the embedding dimension $d = 128$ and the context length $H = 256$. 
We use $Q = 5$ trigger words and $O = 4$ output words.
The orthogonal embeddings are the standard basis vectors in $\R^d$.
% Note that the unembedding matrix $\mU$ is fixed to be the transpose of the embedding matrix in all of our experiments.
For the noisy setting, we choose $\alpha \in \{0.2, 0.5, 0.8\}$ to cover all three cases of $\alpha < 0.5, \alpha = 0.5$ and $\alpha > 0.5$.
% Our setup generally matches existing works~\cite{Bietti2023Birth, Chen2025DistributionalAssociationVsICR}. 
Our sentences are generated by picking uniformly at random a position for the bigram $(q,y)$ and a position for the bigram $(q,\tau)$ (in the noisy setting). All other words in a sentence are chosen uniformly at random from the set $[N] \setminus (\gQ \cup \gO)$.
For training on the population loss, we run normalized gradient descent with batch size $512$ over $T = 2000$ steps.
For the finite-sample analysis, we train the models for $100$ epochs on a fixed training set of $M = 2048$ samples.
% The learning rates are in the range $[0.1, 0.8]$. 
Further details and results are in Appendix~\ref{sec:expDetails}.

\textbf{Baselines.} We compare $9$ different models which differ in one or more following aspects: model type (\texttt{Reparam} versus \texttt{Origin}), attention type (linear versus ReLU versus Softmax), and whether the joint query-key matrix $\mW$ is trained in full without reparameterization.
More specifically, the term~\texttt{Origin} refers to a model where the three trainable matrices $\mV, \mW$ and $\mF$ are trained without reparameterization, while~\texttt{Param} indicates that they are re-parameterized as in Lemmas~\ref{lemma:zeroloss},~\ref{lemma:zerolossSoftmax} and~\ref{lemma:BayesOptimalNoisyLearning}.
% We use~\texttt{FA} to denote that the feed-forward layer takes the attention-weighted sum $\sum_{h=1}^H \sigma(x_H^\top \mW \vx_h)\vx_h$ as an input, i.e., $\xi_F = \mU \mF (\vx_H + \sum_{h=1}^H \sigma(x_H^\top \mW \vx_h)\vx_h )$, and~\texttt{F} to denote the opposite.
Note that~\texttt{Origin-Softmax} corresponds to the one-layer transformer in~\citet{Chen2025DistributionalAssociationVsICR}.
Finally, the model whose name contains both~\texttt{Reparam} and~\texttt{-W} has $\mV$ and $\mF$ re-parameterized but $\mW$ is trained in full without any reparameterization.
\vspace{-8pt}
\subsection{Loss Convergence in Noiseless and Noisy Learning}
\vspace{-6pt}
We train $9$ models, shown in Table~\ref{tab:modelSummary}, to minimize the population loss of noiseless and noisy settings.
In the noiseless setting, only \texttt{Origin-Linear} fails to achieve a zero loss, indicating the limited approximation power of linear attention. 
In noisy setting, $4$ out of $9$ models fail to achieve the Bayes risk. 
These four models consists of the three \texttt{Origin} models and \texttt{Reparam-Linear-W}.
This shows that despite the high expressive power of one-layer transformers, gradient descent alone may not be able to find the \emph{in-distribution} optimal solution.  
In contrast, all fully-parameterized one-layer models, including the one with linear attention, converge to Bayes-optimal solutions.
This emphasizes the crucial role of structural parameterization in guiding gradient descent training towards better solutions, especially on models with low expressive power (i.e. linear attentions).
In addition, Figures~\ref{fig:1a} and~\ref{fig:1d} show that for the models that converge to Bayes risk, their convergence rate is indeed linear. This empirically supports our theoretical claims in Theorems~\ref{thm:noiselessLossConvergenceNGD},~\ref{theorem:noiselessSoftmaxConvergenceRate} and~\ref{thm:FiniteSampleBayesOptimalNoisyLearning}.
\vspace{-9pt}
\subsection{OOD Generalization on Unseen Output Words}
\vspace{-5pt}
For each model, we examine whether their performance on seen output words in $\gO$ is similar to that on unseen output words not in $\gO$.
Table~\ref{tab:modelSummary} shows that that the ability to generalize to unseen output words consistently increase with more parameterization and a model's expressiveness.
On the other hand, for models with limited expressive power (i.e. one-layer linear transformers), parameterization plays a more important role.
In particular, when all three matrices are trained without reparameterization, they collectively fail to generalize to unseen output words regardless of the type of attention.
% In particular, the one-layer transformer loses generalization ability even if $\mV$ and $\mF$ are reparameterized and only $\mW$ is trained from scratch.
Figures~\ref{fig:1b} and~\ref{fig:1e} indicate that the test loss on unseen output words even \emph{diverges} for original, non-reparameterized models. 
% In other words, these models fail to learn the more general rule of ``predicting the token that comes after the trigger token'' that is inherently implied in the data model.
These results strongly suggest that (I) generalization to unseen output words is an important performance criteria for in-context reasoning learning, and (II) while one-layer transformers have the \emph{representational capacity} to adapt to unseen output words, the solutions found by gradient descent are not naturally biased towards this adaptivity. 
On the other hand, Figures~\ref{fig:1c} and~\ref{fig:1f} shows that combining partial parameterization (on $\mV$ and $\mF$) and non-linear attention functions (e.g. ReLU or softmax) already leads to models that are simultaneously Bayes-optimal and generalize out-of-distribution.
Moreover, similar to Bayes-optimality, OOD generalization can also be obtained with linear attention by careful parameterization.
% Thus, further theoretical and empirical studies on this setting are of significant importance to understand emergent abilities in transformers.
% \vspace{-9pt}
% \subsection{Comparisons of Linear, ReLU and Softmax Attention Functions}
% \vspace{-5pt}
% We study the effectiveness of training the models on a small, finite dataset. Figure~\ref{fig:finiteSampleResults} report the results for three models~\texttt{Reparam-ReLU, Reparam-ReLU-W} and~\texttt{Origin-ReLU} on $\alpha \in \{0.2, 0.8\}$.
These empirical results hold for both $\alpha \leq 0.5$ and $\alpha > 0.5$, which further confirms the generality of our Theorem~\ref{thm:generalizeUnseenYNoisy}.

% However, Figures~\ref{fig:2b} and~\ref{fig:2d} shows a increasing trend in generalization to unseen output words as the models are more restricted to appropriate reparameterization. That is, when all three matrices are trained without reparameterization, they collectively fail to generalize to unseen output words. 
% If only $\mV$ and $\mF$ are reparameterized then the model displays a better, but non-perfect degree of generalization.
% The fully-reparameterized model achieves perfect generalization, which empirically confirms our generalization result in Theorem~\ref{thm:generalizeUnseenYNoisy}. 

% \subsection{Softmax versus Linear/ReLU for attention}

\vspace{-10pt}
\section{Conclusion}
\vspace{-8pt}
We studied the approximation capabilities of transformer for an one-step in-context recall task.
Via a novel reparameterization regime, we rigorously proved that one-layer transformers are capable of achieving Bayes-optimal performance when being trained either directly on the population loss or on a finite dataset. Moreover, the same reparameterization allows one-layer transformers to generalize to sentences that are never seen during training.
At the same time, our empirical results also show that without appropriate reparameterization, running gradient descent alone is unlikely to achieve non-trivial out-of-distribution generalization ability. Future works include an in-depth study on the theoretical guarantees and empirical performance of non-parameterized transformers that can simultaneously achieve Bayes-optimality and out-of-distribution generalization.

\bibliographystyle{unsrtnat}
\bibliography{icrconvergence.bib}

\begin{thebibliography}{33}
\providecommand{\natexlab}[1]{#1}
\providecommand{\url}[1]{\texttt{#1}}
\expandafter\ifx\csname urlstyle\endcsname\relax
  \providecommand{\doi}[1]{doi: #1}\else
  \providecommand{\doi}{doi: \begingroup \urlstyle{rm}\Url}\fi

\bibitem[Brown et~al.(2020)Brown, Mann, Ryder, Subbiah, Kaplan, Dhariwal, Neelakantan, Shyam, Sastry, Askell, et~al.]{brown2020language}
Tom Brown, Benjamin Mann, Nick Ryder, Melanie Subbiah, Jared~D Kaplan, Prafulla Dhariwal, Arvind Neelakantan, Pranav Shyam, Girish Sastry, Amanda Askell, et~al.
\newblock Language models are few-shot learners.
\newblock \emph{Advances in neural information processing systems}, 33:\penalty0 1877--1901, 2020.

\bibitem[Achiam et~al.(2023)Achiam, Adler, Agarwal, Ahmad, Akkaya, Aleman, Almeida, Altenschmidt, Altman, Anadkat, et~al.]{achiam2023gpt}
Josh Achiam, Steven Adler, Sandhini Agarwal, Lama Ahmad, Ilge Akkaya, Florencia~Leoni Aleman, Diogo Almeida, Janko Altenschmidt, Sam Altman, Shyamal Anadkat, et~al.
\newblock Gpt-4 technical report.
\newblock \emph{arXiv preprint arXiv:2303.08774}, 2023.

\bibitem[Vaswani et~al.(2017)Vaswani, Shazeer, Parmar, Uszkoreit, Jones, Gomez, Kaiser, and Polosukhin]{vaswani2017attention}
Ashish Vaswani, Noam Shazeer, Niki Parmar, Jakob Uszkoreit, Llion Jones, Aidan~N Gomez, {\L}ukasz Kaiser, and Illia Polosukhin.
\newblock Attention is all you need.
\newblock \emph{Advances in neural information processing systems}, 30, 2017.

\bibitem[Bommasani et~al.(2021)Bommasani, Hudson, Adeli, Altman, Arora, von Arx, Bernstein, Bohg, Bosselut, Brunskill, et~al.]{bommasani2021opportunities}
Rishi Bommasani, Drew~A Hudson, Ehsan Adeli, Russ Altman, Simran Arora, Sydney von Arx, Michael~S Bernstein, Jeannette Bohg, Antoine Bosselut, Emma Brunskill, et~al.
\newblock On the opportunities and risks of foundation models.
\newblock \emph{arXiv preprint arXiv:2108.07258}, 2021.

\bibitem[Belkin(2024)]{belkin2024necessity}
Mikhail Belkin.
\newblock The necessity of machine learning theory in mitigating ai risk.
\newblock \emph{ACM/IMS Journal of Data Science}, 1\penalty0 (3):\penalty0 1--6, 2024.

\bibitem[{Allen-Zhu}(2024)]{AllenZhu-icml2024-tutorial}
Zeyuan {Allen-Zhu}.
\newblock {ICML 2024 Tutorial: Physics of Language Models}, July 2024.
\newblock Project page: \url{https://physics.allen-zhu.com/}.

\bibitem[Wang et~al.(2022)Wang, Variengien, Conmy, Shlegeris, and Steinhardt]{wang2022interpretability}
Kevin Wang, Alexandre Variengien, Arthur Conmy, Buck Shlegeris, and Jacob Steinhardt.
\newblock Interpretability in the wild: a circuit for indirect object identification in gpt-2 small.
\newblock \emph{arXiv preprint arXiv:2211.00593}, 2022.

\bibitem[Geva et~al.(2020)Geva, Schuster, Berant, and Levy]{geva2020transformer}
Mor Geva, Roei Schuster, Jonathan Berant, and Omer Levy.
\newblock Transformer feed-forward layers are key-value memories.
\newblock \emph{arXiv preprint arXiv:2012.14913}, 2020.

\bibitem[Meng et~al.(2022)Meng, Bau, Andonian, and Belinkov]{meng2022locating}
Kevin Meng, David Bau, Alex Andonian, and Yonatan Belinkov.
\newblock Locating and editing factual associations in gpt.
\newblock \emph{Advances in neural information processing systems}, 35:\penalty0 17359--17372, 2022.

\bibitem[Bietti et~al.(2023)Bietti, Cabannes, Bouchacourt, Jegou, and Bottou]{Bietti2023Birth}
Alberto Bietti, Vivien Cabannes, Diane Bouchacourt, Herve Jegou, and Leon Bottou.
\newblock Birth of a transformer: A memory viewpoint.
\newblock In \emph{Thirty-seventh Conference on Neural Information Processing Systems}, 2023.
\newblock URL \url{https://openreview.net/forum?id=3X2EbBLNsk}.

\bibitem[Nichani et~al.(2024)Nichani, Lee, and Bietti]{nichani2024understanding}
Eshaan Nichani, Jason~D Lee, and Alberto Bietti.
\newblock Understanding factual recall in transformers via associative memories.
\newblock \emph{arXiv preprint arXiv:2412.06538}, 2024.

\bibitem[Chen et~al.(2025)Chen, Bruna, and Bietti]{Chen2025DistributionalAssociationVsICR}
Lei Chen, Joan Bruna, and Alberto Bietti.
\newblock Distributional associations vs in-context reasoning: A study of feed-forward and attention layers.
\newblock In \emph{The Thirteenth International Conference on Learning Representations}, 2025.
\newblock URL \url{https://openreview.net/forum?id=WCVMqRHWW5}.

\bibitem[Ahn et~al.(2024)Ahn, Cheng, Song, Yun, Jadbabaie, and Sra]{Ahn2024linear}
Kwangjun Ahn, Xiang Cheng, Minhak Song, Chulhee Yun, Ali Jadbabaie, and Suvrit Sra.
\newblock Linear attention is (maybe) all you need (to understand transformer optimization).
\newblock In \emph{The Twelfth International Conference on Learning Representations}, 2024.
\newblock URL \url{https://openreview.net/forum?id=0uI5415ry7}.

\bibitem[Mahankali et~al.(2023)Mahankali, Hashimoto, and Ma]{mahankali2023one}
Arvind Mahankali, Tatsunori~B Hashimoto, and Tengyu Ma.
\newblock One step of gradient descent is provably the optimal in-context learner with one layer of linear self-attention.
\newblock \emph{arXiv preprint arXiv:2307.03576}, 2023.

\bibitem[Zhang et~al.(2024)Zhang, Frei, and Bartlett]{zhang2024trained}
Ruiqi Zhang, Spencer Frei, and Peter~L Bartlett.
\newblock Trained transformers learn linear models in-context.
\newblock \emph{Journal of Machine Learning Research}, 25\penalty0 (49):\penalty0 1--55, 2024.

\bibitem[Huang et~al.(2024{\natexlab{a}})Huang, Cheng, and Liang]{HuangICML2024InContextConvergenceOfTransformer}
Yu~Huang, Yuan Cheng, and Yingbin Liang.
\newblock In-context convergence of transformers.
\newblock In \emph{Proceedings of the 41st International Conference on Machine Learning}, ICML'24. JMLR.org, 2024{\natexlab{a}}.

\bibitem[Cui et~al.(2024)Cui, Ren, He, Tang, and Xing]{cui2024superiority}
Yingqian Cui, Jie Ren, Pengfei He, Jiliang Tang, and Yue Xing.
\newblock Superiority of multi-head attention in in-context linear regression.
\newblock \emph{arXiv preprint arXiv:2401.17426}, 2024.

\bibitem[Cheng et~al.(2023)Cheng, Chen, and Sra]{cheng2023transformers}
Xiang Cheng, Yuxin Chen, and Suvrit Sra.
\newblock Transformers implement functional gradient descent to learn non-linear functions in context.
\newblock \emph{arXiv preprint arXiv:2312.06528}, 2023.

\bibitem[Siyu et~al.(2024)Siyu, Heejune, Tianhao, and Zhuoran]{SiyuChenCOLT24a}
Chen Siyu, Sheen Heejune, Wang Tianhao, and Yang Zhuoran.
\newblock Training dynamics of multi-head softmax attention for in-context learning: Emergence, convergence, and optimality (extended abstract).
\newblock In Shipra Agrawal and Aaron Roth, editors, \emph{Proceedings of Thirty Seventh Conference on Learning Theory}, volume 247 of \emph{Proceedings of Machine Learning Research}, pages 4573--4573. PMLR, 30 Jun--03 Jul 2024.
\newblock URL \url{https://proceedings.mlr.press/v247/siyu24a.html}.

\bibitem[Tarzanagh et~al.(2023)Tarzanagh, Li, Thrampoulidis, and Oymak]{tarzanagh2023transformers}
Davoud~Ataee Tarzanagh, Yingcong Li, Christos Thrampoulidis, and Samet Oymak.
\newblock Transformers as support vector machines.
\newblock \emph{arXiv preprint arXiv:2308.16898}, 2023.

\bibitem[Ataee~Tarzanagh et~al.(2023)Ataee~Tarzanagh, Li, Zhang, and Oymak]{ataee2023max}
Davoud Ataee~Tarzanagh, Yingcong Li, Xuechen Zhang, and Samet Oymak.
\newblock Max-margin token selection in attention mechanism.
\newblock \emph{Advances in neural information processing systems}, 36:\penalty0 48314--48362, 2023.

\bibitem[Vasudeva et~al.(2024)Vasudeva, Deora, and Thrampoulidis]{vasudeva2024implicit}
Bhavya Vasudeva, Puneesh Deora, and Christos Thrampoulidis.
\newblock Implicit bias and fast convergence rates for self-attention.
\newblock \emph{arXiv preprint arXiv:2402.05738}, 2024.

\bibitem[Deora et~al.(2023)Deora, Ghaderi, Taheri, and Thrampoulidis]{deora2023optimization}
Puneesh Deora, Rouzbeh Ghaderi, Hossein Taheri, and Christos Thrampoulidis.
\newblock On the optimization and generalization of multi-head attention.
\newblock \emph{arXiv preprint arXiv:2310.12680}, 2023.

\bibitem[Tian et~al.(2023{\natexlab{a}})Tian, Wang, Chen, and Du]{tian2023scan}
Yuandong Tian, Yiping Wang, Beidi Chen, and Simon~S Du.
\newblock Scan and snap: Understanding training dynamics and token composition in 1-layer transformer.
\newblock \emph{Advances in neural information processing systems}, 36:\penalty0 71911--71947, 2023{\natexlab{a}}.

\bibitem[Tian et~al.(2023{\natexlab{b}})Tian, Wang, Zhang, Chen, and Du]{tian2023joma}
Yuandong Tian, Yiping Wang, Zhenyu Zhang, Beidi Chen, and Simon Du.
\newblock Joma: Demystifying multilayer transformers via joint dynamics of mlp and attention.
\newblock \emph{arXiv preprint arXiv:2310.00535}, 2023{\natexlab{b}}.

\bibitem[Li et~al.(2024)Li, Huang, Ildiz, Rawat, and Oymak]{li2024mechanics}
Yingcong Li, Yixiao Huang, Muhammed~E Ildiz, Ankit~Singh Rawat, and Samet Oymak.
\newblock Mechanics of next token prediction with self-attention.
\newblock In \emph{International Conference on Artificial Intelligence and Statistics}, pages 685--693. PMLR, 2024.

\bibitem[Thrampoulidis(2024)]{thrampoulidis2024implicit}
Christos Thrampoulidis.
\newblock Implicit optimization bias of next-token prediction in linear models.
\newblock \emph{arXiv preprint arXiv:2402.18551}, 2024.

\bibitem[Huang et~al.(2024{\natexlab{b}})Huang, Liang, and Yang]{HuangNeurIPS2024NonAsymptoticConvergenceNTP}
Ruiquan Huang, Yingbin Liang, and Jing Yang.
\newblock Non-asymptotic convergence of training transformers for next-token prediction.
\newblock In \emph{The Thirty-eighth Annual Conference on Neural Information Processing Systems}, 2024{\natexlab{b}}.
\newblock URL \url{https://openreview.net/forum?id=NfOFbPpYII}.

\bibitem[Ji and Telgarsky(2021)]{JiAndTelgarsky2021a}
Ziwei Ji and Matus Telgarsky.
\newblock Characterizing the implicit bias via a primal-dual analysis.
\newblock In Vitaly Feldman, Katrina Ligett, and Sivan Sabato, editors, \emph{Proceedings of the 32nd International Conference on Algorithmic Learning Theory}, volume 132 of \emph{Proceedings of Machine Learning Research}, pages 772--804. PMLR, 16--19 Mar 2021.
\newblock URL \url{https://proceedings.mlr.press/v132/ji21a.html}.

\bibitem[Huang et~al.(2025)Huang, Liang, and Yang]{HuangICML2025HowTFlearnReguLangRecog}
Ruiquan Huang, Yingbin Liang, and Jing Yang.
\newblock How transformers learn regular language recognition: A theoretical study on training dynamics and implicit bias.
\newblock In \emph{Forty-second International Conference on Machine Learning}, 2025.
\newblock URL \url{https://openreview.net/forum?id=yTAR011mOF}.

\bibitem[Nemirovski et~al.(2009)Nemirovski, Juditsky, Lan, and Shapiro]{Nemirovski2009}
A.~Nemirovski, A.~Juditsky, G.~Lan, and A.~Shapiro.
\newblock Robust stochastic approximation approach to stochastic programming.
\newblock \emph{SIAM Journal on Optimization}, 19\penalty0 (4):\penalty0 1574--1609, 2009.
\newblock \doi{10.1137/070704277}.
\newblock URL \url{https://doi.org/10.1137/070704277}.

\bibitem[Dudík et~al.(2022)Dudík, Ji, Schapire, and Telgarsky]{ConvexAnalysisAtInfinityAstral}
Miroslav Dudík, Ziwei Ji, Robert Schapire, and Matus Telgarsky.
\newblock Convex analysis at infinity: An introduction to astral space.
\newblock 05 2022.
\newblock \doi{10.48550/arXiv.2205.03260}.

\bibitem[Sason(2015)]{sason2015reverse}
Igal Sason.
\newblock On reverse {P}insker inequalities.
\newblock \emph{arXiv preprint arXiv:1503.07118}, 2015.

\end{thebibliography}

% \newpage
%%%%%%%%%%%%%%%%%%%%%%%%%%%%%%%%%%%%%%%%%%%%%%%%%%%%%%%%%%%%

%%%%%%%%%%%%%%%%%%%%%%%%%%%%%%%%%%%%%%%%%%%%%%%%%%%%%%%%%%%%
\newpage
\appendix

\section{Additional example sentences for the in-context recall task}
\label{sec:exampleSentences}
The in-context recall task considered in our paper encompasses a large range of practical linguistic scenarios.
In this section, we provide additional example sentences in two domains: object identification and transitive inference.
In all of these examples, each sentence contains at least one bigram $(q,y)$ before the last query word.

\subsection{Object Identification}
The task is to identify the right in-context object. Examples include:

\textcolor{red}{Input:} “To Harry” were the first two words in a letter that Ron and Hermione wrote to [?] \\
\textcolor{blue}{Output:} Harry.

\textcolor{red}{Input:} People living the province of Quebec are proud of the natural beauty of the [?] \\
\textcolor{blue}{Output:} province.

\textcolor{red}{Input:} You should travel on Sunday instead of on Monday, since there is a lot of traffic on  [?] \\
\textcolor{blue}{Output:} Monday.

\subsection{Transitive Inference}
The task is to identify the right object that has a specific relationship with other objects in the sentence. Examples include:

\textcolor{red}{Input:} If London is on the same continent as Paris, and Paris is on the same continent as Milan, then London is on the same continent as [?] \\
\textcolor{blue}{Output:} Milan.

\textcolor{red}{Input:} If the table has the same color as the book, and the book has a different color than the chair, then the table has a different color than the [?] \\
\textcolor{blue}{Output:} chair.

\textcolor{red}{Input:} If the GDP of Germany is larger than the combined GDP of Singapore and Spain, then it is certain that the GDP of Spain is smaller than the GDP of  [?] \\
\textcolor{blue}{Output:} Germany.

\section{Missing Proofs in Section~\ref{sec:noiselesslearning}}
\label{sec:missingproofsNoiselessLearning}

\subsection{Proof of Lemma~\ref{lemma:zeroloss}}
\label{sec:proofOfLemmaZeroLoss}
First, we prove the following lemma on the attention scores. We write $\{a=b=c\}$ for the event that $a, b,$ and $c$ are equal, i.e. $\{a = b \cap b = c\}$.
\begin{lemma}    
    Under the reparameterization in Lemma~\ref{lemma:zeroloss},
    for all $h \in [H]$ a sentence with trigger word $q$, we obtain $\vx_H^\top \mW \vx_h  = \lambda_q \I{z_{h-1} = z_{H} = q}$.
    % \begin{align}
    %     \vx_H^\top \mW \vx_h  = \lambda_q \I{z_{h-1} = q}.    
    % \end{align}    
    \label{lemma:noiselessAttScores}
\end{lemma}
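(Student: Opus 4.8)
\textbf{Proof plan for Lemma~\ref{lemma:noiselessAttScores}.}

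The plan is to compute $\vx_H^\top \mW \vx_h$ directly by expanding everything in terms of the orthonormal embedding vectors and then invoking Assumption~\ref{assumption:OrthonormalEmbedding} to kill all the cross terms. First I would write $\vx_H = E(z_H) + \tilde E(z_{H-1}) = E(q) + \tilde E(z_{H-1})$, using the fact that $z_H = q$ for every sentence in the data model, and $\vx_h = E(z_h) + \tilde E(z_{h-1})$. With the reparameterization $\mW = \sum_{k \in \gQ} \lambda_k E(k)\tilde E^\top(k)$, the quadratic form becomes a double sum
\begin{small}
\begin{align*}
\vx_H^\top \mW \vx_h = \sum_{k \in \gQ} \lambda_k \left( E(q) + \tilde E(z_{H-1}) \right)^\top E(k) \, \tilde E^\top(k) \left( E(z_h) + \tilde E(z_{h-1}) \right).
\end{align*}
\end{small}

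Next I would apply Assumption~\ref{assumption:OrthonormalEmbedding} term by term. In the left factor, $\tilde E(z_{H-1})^\top E(k) = 0$ always, while $E(q)^\top E(k) = \I{q = k}$; since $q \in \gQ$ the only surviving index in the outer sum is $k = q$, contributing $\lambda_q$. In the right factor, $\tilde E^\top(q) E(z_h) = 0$ always, while $\tilde E^\top(q)\tilde E(z_{h-1}) = \I{z_{h-1} = q}$. Multiplying the two surviving pieces gives $\vx_H^\top \mW \vx_h = \lambda_q \I{z_{h-1} = q}$, which is exactly the claim. One small point worth stating explicitly is that $q \in \gQ$, so $q$ is indeed one of the indices appearing in the sum defining $\mW$; this is guaranteed by the data model, where the trigger word is drawn from $\gQ$.

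There is no real obstacle here — the argument is a one-line orthogonality computation once the inputs are expanded. The only thing requiring minor care is bookkeeping: making sure the reader sees why the outer sum over $k \in \gQ$ collapses (because $E(q)$ is orthogonal to $E(k)$ for $k \neq q$ and orthogonal to every $\tilde E$), and why within the surviving term the $E(z_h)$ contribution vanishes (orthogonality of $E$ and $\tilde E$) while the $\tilde E(z_{h-1})$ contribution is the indicator. I would present it as a short displayed chain of equalities with a parenthetical note citing Assumption~\ref{assumption:OrthonormalEmbedding} at the step where the cross terms drop, and conclude the lemma immediately.
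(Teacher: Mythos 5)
Your proof is correct and is essentially the same orthogonality computation the paper performs: both exploit the rank-one structure of each $\mW_{|k} = \lambda_k E(k)\tilde E^\top(k)$ and Assumption~\ref{assumption:OrthonormalEmbedding} to reduce the quadratic form to a product of indicators. The only cosmetic difference is order of operations — the paper first applies $\mW_{|k}$ to $\vx_h$, obtaining $\lambda_k E(k)\I{z_{h-1}=k}$, and then dots with $\vx_H$, whereas you first contract $\vx_H^\top \mW$ to collapse the sum over $k\in\gQ$ to $k=q$ (directly using $z_H=q$) and then act on $\vx_h$; your ordering slightly streamlines the paper's closing appeal to the sentence having a unique trigger word.
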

\begin{proof}
    Let $\mW_{|k} = \lambda_k E(k)\tilde{E}^\top(k)$. 
    We have
    \begin{align*}
        \mW_{|k} \vx_h &= \lambda_k E(k)\tilde{E}^\top(k) (E(z_h) + \tilde{E}(z_{h-1})) = \lambda_k E(k)\tilde{E}^\top(k)\tilde{E}(z_{h-1}) = \lambda_k E(k)\I{z_{h-1} = k}.
    \end{align*}
    Hence, $\vx_H^\top \mW_{|k} \vx_h = \lambda_k \I{z_{h-1} = k}(E(z_H) + \tilde{E}(z_{H-1}))^\top E(k) = \lambda_k \I{z_{h-1} = z_{H} = k}$. By construction, the sentence has only one trigger word $q$. We conclude that 
    \begin{align*}
        \vx_H^\top W \vx_h &= \sum_{k \in \gQ} \vx_H^\top W_{|k} \vx_h = \lambda_q \I{z_{h-1} = z_H = q}.
    \end{align*}
\end{proof}
Lemma~\ref{lemma:noiselessAttScores} indicates that the attention scores are always non-negative. As a result, for both linear and ReLU attention, we have $\sigma(\vx_H^\top W \vx_h) = \vx_H^\top W \vx_h$. 
Hence, it suffices to prove Lemma~\ref{lemma:zeroloss} and our subsequent results for linear attention.
More generally, our proof can be extended to any activation function where $\sigma(x) = cx$ for $c > 0, x \geq 0$.
\begin{proof}(Of Lemma~\ref{lemma:zeroloss})
    Fix a trigger token $q \in \gQ$ and an output token $y \in \gO$. 
    Consider sentences that contain $q$ and $y$ as their trigger and output, respectively.
    By Lemma~\ref{lemma:noiselessAttScores}, for the linear attention model, we have
    \begin{align*}
        \xi_{A,j} = \ve_j^\top \mU \mV \sum_{h=1}^H (\vx_H^\top \mW \vx_h)\vx_h = \ve_j^\top \sum_{h=1}^H \lambda_q \I{z_{h-1} = q} \mU\vx_h = \ve_j^\top \sum_{h=1}^H \lambda_q \I{z_{h-1} = q} \ve_{z_h}.
    \end{align*}
    Recall that $C_{q,y} = \sum_{h=1}^H \I{z_{h-1} = q, z_h = y} \geq 1$.
    We have $\I{z_{h-1} = q}\ve_{z_h} = \I{z_{h-1}=q}\ve_{q}$ because no tokens other than $y$ follows $q$ in each sentence by construction. Combining this with $\ve_j^\top \ve_{q} = \I{j = q}$, we obtain 
    \begin{align}
        \xi_{A,j} = \lambda_q C_{q,y} \I{j = y}.
        \label{eq:xiAjlinearAttNoiseless}
    \end{align}  
    Since $\xi_{F,j} = 0$ for $\mF = \vect{0}$, we have $\xi_j = \xi_{A,j} + \xi_{F,j} = \xi_{A,j}$.
    This implies that the probability of predicting $y$ is $\lim_{\lambda_q \to \infty} \frac{\exp(\xi_y)}{\sum_{j \in [N]}\exp(\xi_j)} = \lim_{\lambda_q \to \infty} \frac{\exp(C_{q,y}\lambda_q)}{\exp(C_{q,y}\lambda_q) + N-1} = 1$. 
\end{proof}
    
\subsection{Proof of Theorem~\ref{thm:noiselessLossConvergenceNGD}}
\label{sec:proofOfTheoremNoiselessLossConvergenceNGD}
\begin{proof}
With linear attention, the population loss is defined as
\begin{nalign}
    L(\bm{\lambda}) &= \E_{q,y,z}\left[ -\ln\frac{\exp(C_{q,y}\lambda_q)}{\exp(C_{q,y}\lambda_q) +N-1} \right] \\
    &= \frac{1}{\abs{\gQ}}\sum_{q \in \gQ} \E_{y,z}\left[-\ln\frac{\exp(C_{q,y}\lambda_q)}{\exp(C_{q,y}\lambda_q) + N-1}\right] \\
    &= \frac{1}{\abs{\gQ}}\sum_{q \in \gQ} \E_{y,z}[\ln(\exp(C_{q,y}\lambda_q) + N-1) - C_{q,y}\lambda_q].
    \label{eq:SurrogateLossNoiseless}
\end{nalign}
For each $q \in \gQ$, the partial derivative of $L$ with respect to $\lambda_q$ is 
\begin{align}
    \frac{\partial L}{\partial \lambda_q} = \E_{y,z}\left[C_{q,y}\left(\frac{\exp(C_{q,y}\lambda_q)}{\exp(C_{q,y}\lambda_q) + N-1} - 1\right)\right].    
    \label{eq:partialderivativeSurrogateLossNoiseless}
\end{align}
%%%%%%%
It follows that the normalized gradient descent update is
\begin{align}
    \bm{\lambda}_{t+1} = \bm{\lambda}_t - \eta \frac{\nabla_{\bm{\lambda}} L}{\norm{\nabla_{\bm{\lambda}} L}_2}
\end{align}
where $t = 0, 1, 2 \dots$ denote the number of iterations, $\eta$ is a constant learning rate and $\nabla_{\bm{\lambda}} L = [\frac{\partial L}{\partial \lambda_1}~\dots~\frac{\partial L}{\partial \lambda_{\abs{\gQ}}}]^\top$. We intialize $\bm{\lambda}_0 = \vect{0}$.

From Equation~\eqref{eq:partialderivativeSurrogateLossNoiseless}, we obtain that the partial derivatives are always negative and thus all $(\lambda_q)_q$  increases monotonically from $0$. Next, we will show that $\lambda_{q,t} = \Omega(t)$ for all $q \in \gQ, t \geq 0$. Initially, at $t=0$ we have $\lambda_{1, t} = \lambda_{2,t} = \dots = \lambda_{\abs{\gQ},t}$. Assume that this property holds for some $t \geq 0$, for any $1 < k \leq \abs{\gQ}$, we have 
\begin{align*}
    \frac{\partial L}{\partial \lambda_{1,t}} &= \E_{y,z}\left[C_{q_1,y}\frac{\exp(C_{q_1,y}\lambda_{1,t})}{\exp(C_{q_1,y}\lambda_{1,t}) + N-1} - 1\right] \\
    &= \E_{y,z}\left[C_{q_1,y}\frac{\exp(C_{q_1,y}\lambda_{k,t})}{\exp(C_{q_1,y}\lambda_{k,t}) + N-1} - 1\right] \\
    &= \E_{y,z}\left[C_{q_k,y}\frac{\exp(C_{q_k,y}\lambda_{k,t})}{\exp(C_{q_k,y}\lambda_{k,t}) + N-1} - 1\right] \\
    &= \frac{\partial L}{\partial \lambda_{k,t}},
\end{align*}
where the third equality is from the symmetry in the distribution of the triggers.
This implies that $\lambda_{1, t+1} = \lambda_{2,t+1} = \dots = \lambda_{\abs{\gQ},t+1}$. As a result, for each $q$,
\begin{align*}
    \frac{1}{\norm{\nabla_{\bm{\lambda}} L}_2} \frac{\partial L}{\partial \lambda_{q,t}} = \frac{1}{\sqrt{\abs{\gQ} ( \frac{\partial L}{\partial \lambda_{q,t}})^2}}\frac{\partial L}{\partial \lambda_{q,t}} = \frac{-1}{\sqrt{\abs{\gQ}}}.
\end{align*}
Therefore, $\lambda_{q,t} = \sum_{s=0}^{t-1} \frac{\eta}{\sqrt{\abs{\gQ}}} = \frac{\eta t}{\sqrt{\abs{\gQ}}} = \Omega(\eta t)$. Plugging this into~\eqref{eq:SurrogateLossNoiseless}, we obtain
\begin{align*}
   L(\bm{\lambda}_t) = O\left(\ln(1 + \frac{N-1}{\exp(t)})\right) = O(N\exp(-\eta t)).
\end{align*}
\end{proof} 

\subsection{Proof of Theorem~\ref{thm:generalizeUnseenYNoiseless}}
\begin{proof} 
Since Lemma~\ref{lemma:noiselessAttScores} holds for any set of output tokens, replacing $y$ by $y_{\mathrm{test}}$ everywhere does not affect the attention scores $\vx_H \mW \vx_h = \lambda_q \I{z_{h-1} = q}$.
This implies that by Equation~\ref{eq:xiAjlinearAttNoiseless}, we have $\xi_{j} = \xi_{A,j} = \lambda_q C_{q,j}\I{j = y_{\mathrm{test}}}$.
Therefore,
\begin{align*}
    \Pr[z_{H+1} = y_{\mathrm{test}} \mid \bm{\lambda}_t] &\eqdef \frac{\exp(C_{q,y_{\mathrm{test}}}\lambda_{q,t})}{\exp(C_{q,y_{\mathrm{test}}}\lambda_{q,t}) + N - 1} = \frac{\exp(C_{q,y_{\mathrm{test}}}\lambda_{q,t})} {\exp(C_{q,y_{\mathrm{test}}}\lambda_{q,t}) + N - 1} \\
    &\geq \frac{\exp(\lambda_{q,t})}{\exp(\lambda_{q,t}) + N - 1} = \frac{\exp(\eta t)}{\exp(\eta t) + N - 1},
\end{align*}
where the inequality is from $C_{q,y_{\mathrm{test}}} \geq 1$ and the function $f(C) = \frac{\exp(Cx)}{\exp(Cx) + N-1}$ is increasing in $C$ for $x, N > 0$. The last equality is due to $\lambda_{q,t} = \eta t$.
\end{proof}

\subsection{Directional convergence of running gradient descent on the joint query-key matrix}
\label{sec:proofOfTheoremDirectionalConvergence}
First, we introduce a variant of the data model in Definition~\ref{def:datamodel}.
The set of trigger words contain only one element e.g. $\gQ = \{q\}$.
The set of output words contain two elements $\gO = \{y_1, y_2\}$.
In addition to the set of trigger tokens $\gQ$ and the set of output tokens $\gO$, we define a non-empty set of
\emph{neutral} tokens $\gN$ so that $\gN \cap (\gQ \cup \gO) = \emptyset$. Fix an element $\square \in \gN$.
The data model is as below:
\begin{itemize}
    \item Sample an output word $y \sim \mathrm{Unif}(\gO)$.
    \item Sample a position $\zeta \sim \mathrm{Unif}([H-3])$ and set $z_{\zeta} = q, z_{\zeta + 1} = y$.
    \item Sample $z_h \sim \mathrm{Unif}(\gN)$ for $h \in [H-2] \setminus \{\zeta, \zeta+1\}$.
    \item Set $z_{H-1} = \square$ and $z_H = q$.
\end{itemize}

We remark that this variant is a special case of the data model in Section~\ref{sec:noiselesslearning}, where each sentence has exactly one $(q,y)$ bigram and contain no output tokens other than $y$ (given that $y$ are the sampled output words).

\begin{proof}(Of Theorem~\ref{thm:directionalconvergence})
Let $t \geq 0$ be the index of an iteration where $W_t$ satisfies $\vx_H^\top \mW_t \vx_h = 0$ whenever $z_{h-1} \neq q$.
Obviously, this trivially holds at $t = 0$.
We will show that this property hold for $\mW_{t+1}$, and thus it holds throughout the gradient descent optimization process.

Keeping the reparameterization of $\mU = [E(1)~E(2)~\dots~E(N)]^\top, \mV = \mI_d$ and using $\ve^\top_j \mU \vx_h = \I{z_h = j}$, we write the population loss $L_t \defeq L(\mW_t)$ as
\begin{align*}
    L_t &= \E_{y, z}\left[ -\ln \frac{ \exp(\ve^\top_y \mU \mV \sum_{h=1}^H (\vx_H^\top \mW_t \vx_h) \vx_h ) }{ \sum_{j \in [N]} \exp(\ve^\top_j \mU \mV \sum_{h=1}^H (\vx_H^\top \mW_t \vx_h )\vx_h) } \right] \\
    &= \E_{y, z} \left[ -\ve^\top_y \mU \sum_{h=1}^H (\vx_H^\top \mW_t \vx_h) \vx_h + \ln\left(\sum_{j \in [N]}\exp(\ve^\top_j \mU \sum_{h=1}^H (\vx_H^\top \mW_t \vx_h )\vx_h) \right) \right] \\
    &= \E_{y, z} \left[ - \sum_{h=1}^H (\vx_H^\top \mW_t \vx_h) \I{z_h = y} + \ln\left(\sum_{j \in [N]}\exp(\sum_{h=1}^H (\vx_H^\top \mW_t \vx_h )\I{z_h = j})\right) \right] \\
    &= \E_{y, \zeta} \left[ - (\vx_H^\top \mW_t \vx_{\zeta + 1}) + \ln\left(\exp(\vx_H^\top \mW_t \vx_{\zeta + 1}) + N-1\right) \right],
\end{align*} 
where the last equality is due to the fact that 
\begin{itemize}
    \item $\I{z_h = y} = 1$ for $h = \zeta+1$, and $\I{z_h = y} = 0$ otherwise.
    \item If $j \neq y$ then $z_{h-1} \neq q$. By the induction assumption, this implies $(\vx_H^\top \mW_t \vx_h)\I{z_h = j} = 0$ for all $j \neq y$.
\end{itemize}
Taking the differential on both sides, we obtain 
\begin{align*}
    dL_t &= \E_{y, \zeta}\left[ - (\vx_H^\top (d\mW_t) \vx_{\zeta + 1}) + \frac{\exp(\vx_H^\top \mW_t \vx_{\zeta + 1})(\vx_H^\top (d\mW_t) \vx_{\zeta+ 1})}{\exp(\vx_H^\top \mW_t \vx_{\zeta + 1}) + N-1} \right] \\
    &= \E_{y, \zeta}\left[ - (\vx_H^\top (d\mW_t) \vx_{\zeta + 1}) + (\vx_H^\top (d\mW_t) \vx_{\zeta + 1})\hat{p}_y \right] \\
    &= \E_{y, \zeta}\left[ (\hat{p}_{y,t}-1) (\vx_H^\top (d\mW_t) \vx_{\zeta + 1})\right],
\end{align*}
where $\hat{p}_{y,t} = \frac{\exp(\vx_H^\top \mW_t \vx_{\zeta + 1})}{\exp(\vx_H^\top \mW_t \vx_{\zeta + 1})+ N-1}$ is the probability that the attention layer predicts $y$. 
As a result, the gradient of $L_t$ with respect to $\mW_t$ is
\begin{align*}
    \frac{dL_t}{d\mW_t} &= \E_{y, \zeta}\left[ (\hat{p}{y,t}-1) (\vx_H \vx_{\zeta+ 1}^\top)\right] \\
    &= \E_{y, \zeta}\left[ (\hat{p}{y,t}-1) \left(E(q) + \tilde{E}(\square) (E(y) + \tilde{E}(q))^\top \right)\right] \\
    &= \E_{y, \zeta}\left[ (\hat{p}{y,t}-1) \left(E(q) + \tilde{E}(\square)(E^\top(y) + \tilde{E}^\top(q)) \right)\right] \\
    &= \frac{1}{2}\E_{\zeta}\left[ (\hat{p}_{y_1, t}-1) \mid y = y_1 \right] \left(E(q) + \tilde{E}(\square) (E^\top(y_1) + \tilde{E}^\top(q)) \right) \\
    &+ \frac{1}{2}\E_{\zeta}\left[ (\hat{p}_{y_2, t}-1) \mid y = y_2\right] \left(E(q) + \tilde{E}(\square) (E^\top(y_2) + \tilde{E}^\top(q)) \right)
\end{align*}
 Due to the statistical symmetry between $y_1$ and $y_2$, we have $\E_{\zeta}\left[ (\hat{p}_{y_1, t}-1) \mid y = y_1\right] = \E_{\zeta}\left[ (\hat{p}_{y_2, t}-1) \mid y = y_2 \right]$.
 Let $r_t = \E_{\zeta}\left[ (1-\hat{p}_{y_1, t}) \mid y = y_1\right]$.
It follows that for some $r_t > 0$,
\begin{align}
    \frac{dL_t}{d\mW_t} = r_t \sum_{y \in \{y_1, y_2\}}(E(q) + \tilde{E}(\square)) (E^\top(y) + \tilde{E}^\top(q)).
    \label{eq:gradientform}
\end{align}
Furthermore, running gradient descent 
\begin{align}
\mW_{t+1} = \mW_{t} - \eta \frac{dL_t}{d\mW_{t}}
\end{align} 
leads to $\mW_{t+1} = \mW_t + \eta r_t \sum_{y \in \{y_1, y_2\}}(E(q) + \tilde{E}(\square)) (E^\top(y) + \tilde{E}^\top(q))$. In a sentence with output token $y$, for all $h \in [H]$ such that $z_{h-1} \neq q$, we have $z_h \neq y$. Hence,
\begin{align}
    (E^\top(y) + \tilde{E}^\top(q)) \vx_h = (E^\top(y) + \tilde{E}^\top(q)) (E(z_h) + \tilde{E}(z_{h-1})) = 0.
\end{align} 
As a result, for all $h$ where $z_{h-1} \neq q$, we have
\begin{align}
    \vx_H^\top \mW_{t+1} \vx_h &= \vx_H^\top \mW_{t} \vx_h + \eta r_t \vx_H^\top \sum_{y \in \{y_1, y_2\}}(E(q) + \tilde{E}(\square)) (E^\top(y) + \tilde{E}^\top(q)) \vx_h \\
    &= 0.
\end{align}
By induction, we have Equation~\ref{eq:gradientform} holds for all $t$.
Recall that we initialized $\mW_0 = \vect{0}$.
With a learning rate $\eta > 0$, running gradient descent results to
\begin{align}
    \mW_t &= (\sum_{s=0}^t r_{t})\sum_{y \in \{y_1, y_2\}}(E(q) + \tilde{E}(\square)) (E^\top(y) + \tilde{E}^\top(q)) \\
    &= R_t (E(q) + \tilde{E}(\square)) (E^\top(y_1) + E^\top(y_2) + 2\tilde{E}^\top(q)) \\
    &= R_{t}\mA,
\end{align}
where $R_t = \sum_{s=0}^t r_{t} \in \R_+$ is a positive number and $\mA = (E(q) + \tilde{E}(\square)) (E^\top(y_1) + E^\top(y_2) + 2\tilde{E}^\top(q))$.
Thus, $\mW_t$ is always in the same direction as $\mA$. Hence,
\begin{align}
    \lim_{t \to \infty}\frac{\mW_t}{\norm{\mW_t}} = \frac{\mA}{\norm{\mA}}.
\end{align}
Next, recall that $\mW^* = E(q)\tilde{E}^\top(q)$. Let $\va,\vb,\vc,\vd,\vu \in \R^d$ be five vectors corresponding to $E(q), E(y_1), E(y_2), \tilde{E}(q)$ and $\tilde{E}(\square)$, respectively.
Note that these vectors are pairwise orthogonal unit vectors. The two matrices $\mA$ and $\mW^*$ are written as
\begin{align*}
    \mA &= (a + u)(b^\top + c^\top + 2d^\top), \\
    \mW^* &= a d^\top.
\end{align*}
We will show that the Frobenius product $\inp{\frac{\mA}{\norm{\mA}}}{\frac{\mW^*}{\norm{\mW^*}}}$ is not equal $1$. We have
\begin{nalign}
    \inp{\mA}{\mW^*} &= \Tr((W^*)^\top A) \\
    &= \Tr(d a^\top (a + u)(b^\top + c^\top + 2d^\top)) \\
    &= \Tr(d(b^\top + c^\top + 2d^\top)) \\
    &= \Tr((b^\top + c^\top + 2d^\top)d) \\
    &= 2,
\end{nalign}
where the equalities follow from $a^Ta = d^\top d = 1$ and the pairwise orthogonality. Furthermore,
\begin{align*}
    \norm{\mA} = \sqrt{\Tr(\mA^\top \mA)} &= \sqrt{\Tr((b + c + 2d)(a^\top + u^\top)(a + u)(b^\top + c^\top + 2d^\top))} \\
    &= \sqrt{2\Tr((b + c + 2d)(b^\top + c^\top + 2d^\top))} \\
    &= \sqrt{12},
\end{align*}
and
\begin{align*}
    \norm{\mW^*} = \sqrt{\Tr((\mW^*)^\top \mW^*) } = \sqrt{\Tr(da^\top a d^\top)} = 1.
\end{align*}
Obviously, $\inp{\frac{\mA}{\norm{\mA}}}{\frac{\mW^*}{\norm{\mW^*}}} = \frac{2}{\sqrt{12}} < 1$.
\end{proof}

\section{Missing Proofs in Section~\ref{sec:noiselessWithSoftmax}}
\label{sec:proofsNoiselessSoftmax}
\subsection{Proof of Lemma~\ref{lemma:zerolossSoftmax}}
\begin{proof}
    Consider a sentence with a trigger $q$ and output $y$. 
    Similar to the proof of Lemma~\ref{lemma:zeroloss}, we first compute the pre-softmax attention scores $\vx_H^\top \mW \vx_h$. We have
    \begin{align}
        \vx_H^\top \mW \vx_h &= (E(q) + \tilde{E}(z_{h-1}))^\top \left(\sum_{q' \in \gQ} \lambda_{q'} E(q')\left(\tilde{E}(q')^\top - \sum_{x=1, x \neq q'}^N\tilde{E}(x)^\top\right) \right) \vx_h \\
        &= \lambda_q \left(\tilde{E}(q)^\top - \sum_{x=1, x \neq q}^N\tilde{E}(x)^\top\right)(E(z_h) + \tilde{E}(z_{h-1})) \\
        &= \lambda_q \left(\tilde{E}(q)^\top - \sum_{x=1, x \neq q}^N\tilde{E}(x)^\top\right)\tilde{E}(z_{h-1}) \\
        &= \lambda_q \left(\I{z_{h-1} = q} - \I{z_{h-1} \neq q}\right).
    \end{align}
    It follows that 
    \begin{align}
        \vx_H^\top \mW \vx_h =
        \begin{cases}
            \lambda_q &\qquad\text{if } z_{h-1} = q,\\
            -\lambda_q &\qquad\text{otherwise}.
        \end{cases}
    \end{align}
    The attention score at the $h$-th token in a sentence is
    \begin{align}
        \sigma(\vx_H^\top \mW \vx_h) &= \frac{\exp(\vx_H^\top \mW \vx_h)}{\sum_{j=1}^H \exp(\vx_H^\top \mW \vx_j)} = \frac{\exp(\lambda_q(\I{z_{h-1} = q} - \I{z_{h-1} \neq q}))}{\sum_{j=1}^H \exp(\lambda_q(\I{z_{h-1} = q} - \I{z_{h-1} \neq q}))} \\
        &= \begin{cases}
            & \frac{\exp(\lambda_q)}{C_{q,y}\exp(\lambda_q) + (H - C_{q,y})\exp(-\lambda_q)}  \quad\text{if } z_{h-1} = q, \\
            & \frac{\exp(-\lambda_q)}{C_{q,y}\exp(\lambda_q) + (H - C_{q,y})\exp(-\lambda_q)} \quad\text{otherwise.} 
        \end{cases}        
    \end{align}
    Obviously, $\lim_{\lambda_q \to \infty}  \sigma(\vx_H^\top \mW \vx_h) = 1$ if $z_{h-1} = q$ and $\lim_{\lambda_q \to \infty}  \sigma(\vx_H^\top \mW \vx_h) = 0$ otherwise.

    Next, we compute $\xi_j$ for $j = y$ and $j \neq y$. 
    Recall that $\mV = s\mI_d$ and $\ve_y^\top\mU \vx_h = \I{z_h = y}$.
    With $j = y$, we have
    \begin{align}
        \xi_y &= \ve_y^\top \mU \mV \sum_{h=1}^H \sigma(\vx_H \mW \vx_h) \vx_h = s\sum_{h=1}^H \sigma(\vx_H \mW \vx_h) \I{z_h = y}  \\
        &= s(\sum_{z_{h-1} = q}\sigma(\vx_H \mW \vx_h)\I{z_h = y} + \sum_{z_{h-1} \neq q} \sigma(\vx_H \mW \vx_h)\I{z_h = y}) \\
        &= s\frac{C_{q,y}\exp(\lambda_q) + (C_y - C_{q,y})\exp(-\lambda_q)}{C_{q,y}\exp(\lambda_q) + (H - C_{q,y})\exp(-\lambda_q)}.
        \label{eq:xiAysoftmax}
    \end{align}
    With $j \neq y$, we have
    \begin{align}
        \xi_j &= \ve_j^\top \mU \mV \sum_{h=1}^H \sigma(\vx_H \mW \vx_h) \vx_h = s\sum_{h=1}^H \sigma(\vx_H \mW \vx_h) \I{z_h = j}  \\
        &= s\sum_{h=1}^H \frac{\exp(-\lambda_q)}{C_{q,y}\exp(\lambda_q) + (H - C_{q,y})\exp(-\lambda_q)} \I{z_h = j} \\
        &= s\frac{C_j\exp(-\lambda_q)}{C_{q,y}\exp(\lambda_q) + (H - C_{q,y})\exp(-\lambda_q)}.
        \label{eq:xiAjsoftmax}
    \end{align}
    The desired statement follows from the fact that $1 \leq C_{q,y} < H, 0 \leq C_j < H$ and thus
    $\lim_{\lambda_q \to \infty}\frac{C_{q,y}\exp(\lambda_q) + (C_y - C_{q,y})\exp(-\lambda_q)}{C_{q,y}\exp(\lambda_q) + (H - C_{q,y})\exp(-\lambda_q)} = 1$ and $\lim_{\lambda_q \to \infty} \frac{C_j\exp(-\lambda_q)}{C_{q,y}\exp(\lambda_q) + (H - C_{q,y})\exp(-\lambda_q)} = 0$ for $j \neq y$.
    Hence, 
    \begin{align}
        \lim_{s \to \infty}\lim_{\lambda_q \to \infty} \xi_y = \lim_{s \to \infty} s = \infty \\
        \lim_{s \to \infty}\lim_{\lambda_q \to \infty} \xi_j = \lim_{s \to \infty} 0 = 0.
    \end{align}
\end{proof}

\subsection{Proof of Theorem~\ref{theorem:noiselessSoftmaxConvergenceRate}}
\begin{proof}
Consider a sample sentence with trigger word $q$ and output word $y$.
We use short-hand notations $A = C_{q,y}, B = C_y, x = \lambda$.
Note that $1 \leq A < B < H$.
The loss incurred by this sample is
\begin{nalign}
    f(s, x) &= -\ln\frac{\exp(\xi_y)}{\exp(\xi_y) + \sum_{i \neq x}\exp(\xi_i)} \\
        &= -\xi_y + \ln(\exp(\xi_y) + \sum_{i \neq y}\exp(\xi_i)) \\
        &= -s\frac{C_{q,y}\exp(x) + (C_y - C_{q,y})\exp(-x)}{C_{q,y}\exp(x) + (H - C_{q,y})\exp(-x)} \\ 
        &+ \ln(\exp(s\frac{C_{q,y}\exp(x) + (C_y - C_{q,y})\exp(-x)}{C_{q,y}\exp(x) + (H - C_{q,y})\exp(-x)}) + \sum_{i \neq y}
        \exp(s\frac{C_i\exp(-x)}{C_{q,y}\exp(x) + (H - C_{q,y})\exp(-x)})) \\
        &= -s\frac{A\exp(x) + (B - A)\exp(-x)}{A\exp(x) + (H - A)\exp(-x)} \\ 
        &+ \ln(\exp(s\frac{A\exp(x) + (B - A)\exp(-x)}{A\exp(x) + (H - A)\exp(-x)}) + \sum_{i \neq y}
        \exp(s\frac{C_i\exp(-x)}{A\exp(x) + (H - A)\exp(-x)})) \\
        &= -s \frac{Ae^{2x} + B-A}{Ae^{2x} + H-A} + \ln(\exp(s\frac{Ae^{2x} + B-A}{Ae^{2x} + H-A}) + \sum_{i \neq y} \exp(s \frac{C_i}{Ae^{2x} + H-A})).
\end{nalign}
Let $g(x) = A e^{2x} + H-A$ and $u(x) = Ae^{2x} + B-A$. We have
\begin{align}
 f(s, x) = -s \frac{u(x)}{g(x)} + \ln( \exp(s \frac{u(x)}{g(x)}) + \sum_{i \neq y} \exp(s\frac{C_i}{g(x)}) ).   
 \label{eq:fsx}
\end{align}
Let $v(s, x) = \exp(s \frac{u(x)}{g(x)}) + \sum_{i \neq y} \exp(s\frac{C_i}{g(x)})$. Note that $v(s, x) > 0$ for all $s, x \in \R$.

Next, we compute the partial derivatives of $f$ with respect to $x$ and $s$. 
We have 
\begin{align}
    \frac{dg}{dx} &= 2Ae^{2x} \\
    \frac{du}{dx} &= 2Ae^{2x} \\
    \frac{d\frac{u}{g}}{dx} &= \frac{u'(x)g(x) - u(x)g'(x)}{g(x)^2} \\
    &= \frac{2Ae^{2x} (Ae^{2x} + H-A) - 2Ae^{2x} (Ae^{2x} + B-A)}{g(x)^2} \\
    &= \frac{2Ae^{2x}(H-B)}{g(x)^2} \\
    \frac{d\frac{1}{g}}{dx} &= -\frac{g'(x)}{g(x)^2} = \frac{-2Ae^{2x}}{g(x)^2}.
\end{align}
Additionally,
\begin{align}
     \partialderi{v}{x} &= s\frac{d\frac{u}{g}}{dx} \exp(s \frac{u(x)}{g(x)}) + \sum_{i \neq y} sC_i \frac{d\frac{1}{g}}{dx} \exp(s\frac{C_i}{g(x)}) \\
    &= s\left( \frac{2Ae^{2x}(H-B)}{g(x)^2} \exp(s \frac{u(x)}{g(x)}) + \sum_{i \neq y} -2C_i \frac{Ae^{2x}}{g(x)^2} \exp(s\frac{C_i}{g(x)})  \right) \\
    &= \frac{2As e^{2x}}{g(x)^2}\left( (H-B)\exp(s \frac{u(x)}{g(x)}) - \sum_{i \neq y} C_i \exp(s\frac{C_i}{g(x)}) \right),
\end{align}
and 
\begin{align}
    \partialderi{v}{s} &= \frac{u(x)}{g(x)}\exp(s \frac{u(x)}{g(x)}) + \sum_{i \neq y} \frac{C_i}{g(x)} \exp(s \frac{C_i}{g(x)}) \\
    &= \frac{1}{g(x)}\left( u(x)\exp(s \frac{u(x)}{g(x)}) + \sum_{i \neq y} C_i \exp(s \frac{C_i}{g(x)})  \right)
\end{align}
It follows that
\begin{align}
    \partialderi{f}{x} &= -s \frac{d\frac{u}{g}}{dx} + \frac{\partialderi{v}{x}}{v(x)} \\
    &= -\frac{2Ase^{2x}(H-B)}{g(x)^2} + \frac{1}{v(x)} \frac{2As e^{2x}}{g(x)^2}\left( (H-B)\exp(s \frac{u(x)}{g(x)}) - \sum_{i \neq y} C_i \exp(s\frac{C_i}{g(x)}) \right) \\
    &= -\frac{2Ase^{2x}}{g(x)^2 v(x)}\left( v(x)(H-B) - (H-B)\exp(s \frac{u(x)}{g(x)}) +  \sum_{i \neq y} C_i \exp(s\frac{C_i}{g(x)}) \right) \\
    &= -\frac{2Ase^{2x}}{g(x)^2 v(x)}\left( \sum_{i \neq y} (H-B + C_i) \exp(s\frac{C_i}{g(x)}) \right).
\end{align}
Since $A > 0, H-B + C_i > 0$ and $v(x) > 0$, we have $\partialderi{f}{x} < 0$ whenever $s > 0$.

Next, we have 
\begin{align}
    \partialderi{f}{s} &= -\frac{u(x)}{g(x)} + \frac{\partialderi{v}{s}}{v(x)} \\
    &= \frac{1}{g(x)v(x)}\left( -u(x)v(x) +  u(x)\exp(s \frac{u(x)}{g(x)}) + \sum_{i \neq y} C_i \exp(s \frac{C_i}{g(x)}) \right) \\
    &= \frac{1}{g(x)v(x)}\left( \sum_{i \neq y} (C_i - u(x)) \exp(s \frac{C_i}{g(x)})  \right) \\
    &= -\frac{1}{g(x)v(x)}\left( \sum_{i \neq y} (Ae^{2x} + B - A - C_i) \exp(s \frac{C_i}{g(x)})  \right).
\end{align}
Obviously, if $x \geq \frac{\ln{H}}{2}$, then $Ae^{2x} \geq AH \geq H > C_i$, which implies that $\partialderi{f}{s} < 0$.

\textbf{Phase Analysis.} With $\eta \in (0,1)$ be the learning rate, define $T_0 = \ceil{\frac{\abs{\gQ}\ln{H}}{2\eta}}$. Recall that $s$ is intialized by $s_0 = \frac{\abs{\gQ}\ln{H} + 2}{2}$ and $\lambda_{q,0} = 0$ for all $q$.
We divide the training process into two phases: the first phase is from round $t = 1$ to $t = T_0$, and the second phase is from $t = T_0 + 1$ onwards.
\begin{itemize}
    \item In the first phase $1 \leq t \leq T_0$, we first show that $s_t$ may fluctuate but is always positive. 
    Recall that for scalar value of $s$, the normalize gradient descent is equal to sign descent $s_t = s_{t-1} - \eta \sign{\partialderi{L}{s}}$.
    In the worst case, the signs of the partial derivatives are always positive.
    It follows that 
    \begin{align}
        s_t \geq s_0 - \eta T_0 \geq s_0 - \frac{\abs{\gQ}\ln{H} + 1}{2} \geq \frac{1}{2}.
    \end{align}
    Thus, $s_t > 0$ always holds in the first phase. This implies that $\partialderi{f}{\lambda_q} < 0$ in the first phase. 
    Therefore, the update formula $\lambda_{q,t} = \frac{\eta t}{\abs{\gQ}}$ always holds, which implies
    \begin{align}
        \lambda_{q,T_0} &= \frac{\eta T_0}{\abs{\gQ}} \geq \frac{\ln{H}}{2}.
    \end{align}
    This implies that $\partialderi{L}{s} < 0$.

    \item In the second phase $t > T_0$, we now have that the signs of all partial derivatives are negative. Therefore,
    \begin{align}
        s_t &\geq \frac{1}{2} + \eta (t - T_0), \\
        \lambda_{q,t} &= \frac{\eta t}{\abs{\gQ}}.
    \end{align}
    Plugging these into~\eqref{eq:fsx}, we obtain 
    \begin{align}
        f(s_t, \lambda_{q,t}) &= \ln(1 + \sum_{i \neq y} \frac{\exp(s_t\frac{C_i}{g(\lambda_{q,t})})}{\exp(s_t \frac{u(\lambda_{q,t})}{g(\lambda_{q,t})})}) =  \sum_{i \neq y} \frac{\exp(s_t\frac{C_i}{g(\lambda_{q,t})})}{\exp(s_t \frac{u(\lambda_{q,t})}{g(\lambda_{q,t})})} \\
        &\leq N \frac{\exp(s_t\frac{H}{g(\lambda_{q,t})})}{\exp(s_t \frac{u(\lambda_{q,t})}{g(\lambda_{q,t})})} \leq O\left(N \frac{\exp(s_t\frac{H}{g(\lambda_{q,t})})}{\exp(2s_t)}\right) \\
        &\leq O(N\exp(-\eta t)), 
    \end{align}
    where the second inequality is from $\frac{u(x)}{g(x)} \geq \frac{1}{2}$ for sufficiently large $x$, and the last inequality is from $2s_t = \Omega(\eta t)$ and
    \begin{align}
        s_t\frac{H}{g(\lambda_{q,t})} &\leq (s_0 + \eta T_0 + \eta t) \frac{H}{C_{q,y}e^{2\lambda_{q,t}} + H - C_{q,y}} \\
        &= (s_0 + \eta T_0 + \eta t) \frac{H}{C_{q,y}e^{2\eta t / \abs{\gQ}} + H - C_{q,y}} \\
        &\leq O(1).
    \end{align}
\end{itemize}
\end{proof}

\section{Missing Proofs in Section~\ref{sec:noisylearning}}
\label{sec:missingproofsNoisyLearning}
\subsection{Proof of Lemma~\ref{lemma:BayesOptimalNoisyLearning}}
\label{sec:proofOfLemmaBayesOptimalNoisyLearning}
We prove the following two lemmas on the optimality of linear and softmax attention for the noisy setting.
% Lemma~\ref{lemma:BayesOptimalNoisyLearning} immediately follows from these two lemmas.
\begin{lemma}(Optimality of linear and ReLU attention for noisy task)
    Under the reparameterization regime defined in Lemma~\ref{lemma:BayesOptimalNoisyLearning}, for all $\alpha \in (0,1)$, using linear and ReLU attention, we obtain
    \begin{align}        
        \lim_{\lambda \to \infty, \gamma \to \ln\frac{\alpha}{1-\alpha}}L(\lambda, \gamma) \defeq \lim_{\lambda \to \infty, \gamma \to \ln\frac{\alpha}{1-\alpha}}\E_{y,z_{1:H+1}}\left[ -\ln\frac{\exp(\xi_{z_{H+1}})}{\sum_{j \in [N+1]} \exp(\xi_j)}\right] = L_{\mathrm{Bayes}}.
    \end{align}
    \label{lemma:optimalityLinearNoisy}
\end{lemma}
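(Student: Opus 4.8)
The plan is to follow the same two-stage recipe as the noiseless analysis (Lemma~\ref{lemma:noiselessAttScores} and Lemma~\ref{lemma:zeroloss}): first pin down the attention scores and the two logit vectors $\xi_A,\xi_F$ under the stated reparameterization, and then recognize the resulting expected cross-entropy as the Bayes entropy in the limit. I will work with linear attention throughout; since the attention scores turn out to be nonnegative whenever $\lambda>0$, the ReLU case is identical in the regime $\lambda\to\infty$, exactly as in the discussion following Lemma~\ref{lemma:noiselessAttScores}. For Step~1, using $z_H=q$ and $\mW=\lambda\bigl(E(q)\tilde{E}^\top(q)-E(q)E^\top(\tau)\bigr)$, Assumption~\ref{assumption:OrthonormalEmbedding} gives $\mW\vx_h=\lambda E(q)\bigl(\I{z_{h-1}=q}-\I{z_h=\tau}\bigr)$ and hence $\vx_H^\top\mW\vx_h=\lambda\bigl(\I{z_{h-1}=q}-\I{z_h=\tau}\bigr)$ for all $h\in[H]$. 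I then invoke the data model: condition~(II) forces $z_h=\tau\Rightarrow z_{h-1}=q$, condition~(III) forces $z_{h-1}=q\Rightarrow z_h\in\{y,\tau\}$ (and in particular $z_{H-1}\ne q$, since the bigram $(z_{H-1},z_H)=(z_{H-1},q)$ is not one of the admissible trigger bigrams $(q,y),(q,\tau)$); together these collapse the score to $\vx_H^\top\mW\vx_h=\lambda\,\I{z_{h-1}=q,\,z_h=y}$, which is nonnegative whenever $\lambda>0$.

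Next (Step~2), condition~(I) gives $C_{q,y}=\sum_{h=1}^H\I{z_{h-1}=q,z_h=y}\ge 1$, so $\sum_{h=1}^H\sigma(\vx_H^\top\mW\vx_h)\vx_h=\lambda C_{q,y}\bigl(E(y)+\tilde{E}(q)\bigr)$. Applying $\mU=[E(1),\dots,E(N+1)]^\top$ and $\mV=\mI_d$ yields $\xi_{A,j}=\lambda C_{q,y}\,\I{j=y}$. For the feed-forward branch, orthonormality together with $z_{H-1}\ne q$ gives $\mF\vx_H=\gamma E(\tau)$ and $\mF\bigl(E(y)+\tilde{E}(q)\bigr)=E(\tau)$, so $\mF$ applied to the full input $\vx_H+\lambda C_{q,y}\bigl(E(y)+\tilde{E}(q)\bigr)$ equals $(\gamma+\lambda C_{q,y})E(\tau)$, whence $\xi_{F,j}=(\gamma+\lambda C_{q,y})\,\I{j=\tau}$. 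Since $y\ne\tau$, the total logits are $\xi_y=\lambda C_{q,y}$, $\xi_\tau=\lambda C_{q,y}+\gamma$, and $\xi_j=0$ for $j\notin\{y,\tau\}$. (It is precisely the extra $\lambda C_{q,y}$ term in $\xi_\tau$, arising from feeding the attention-weighted sum into the feed-forward layer rather than only $\vx_H$, that makes $\xi_y$ and $\xi_\tau$ blow up at the same rate; with $\xi_F=\mU\mF\vx_H$ alone the softmax would concentrate on $y$ as $\lambda\to\infty$.)

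For Step~3, conditioning on $(y,z_{1:H})$ and writing $C:=C_{q,y}$, the label $z_{H+1}$ equals $\tau$ with probability $\alpha$ and $y$ with probability $1-\alpha$ independently of the rest of the sentence, so the conditional expected cross-entropy is
\[
f_C(\lambda,\gamma)=\ln\bigl(e^{\lambda C}+e^{\lambda C+\gamma}+N-1\bigr)-\lambda C-\alpha\gamma=\ln\bigl(1+e^{\gamma}+(N-1)e^{-\lambda C}\bigr)-\alpha\gamma .
\]
Since $C\ge 1$ always, $(N-1)e^{-\lambda C}\le (N-1)e^{-\lambda}\to 0$, so $f_C(\lambda,\gamma)\to\ln(1+e^{\gamma})-\alpha\gamma$ uniformly in $C$, and therefore $\lim_{\lambda\to\infty}L(\lambda,\gamma)=\ln(1+e^{\gamma})-\alpha\gamma$ with no dominated-convergence subtlety. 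Finally, $\gamma\mapsto\ln(1+e^{\gamma})-\alpha\gamma$ is continuous, and at $\gamma=\ln\tfrac{\alpha}{1-\alpha}$ it equals $\ln\tfrac{1}{1-\alpha}-\alpha\bigl(\ln\alpha-\ln(1-\alpha)\bigr)=-\alpha\ln\alpha-(1-\alpha)\ln(1-\alpha)=L_{\mathrm{Bayes}}$; the two limits may be taken jointly by continuity.

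The analytic part of this argument is routine; the real care is combinatorial, namely verifying that conditions~(II)--(III) of the data model are \emph{exactly} what is needed to reduce $\I{z_{h-1}=q}-\I{z_h=\tau}$ to $\I{z_{h-1}=q,\,z_h=y}$, and that $z_{H-1}\ne q$ so that the feed-forward branch loads only onto coordinate $\tau$. Once the three logit values of Step~2 are in hand, the remainder is the same elementary limit computation as in the noiseless case.
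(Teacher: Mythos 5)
Your proposal is correct and follows essentially the same route as the paper's proof: derive the attention scores $\vx_H^\top\mW\vx_h=\lambda\,\I{z_{h-1}=q,\,z_h=y}$ from the reparameterization and conditions (II)--(III), conclude $\xi_y=\lambda C_{q,y}$, $\xi_\tau=\lambda C_{q,y}+\gamma$, $\xi_j=0$ otherwise, and pass to the limit in the resulting cross-entropy. Your treatment is marginally more explicit than the paper's on two points it handles implicitly (the uniformity of the limit over $C\in[1,H]$ and the fact that $z_{H-1}\neq q$ so the feed-forward branch loads only on coordinate $\tau$), but these are refinements of the same argument, not a different one.
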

\begin{proof}
   Fix a sentence with trigger $q$ and output $y$. 
   It suffices to show that 
    \begin{align}
        \xi_j = \I{j = y \vee j = \tau}(\lambda C_{q,y} + \I{j = \tau} \gamma),
        \label{eq:xijLinearAttNoisy}
    \end{align}
    since this implies
    \begin{nalign}
        L(\lambda, \gamma) 
        &= \E_{q, y,z_{1:H+1}}\left[ -\ln\frac{\exp(\xi_{z_{H+1}})}{\sum_{j \in [N+1]} \exp(\xi_j)}\right] = \E_{y}\left[ 
        \E_{z_{1:H+1}}\left[  -\ln\frac{\exp(\xi_{z_{H+1}})}{\sum_{j \in [N+1]} \exp(\xi_j)} \mid y\right] \right] \\
        &= \E_{q, y}\left[
        \E_{z_{1:H}}\left[(\alpha-1)\left(\ln\frac{e^{C_{q,y}\lambda}}{e^{C_{q,y}\lambda} + e^{C_{q,y}\lambda + \gamma} + N-1}\right) - \alpha \left(\ln\frac{e^{C_{q,y}\lambda + \gamma}}{e^{C_{q,y}\lambda} + e^{C_{q,y}\lambda + \gamma} + N-1}\right) \right] \right].
        \label{eq:lossNoisyLinearAtt}
    \end{nalign}
    %%%%
    The desired statement follows from the facts that 
    \begin{align*}
        \lim_{\lambda \to \infty}\frac{e^{C\lambda}}{e^{C\lambda} + e^{C\lambda + \ln\frac{\alpha}{1-\alpha}} + N-1} = 1-\alpha, ~\text{and}~ \lim_{\lambda \to \infty}\frac{e^{C\lambda+ \ln\frac{\alpha}{1-\alpha}}}{e^{C\lambda} + e^{C\lambda + \ln\frac{\alpha}{1-\alpha}} + N-1} = \alpha
    \end{align*}  
    for any bounded $0 \leq C \leq H$.
    Note that we require $\alpha$ strictly larger than $0$ so that we can use $\lim_{\lambda \to \infty}\ln(\frac{e^{C_{q,y}\lambda + \gamma}}{e^{C_{q,y}\lambda} + e^{C_{q,y}\lambda + \gamma}+ N-1}) = \ln(\lim_{\lambda \to \infty}\frac{e^{C_{q,y}\lambda + \gamma}}{e^{C_{q,y}\lambda} + e^{C_{q,y}\lambda + \gamma} + N-1}) = \ln\alpha$.

    We turn to proving~\eqref{eq:xijLinearAttNoisy}.
    Similar to the proof of Lemma~\ref{lemma:zeroloss}, 
    we start by examining the attention scores $\vx_H^\top \mW \vx_h$ for $h = 1, 2, \dots, H$. 
    First, the product $\vx_H^\top \mW$ is equal to 
    \begin{align}
        & (E(q)^\top + \tilde{E}(z_{H-1})^\top) \lambda \left(\sum_{q' \in \gQ} E(q') \left( \tilde{E}^\top(q') - E^\top(\tau) \right) \right) \\
        &= \lambda E(q)^\top \left(\sum_{q' \in \gQ} E(q') \left( \tilde{E}^\top(q') - E^\top(\tau) \right) \right) \\
        &= \lambda (\tilde{E}(q)^\top - E(\tau)^\top).
    \end{align}
    Next, we consider two cases:
    \begin{itemize}        
        \item For $z_h = \tau$, we have $z_{h-1} = q$, therefore
        \begin{align}
            \vx_H^\top \mW \vx_h &= \lambda \left( \tilde{E}^\top(q) - E^\top(\tau) \right) (E(\tau) + \tilde{E}(q)) = \vect{0}.
        \end{align}
        \item For $z_h \in [N+1] \setminus \{\tau\}$, we have
        \begin{align}
            \vx_H^\top \mW \vx_h &= \lambda \left( \tilde{E}^\top(q) - E^\top(\tau) \right) (E(z_h) + \tilde{E}(z_{h-1})) = \lambda \I{z_{h-1} = q}.
        \end{align}
    \end{itemize}
It follows that the attention scores are (using $\vx_H = E(q) + \tilde{E}(z_{H-1})$)
\begin{align}
    \vx_H^\top \mW \vx_h = &\lambda \I{z_{h-1} = q, z_h = y}
    \label{eq:attentionScoreNoisySetting}
\end{align}  
    
    Next, we compute $\xi_{A,j}$ for $j \in [N+1]$. 
    Recall that  $C_{q,y} = \sum_{h=1}^H \I{z_{h-1} = q, z_h = y}$.
    For $j \neq \tau$, we have $\xi_{A,j} = \lambda C_{q,y} \I{j = y}$ similar to the proof of Lemma~\ref{lemma:zeroloss}. For $j = \tau$, we have
    \begin{align*}
        \xi_{A, \tau} &= \ve^\top_{\tau} \mU \mV \sum_{h=1}^H (\vx_H^\top W \vx_h)\vx_h = \ve^\top_{\tau} \sum_{h=1}^{H} (\vx_H^\top W \vx_h) \ve_{z_h}  = 0.
    \end{align*} 
    We conclude that for all $j \in [N+1]$,
    \begin{align}
    \xi_{A,j} = \lambda C_{q,y} \I{j = y}
    \label{eq:xiAjNoisy}
    \end{align}
    % Since $C_y = \geq 1$, we have $\xi_{A,j} = \lambda \frac{(1-\alpha)H}{2} \I{j = y}$.

    Next, we compute $\xi_{F, j}$ for $j \in [N]$. We have $\mV = \mI_d$, $\mF = E(\tau)\left( \sum_{q' \in \gQ}\gamma E^\top(q') + \tilde{E}^\top(q')\right)$ and
    \begin{align*}
        \xi_F &= \mU\mF\left(\vx_H + \sum_{h=1}^H (\vx_H^\top W \vx_h)\mV \vx_h \right) \\
        &= \mU E(\tau)\left( \sum_{q' \in \gQ}\gamma E^\top(q') + \tilde{E}^\top(q')\right)\left(\vx_H + \sum_{h=1}^H (\vx_H^\top W \vx_h)\vx_h \right) \\
        &= \ve_\tau \left( \sum_{q' \in \gQ}\gamma E^\top(q') + \tilde{E}^\top(q')\right) (E(q) + \tilde{E}(z_{H-1}) + \lambda C_{q,y} (E(y) + \tilde{E}(q))) \\
        &= \ve_\tau \left( \gamma E^\top(q) + \tilde{E}^\top(q)\right) (E(q) + \tilde{E}(z_{H-1}) + \lambda C_{q,y} (E(y) + \tilde{E}(q))) \\
        &= (\gamma + \lambda C_{q,y})\ve_{\tau},
    \end{align*}
    where the last two equalities uses the fact that $z_{H-1}$ cannot be a trigger word, otherwise the condition IV in the data model~\ref{def:datamodel} would be violated.

    It follows that 
    \begin{align}
        \xi_{F, j} = (\gamma + \lambda C_{q,y}) \I{j = \tau}.
        \label{eq:xiFjNoisy}
    \end{align} 
    Overall, we have
    \begin{align*}
        \xi_{A,y} = \lambda C_{q,y},~\xi_{A,\tau} = 0,~\xi_{F, y} = 0, ~\xi_{F, \tau} = \gamma + \lambda C_{q,y}.
    \end{align*}
    This implies that
    \begin{itemize}
        \item If $j = y$ then $\xi_j = \xi_{A,y} + \xi_{F,y} = \lambda C_{q,y}$.
        \item If $j = \tau$ then $\xi_j = \xi_{A,\tau} + \xi_{F,\tau} = \gamma + \lambda C_{q,y} = \lambda C_{q,y} + \ln\frac{\alpha}{1-\alpha}$.
        \item Otherwise, $\xi_j = 0$.
    \end{itemize}
    We conclude that $\xi_j = \I{j = y \vee j = \tau}(\lambda C_{q,y} + \I{j = \tau} \gamma)$.
\end{proof}
\begin{lemma}(Optimality of softmax attention for noisy task)
    % Assuming $\gQ = \{q\}$. 
    By setting $\mU = [E(1)~E(2)~\dots~E(N)~E(N+1)]^\top, \mV = s\mI_d, \mW = \lambda \sum_{q \in \gQ} E(q)(\tilde{E}^\top(q) - 2E^\top(\tau) - \sum_{x=1, x \neq q}^N \tilde{E}(x)^\top)$ and  $\mF = E(\tau)\sum_{q \in \gQ}(\gamma E^\top(q) + \tilde{E}^\top(q))$, for all $\alpha \in (0,1)$, using softmax attention we obtain
    \begin{align}        
        &\lim_{s \to \infty, \lambda \to \infty, \gamma \to \ln\frac{\alpha}{1-\alpha}}L(s, \gamma, \lambda) \defeq \\
        & \lim_{s \to \infty}\lim_{\lambda \to \infty, \gamma \to \ln\frac{\alpha}{1-\alpha}} \E_{y,z_{1:H+1}}\left[ -\ln\frac{\exp(\xi_{z_{H+1}})}{\sum_{j \in [N+1]} \exp(\xi_j)}\right] = L_{\mathrm{Bayes}}.
    \end{align}
    \label{lemma:optimalitySoftmaxNoisy}
\end{lemma}
\begin{proof}
    Consider a sentence with a trigger $q$ and output $y$. 
     We have 
     \begin{align}
        \vx_H^\top \mW &= (E(q) + \tilde{E}(z_{H-1}))^\top \lambda \sum_{q' \in \gQ} E(q')(\tilde{E}^\top(q') - 2E^\top(\tau) - \sum_{x=1, x \neq q'}^N \tilde{E}(x)^\top) \\
        &= \lambda (\tilde{E}^\top(q) - 2E^\top(\tau) - \sum_{x=1, x \neq q}^N \tilde{E}(x)^\top).
     \end{align}
     It follows that
     \begin{align}
      \vx_H^\top \mW \vx_h &= (E(q) + \tilde{E}(z_{H-1}))^\top \lambda E(q)\left( \tilde{E}^\top(q) - 2E^\top(\tau) - \sum_{x=1, x \neq q}^N \tilde{E}(x)^\top \right) \vx_h \\
      &= \lambda  \left( \tilde{E}^\top(q) - 2E^\top(\tau) - \sum_{x=1, x \neq q}^N \tilde{E}(x)^\top \right) (E(z_h) + \tilde{E}(z_{h-1})) \\
      &= \lambda \left( -2\I{z_h = \tau} + \I{z_{h-1} = q} - \I{z_{h-1} \neq q} \right).
     \end{align}
     Hence,
     \begin{align}
        \vx_H^\top \mW \vx_h = \begin{cases}
            \lambda  &\textit{if } z_{h-1} = q, z_h = y \\
            -\lambda &\textit{if } z_{h-1} = q, z_h = \tau \\
            -\lambda &\text{otherwise}.
        \end{cases}
     \end{align}     
     Consequently,
     \begin{align}
        \sum_{j=1}^H \exp(\vx_H^\top \mW \vx_h) &= \sum_{z_{j-1}=q,z_j = y}\exp(\lambda) + \sum_{(z_{j-1}, z_j) \neq (q,y)} \exp(-\lambda) \\
        &= C_{q,y} \exp(\lambda) + (H - C_{q,y})\exp(-\lambda)
     \end{align}
    The attention score at the $h$-th token in a sentence is
    \begin{align}
        \sigma(\vx_H^\top \mW \vx_h) &= \frac{\exp(\vx_H^\top \mW \vx_h)}{\sum_{j=1}^H \exp(\vx_H^\top \mW \vx_j)} \\
        &= \begin{cases}
            & \frac{\exp(\lambda)}{C_{q,y}\exp(\lambda_q) + (H - C_{q,y})\exp(-\lambda)} \quad\text{if } z_{h-1} = q, z_h = y \\
            & \frac{\exp(-\lambda)}{C_{q,y}\exp(\lambda_q) + (H - C_{q,y})\exp(-\lambda)} \quad\text{otherwise.} 
        \end{cases}        
    \end{align}
    Next, we compute $\xi_{A,j}$. 
    With $j = y$, we have
    \begin{align}
        \xi_{A,y} &= \ve_y^\top \mU \mV \sum_{h=1}^H \sigma(\vx_H^\top \mW \vx_h) \vx_h = s\sum_{h=1}^H \sigma(\vx_H^\top \mW \vx_h) \I{z_h = y}  \\
        &= s\left( \sum_{z_{h-1} = q}\sigma(\vx_H^\top \mW \vx_h)\I{z_h = y} + \sum_{z_{h-1} \neq q} \sigma(\vx_H^\top \mW \vx_h)\I{z_h = y} \right) \\
        &= s\frac{C_{q,y}\exp(\lambda) + (C_y - C_{q,y})\exp(-\lambda)}{C_{q,y}\exp(\lambda) + (H - C_{q,y})\exp(-\lambda)}.
        % \label{eq:xiAy}
    \end{align}
    With $j \neq y$, we have
    \begin{align}
        \xi_{A,j} &= \ve_j^\top \mU \mV \sum_{h=1}^H \sigma(\vx_H^\top \mW \vx_h) \vx_h = s\sum_{h=1}^H \sigma(\vx_H^\top \mW \vx_h) \I{z_h = j}  \\
        &= s\sum_{z_{h-1} \neq q} \frac{\exp(-\lambda)}{C_{q,y}\exp(\lambda_q) + (H - C_{q,y})\exp(-\lambda)} \I{z_h = j} \\
        &= s\frac{C_j \exp(-\lambda)}{C_{q,y}\exp(\lambda) + (H - C_{q,y})\exp(-\lambda_q)}.
        % \label{eq:xiAj}
    \end{align}

    % Combining~\eqref{eq:xiAy} and~\eqref{eq:xiAj}, we have 
    % $\xi_{A,y} = \frac{C_{q,y}\exp(\lambda) + C_y - C_{q,y}}{C_{q,y}\exp(\lambda) + H - C_{q,y}}$ and $\xi_{A,j} = \frac{C_j}{C_{q,y}\exp(\lambda) + H - C_{q,y}}$.
    
    Next, the logits of the feed-forward layer is
    \begin{align*}
        & \xi_F = \mU \mF \left( \vx_H + \sum_{h=1}^H \sigma(\vx_H^\top \mW \vx_h)\mV \vx_h \right) \\
        &= \mU E(\tau)\left( \sum_{q' \in \gQ} \gamma E^\top(q') + \tilde{E}^\top(q') \right) ( E(q) + \tilde{E}(z_{H-1}) + s\sum_{h=1}^H \sigma(\vx_H^\top \mW \vx_h)\vx_h) \\
        &= \ve_{\tau}\left( \gamma + s \sum_{q' \in \gQ} \gamma \sum_{h=1}^H \sigma(\vx_H^\top \mW \vx_h) E^\top(q') \vx_h + \sum_{h=1}^H \sigma(\vx_H^\top \mW \vx_h)\tilde{E}^\top(q') \vx_h \right) \\
        &= \ve_{\tau}\left( \gamma + s \sum_{q' \in \gQ} \gamma \sum_{h=1}^H \sigma(\vx_H^\top \mW \vx_h) \I{z_h = q'} + \sum_{h=1}^H \sigma(\vx_H^\top \mW \vx_h)\I{z_{h-1} = q'} \right) \\
        &= \ve_{\tau}\left( \gamma + \left( s \sum_{q' \in \gQ} \frac{\gamma C_{q'} \exp(-\lambda)}{C_{q,y}\exp(\lambda) + (H - C_{q,y})\exp(-\lambda)} \right) \right) \\
        &\quad + \ve_{\tau}\left( s\frac{C_{q,y}\exp(\lambda) + C_{\tau}\exp(-\lambda)}{C_{q,y}\exp(\lambda) + (H - C_{q,y})\exp(-\lambda)} + \sum_{q' \in \gQ, q' \neq q} \frac{C_{q'}\exp(-\lambda)}{C_{q,y}\exp(\lambda) + (H - C_{q,y})\exp(-\lambda)} \right),
    \end{align*}
    where we used $E^\top(q')\vx_h = \I{z_h = q'}$ and $\tilde{E}^\top(q') \vx_h = \I{z_{h-1} = q'}$.
    As a result, the combined logits of attention and feed-forward layers is
    \begin{itemize}
        \item If $j = y$, then         
        \begin{align}        
        \xi_y &= \xi_{A,y} + \xi_{F,y} = \xi_{A,y} \\
        &= s\frac{C_{q,y}\exp(\lambda) + (C_y - C_{q,y})\exp(-\lambda)}{C_{q,y}\exp(\lambda) + (H - C_{q,y})\exp(-\lambda)}.
        \end{align}
        It follows that $\lim_{\lambda \to \infty} \xi_y = s$.
        \item If $j = \tau$, then 
        \begin{align}                    
        \xi_\tau &= \xi_{A,\tau} + \xi_{F,\tau} \\
        &= s\frac{2C_{\tau} \exp(-\lambda) + \gamma C_q \exp(-\lambda)}{C_{q,y}\exp(\lambda_q) + (H - C_{q,y})\exp(-\lambda_q)} + \gamma + s\frac{C_{q,y}\exp(\lambda)}{C_{q,y}\exp(\lambda) + (H - C_{q,y})\exp(-\lambda)} \\
        &+ \sum_{q' \in \gQ, q' \neq q} \frac{C_{q'}\exp(-\lambda)}{C_{q,y}\exp(\lambda) + (H - C_{q,y})\exp(-\lambda)}.
        \end{align}
        It follows that $\lim_{\lambda \to \infty} \xi_\tau = s + \gamma$.
        \item Otherwise, $\xi_j = \xi_{A,j} + \xi_{F,j} = \xi_{A,j} = s\frac{C_j \exp(-\lambda)}{C_{q,y}\exp(\lambda) + (H - C_{q,y})\exp(-\lambda_q)}$.
        
        It follows that $\lim_{\lambda \to \infty} \xi_j = 0$.
    \end{itemize}
    Then, Lemma~\ref{lemma:optimalitySoftmaxNoisy} follows directly from
    \begin{align*}
        \lim_{s \to \infty}\lim_{\lambda \to \infty} \frac{\exp(\xi_y)}{\exp(\xi_y) + \exp(\xi_\tau) + \sum_{x = 1, x \neq y}^N \exp(\xi_x)} &= \lim_{s \to \infty} \frac{\exp(s)}{\exp(s) + \exp(s + \gamma) + N-1} \\
        &= \lim_{s \to \infty} \frac{1}{1 + \exp(\gamma) + (N-1)\exp(-s)} \\
        &= \frac{1}{1 + \exp(\gamma)} \\
        &= 1-\alpha \quad\text{as } \gamma \to \ln(\frac{\alpha}{1-\alpha}),
    \end{align*}
    and 
    \begin{align*}
        \lim_{s \to \infty}\lim_{\lambda \to \infty} \frac{\exp(\xi_\tau)}{\exp(\xi_y) + \exp(\xi_\tau) + \sum_{x = 1, x \neq y}^N \exp(\xi_x)} &= \lim_{s \to \infty} \frac{\exp(s + \gamma)}{\exp(s) + \exp(s + \gamma) + N-1} \\
        &= \lim_{s \to \infty} \frac{\exp(\gamma)}{1 + \exp(\gamma) + (N-1)\exp(-s)} \\
        &= \frac{\exp(\gamma)}{1 + \exp(\gamma)} \\
        &= \alpha \quad\text{as } \gamma \to \ln(\frac{\alpha}{1-\alpha}).
    \end{align*}
\end{proof}

\subsection{Analysis of Normalized Gradient Descent on Population and Empirical Losses}
\label{sec:NGDAnalysisNoisy}
% In this section, we are using linear and ReLU attentions for ease of exposition.
% The analysis for softmax attention follows the same proof strategy, albeit the calculations are a lot more heavy. 
To avoid notational overload, we write $C^{(m)}_{q,y}$ for $C^{(m)}_{q,y^{(m)}}$. 
We first consider the case where $\alpha$ is known, and then extend the analysis to the case of unknown $\alpha$.

\subsubsection{Known $\alpha$: Running Normalized Gradient Descent on the Population Loss $L(\lambda)$}
We will drop the subscript in $C_{q,y}$ and just write $C$ when it is referring to a generic $q,y$ under the expectation sign.
Set $\gamma = \ln\frac{\alpha}{1-\alpha}$ and let $\beta = C\lambda + \gamma$.
The population loss in the noisy learning setting is defined as
\begin{align*}
    L_{\mathrm{pop}}(\lambda, \gamma) &= \E\left[(1-\alpha)\left(-\ln\frac{e^{C\lambda}}{e^{C\lambda} + e^\beta + N-1}\right) + \alpha \left(-\ln\frac{e^\beta}{e^{C\lambda} + e^\beta + N-1}\right) \right]\\
    &= \E\left[(1-\alpha)(-C\lambda) - \alpha \beta + \ln(e^{C\lambda} + e^\beta + N-1) \right] \\
    &= \E\left[(\alpha - 1)C\lambda - \alpha (C\lambda + \ln{\frac{\alpha}{1-\alpha}}) + \ln(e^{C\lambda} + e^{C\lambda}\frac{\alpha}{1-\alpha} + N-1) \right] \\
    &= \E\left[-C\lambda + \ln(\frac{e^{C\lambda}}{1-\alpha} + N-1)- \alpha \ln\frac{\alpha}{1-\alpha} \right].
\end{align*}
By Lemma~\ref{lemma:negativeDerivative}, the derivative of $f(\lambda) = -C\lambda + \ln(\frac{e^{C\lambda}}{1-\alpha} + N-1)$ is negative. 
Hence, $\frac{dL_{\mathrm{pop}}}{d\lambda} < 0$.
% \begin{align*}
%     \frac{d{L}}{d\lambda} &= \E\left[ -C + \frac{\frac{Ce^{C\lambda}}{1-\alpha}}{\frac{e^{C\lambda}}{1-\alpha} + N-1}\right] = \E\left[ \frac{C(1-N)}{\frac{e^{C\lambda}}{1-\alpha} + N-1} \right].
% \end{align*}
% Since $N > 1$, the derivative is negative.
It follows that running normalized gradient descent on $L_{\mathrm{pop}}(\lambda)$ from $\lambda_0 = 0$ gives
\begin{align}
    \lambda_{t} = \lambda_{t-1} - \eta \frac{\frac{dL_{\mathrm{pop}}}{d\lambda}}{\abs{\frac{dL_{\mathrm{pop}}}{d\lambda}}} = \lambda_{t-1} + \eta = \eta t.
\end{align}
It follows that
\begin{align*}
    L_{\mathrm{pop}}(\lambda_t, \gamma) &= \E\left[ -C_{q,y}\lambda_t + \ln(\frac{e^{C_{q,y}\lambda_t}}{1-\alpha} + N-1) - \alpha \ln\frac{\alpha}{1-\alpha} \right] \\
    &= \E\left[ \ln( e^{-C_{q,y}\lambda_t} \left(\frac{e^{C_{q,y}\lambda_t}}{1-\alpha} + N-1\right)) - \alpha \ln\frac{\alpha}{1-\alpha} \right]\\
    &= \E\left[ \ln(\frac{1}{1-\alpha} + e^{-C_{q,y}\eta t} (N-1)) - \alpha \ln\frac{\alpha}{1-\alpha} \right]\\
    &\leq \E\left[ \ln(\frac{1}{1-\alpha} + e^{-\eta t} (N-1)) - \alpha \ln\frac{\alpha}{1-\alpha} \right]\\
    &\leq  -\alpha \ln\alpha - (1-\alpha)\ln(1-\alpha) + (N-1)e^{-\eta t},
\end{align*}
where the last two inequalities are from $C_{q,y} \geq 1$ and
\begin{align*}
    \ln(\frac{1}{1-\alpha} + e^{-\eta t} (N-1)) &= \ln(\frac{1}{1-\alpha}) + \ln(1 + (1-\alpha)(N-1)e^{-\eta t}) \\
    &\leq \ln(\frac{1}{1-\alpha}) + (N-1)e^{-\eta t}
\end{align*}
due to $\ln(1+x) \leq x$ for all $x \geq -1$.

\subsubsection{Unknown $\alpha$: Proof Of Theorem~\ref{thm:FiniteSampleBayesOptimalNoisyLearning}}
\begin{algorithm}[t]    
        \caption{Finite-Sample Training Algorithm with unknown $\alpha$}
        \label{algo:training}
        \begin{algorithmic}
        \STATE {\bfseries Input:} $M$ i.i.d sentences $(z^{(m)}_{1:H+1})_{m=1,2,\dots,M}$, learning rate $\eta > 0$ \;
        \STATE Compute $M_\tau = \sum_{m=1}^M \I{z_{H+1}^{(m)} = \tau}$\;
        \STATE Compute $\hat{\alpha} = \frac{M_\tau}{M}~\text{and}~\hat{\gamma} = \ln\frac{\hat{\alpha}}{1-\hat{\alpha}}$
        \STATE Initialize $\lambda_0 = 0$\;
        \FOR{each round $t = 1, \dots, $}
            \STATE Compute $C^{(m)}_{q,y} = \sum_{h=1}^{H-1} \I{z_{h-1}^{(m)} = q, z_{h}^{(m)} = y}$\;
            \STATE Compute $L_{\mathrm{emp}} = \frac{1}{M}\left( \sum_{m=1}^M -C_{q,y}^{(m)} \lambda + \ln(\frac{\exp(C_{q,y}^{(m)}\lambda)}{1-\hat{\alpha}} + N - 1) \right) + \frac{M_{\tau}}{M}\ln\frac{\hat{\alpha}}{1-\hat{\alpha}}$ \;       
            \STATE Update $\lambda_{t} = \lambda_{t-1} - \eta \frac{d L_{\mathrm{emp}}}{d \lambda_t}$
        \ENDFOR
    \end{algorithmic}
      \end{algorithm}
% Let $M$ be the number of i.i.d. sentences in the training set. 
% We use the superscript ${(m)}$ to denote quantities that belong to the $m$-th sentence, where $m \in [M]$.
% Let $M_\tau = \sum_{m=1}^M \I{z_{H+1}^{(m)} = \tau}$ be the number of sentences where $z_{H+1}$ is $\tau$.
% Let 
% \begin{align}
%     \hat{\alpha} = \frac{M_\tau}{M}~\text{and}~\hat{\gamma} = \ln\frac{\hat{\alpha}}{1-\hat{\alpha}}
% \end{align}
% be the estimates for $\alpha$ and $\gamma = \frac{\ln{\alpha}}{1-\alpha}$, respectively.
% To avoid notational overload, we write $C^{(m)}_{q,y}$ for $C^{(m)}_{q,y^{(m)}}$.
\begin{proof}
Let $\beta^{(m)} = C^{(m)}_{q,y}\lambda + \ln\frac{\hat{\alpha}}{1-\hat{\alpha}}$.
The empirical loss is
\begin{align*}
    L_{\mathrm{emp}}(\lambda) &= \frac{1}{M} \sum_{m=1}^M -\ln \frac{\exp(\xi_{z_{H+1}}^{(m)})}{\sum_{j \in [N+1]} \exp(\xi_j^{(m)}) } \\
    &= \frac{1}{M} \sum_{m=1}^M -\xi_{z_{H+1}}^{(m)} + \ln(\sum_{j \in [N+1]} \exp(\xi_j^{(m)}) ) \\
    &= \frac{1}{M} \left( \sum_{m=1, z_{H+1}^{(m)} = \tau}^M - \xi_\tau^{(m)} + \sum_{m=1, z_{H+1}^{(m)} \neq \tau}^M - \xi_y^{(m)} + \sum_{m=1}^M \ln(\sum_{j \in [N+1]} \exp(\xi_j^{(m)}) )   \right).
\end{align*}
Using $\xi_\tau^{(m)} = \beta^{(m)} = C_{q,y}^{(m)}\lambda + \ln\frac{\hat{\alpha}}{1-\hat{\alpha}}, \xi_y^{(m)} = C_{q,y}^{(m)}\lambda$ and $\xi_j^{(m)} = 0$ for $j \notin \{q, y\}$, we obtain
\begin{align*}
    L_{\mathrm{emp}}(\lambda) &= \frac{1}{M}\left(  \sum_{m=1, z_{H+1}^{(m)} = \tau}^M \beta^{(m)} + \sum_{m=1, z_{H+1}^{(m)} \neq \tau}^M -C_{q,y}^{(m)} \lambda + \sum_{m=1}^M \ln(\exp(\beta^{(m)}) + \exp(C_{q,y}^{(m)}\lambda) + N - 1) \right) \\
    &= \frac{1}{M}\left( \sum_{m=1}^M -C_{q,y}^{(m)} \lambda + \ln(\exp(\beta^{(m)}) + \exp(C_{q,y}^{(m)}\lambda) + N - 1) \right) + \frac{M_{\tau}}{M}\ln\frac{\hat{\alpha}}{1-\hat{\alpha}} \\
    &= \frac{1}{M}\left( \sum_{m=1}^M -C_{q,y}^{(m)} \lambda + \ln(\frac{\exp(C_{q,y}^{(m)}\lambda)}{1-\hat{\alpha}} + N - 1) \right) + \frac{M_{\tau}}{M}\ln\frac{\hat{\alpha}}{1-\hat{\alpha}}.
\end{align*}
By Lemma~\ref{lemma:negativeDerivative}, we have $\frac{d L_{\mathrm{emp}}}{d\lambda} < 0$.
As a result, running normalized gradient descent on $L_{\mathrm{emp}}$ gives $\lambda_t = \eta t$.
The population loss is
\begin{align*}
    L_{\mathrm{pop}}(\lambda_t, \hat{\gamma}) &=  \E\left[(1-\alpha)\left(-\ln\frac{e^{C_{q,y}\lambda_t}}{e^{C_{q,y}\lambda_t} + e^{C_{q,y}\lambda_t + \hat{\gamma_t}} + N-1}\right) + \alpha \left(-\ln\frac{e^{C_{q,y}\lambda_t + \hat{\gamma_t}}}{e^{C_{q,y}\lambda_t} + e^{C\lambda + \hat{\gamma_t}} + N-1}\right) \right]\\
    &= \E\left[ -C_{q,y}\lambda_t + \ln(e^{C_{q,y}\lambda_t} + e^{C_{q,y}\lambda_t + \hat{\gamma_t}} + N-1) - \alpha \hat{\gamma}\right] \\
    &= \E\left[ -C_{q,y}\lambda_t + \ln(\frac{e^{C_{q,y}\lambda_t}}{1-\hat{\alpha}} + N-1)\right] - \alpha \hat{\gamma} \\
    &=  \E\left[ \ln(\frac{1}{1-\hat{\alpha}} + e^{-C_{q,y}\eta t} (N-1))\right] - \alpha \ln\frac{\hat{\alpha}}{1-\hat{\alpha}} \\
    &\leq \ln(\frac{1}{1-\hat{\alpha}} + e^{-\eta t} (N-1)) - \alpha \ln\frac{\hat{\alpha}}{1-\hat{\alpha}} \\
    &\leq -\alpha \ln\hat{\alpha} - (1-\alpha)\ln(1-\hat{\alpha}) + (N-1)e^{-\eta t} \\
    &= -\alpha \ln\alpha - (1-\alpha)\ln(1-\alpha) + \infdiv{\alpha}{\hat{\alpha}} + (N-1)e^{-\eta t} \\
    &= L_{\mathrm{Bayes}} + \infdiv{\alpha}{\hat{\alpha}} + (N-1)e^{-\eta t},
\end{align*}
where $\infdiv{\alpha}{\hat{\alpha}}$ is the Kullback-Leibler divergence between two Bernoulli distributions $\mathrm{Ber}(\alpha)$ and $\mathrm{Ber}(\hat{\alpha})$.
By Lemma~\ref{lemma:estimateKL}, we have with probability at least $1-\delta$,
\begin{align*}
    L_{\mathrm{pop}}(\lambda_t, \hat{\gamma}) \leq L_{\mathrm{Bayes}} + \frac{1}{\min(\alpha, 1-\alpha) -\sqrtfrac{\ln(2/\delta)}{2M}}\frac{\ln(2/\delta)}{2M} + (N-1)e^{-\eta t}.
\end{align*}
% Whenever $\alpha$ is not too close to $0$ nor $1$, and $\delta$ is not too close to $0$, Lemma~\ref{lemma:estimateKL} also suggests that
% \begin{align*}
%     L_{\mathrm{pop}}(\lambda_t, \hat{\gamma}) \leq L_{\mathrm{Bayes}} + O\left(\sqrtfrac{\ln(2/\delta)}{2M}\right) + (N-1)e^{-\eta t}
% \end{align*}
% for any sufficiently large $M$. For example, $\alpha \in [0.2, 0.8], \delta \geq 0.01$ and $M \geq 400$.
\end{proof}

\subsection{Proof Of Theorem~\ref{thm:generalizeUnseenYNoisy}}
\label{sec:UnseenNoisy}
\begin{proof} 
By Equation~\eqref{eq:xijLinearAttNoisy}, we have 
\begin{align}
    \xi_j = \I{j = y_{\mathrm{test}} \vee j = \tau}(\lambda_t C_{q,y_{\mathrm{test}}} + \I{j = \tau} \hat{\gamma}).
\end{align}
Using $\hat{\gamma} = \ln\frac{\hat{\alpha}}{1-\hat{\alpha}}$, we obtain
\begin{align}
    \Pr[z_{H+1} = y_{\mathrm{test}} \mid \lambda_t, \hat{\gamma}] &\eqdef \frac{\exp(\xi_{y_{\mathrm{test}}})}{\sum_{j \in [N+1]}\exp(\xi_j)} \\
    &= \frac{\exp(\lambda_t C_{q,y_{\mathrm{test}}})}{\exp(\lambda_t C_{q,y_{\mathrm{test}}}) + \exp(\lambda_t C_{q,y_{\mathrm{test}}} + \hat{\gamma}) + N - 1}\\
    &= \frac{\exp(C_{q,y}\lambda_{t})}{\frac{\exp(C_{q,y_{\mathrm{test}}}\lambda_{t})}{1-\hat{\alpha}} + N - 1} \\
    &= (1-\hat{\alpha})\frac{1}{1 + (N-1)(1-\hat{\alpha})e^{-C_{q, y_{\mathrm{test}}} \lambda_t}} \\
    &= (1-\hat{\alpha})\frac{1}{1 + (N-1)(1-\hat{\alpha})e^{-C_{q, y_{\mathrm{test}}} \eta t}}
    % &= 1 - \alpha + O\left(\sqrtfrac{\ln(1/\delta)}{M} + N^2 e^{-2\eta t}\right)
\end{align}
For large $t$, the quantity $e^{-C_{q, y_{\mathrm{test}}} \eta t}$ is close to $0$.
By Taylor's theorem, we have $\frac{1}{1+x} = 1 - x + O(x^2)$ for small $x$. Therefore, with probability at least $1-\delta$,
\begin{align}
    \Pr[z_{H+1} = y_{\mathrm{test}}  \mid \lambda_t, \hat{\gamma}] &= ( 1 - \hat{\alpha}) \left( 1 - (N-1)(1-\hat{\alpha})e^{-C_{q, y_{\mathrm{test}}} \eta t}  + O(N^2e^{-2C_{q, y_{\mathrm{test}}} \eta t})\right) \\
    &= 1 - \hat{\alpha} + O(N^2e^{-2\eta t}) \\
    &= 1 - \alpha + O\left(\sqrtfrac{\ln(1/\delta)}{M} + N^2 e^{-2\eta t}\right),
\end{align}
where the last equality is from $\hat{\alpha} = \alpha - O(\sqrtfrac{\ln(1/\delta)}{M})$ with probability at least $1-\delta$.

The proof for $\Pr[z_{H+1} = \tau \mid \lambda_t, \hat{\gamma}]$ follows similarly.
\end{proof}

\subsection{Proof Of Theorem~\ref{thm:flipConditionGuarantee}}
\begin{proof}
    Recall that $\hat{\gamma} = \ln\frac{1-\hat{\alpha}}{\hat{\alpha}}$.
    By Hoeffding's inequality, we have $\abs{\alpha - \hat{\alpha}} \leq \sqrtfrac{\ln(2/\delta)}{2M}$ with probability at least $1-\delta$. 
    Hence, $t \geq \max(1, \frac{1}{\eta}\abs{\ln(1-\alpha + \sqrtfrac{\ln(2/\delta)}{2M}) - \ln(\alpha - \sqrtfrac{\ln(2/\delta)}{2M})}$ implies that $t \geq \max(1, -\hat{\gamma}/\eta)$.
    
    By Equation~\eqref{eq:xiAjNoisy}, we have 
    \begin{align}
        \xi_{A, y} = \lambda_t C_{q,y} = \eta t C_{q,y} > 0 = \max_{j \neq y} \xi_{A,j}.
    \end{align}
    By Equation~\eqref{eq:xiFjNoisy}, we have
    \begin{align}
        \xi_{F,\tau} = \lambda_t C_{q,y} + \hat{\gamma} \geq \eta t + \hat{\gamma} > 0 = \max_{j \neq \tau} \xi_{F,j}
    \end{align}
     since $C_{q,y} \geq 1$ and $\lambda_t = \eta t > \max(0, -\hat{\gamma})$ for $t \geq \max(1, -\frac{\hat{\gamma}}{\eta})$.
     We conclude that the condition~\eqref{eq:flipCondition} is satisfied with probability at least $1-\delta$.
\end{proof}

\section{Technical Lemmas}
\begin{lemma}
    For any $N > 1, \alpha \in [0,1), C > 0$, the derivative of the function 
    \begin{align*}
        f(x) = -Cx + \ln(\frac{\exp(Cx)}{1-\alpha} + N-1)
    \end{align*}
    is negative for all $x \in \R$.
    \label{lemma:negativeDerivative}
\end{lemma}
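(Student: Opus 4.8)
The statement is a one-line differentiation exercise, so the plan is simply to compute $f'(x)$ explicitly, simplify, and read off the sign. First I would apply the chain rule to the logarithmic term: writing $g(x) = \frac{\exp(Cx)}{1-\alpha} + N - 1$, we have $g'(x) = \frac{C\exp(Cx)}{1-\alpha}$, so
\begin{align*}
    f'(x) = -C + \frac{g'(x)}{g(x)} = -C + \frac{\frac{C\exp(Cx)}{1-\alpha}}{\frac{\exp(Cx)}{1-\alpha} + N - 1}.
\end{align*}

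Next I would factor out $C$ and combine over a common denominator:
\begin{align*}
    f'(x) = C\left( \frac{\frac{\exp(Cx)}{1-\alpha}}{\frac{\exp(Cx)}{1-\alpha} + N - 1} - 1 \right) = \frac{-C(N-1)}{\frac{\exp(Cx)}{1-\alpha} + N - 1}.
\end{align*}
It then remains only to check that the right-hand side is strictly negative. The numerator $-C(N-1)$ is negative because $C > 0$ and $N > 1$; the denominator is strictly positive because $\exp(Cx) > 0$, $1-\alpha > 0$ (as $\alpha \in [0,1)$), and $N - 1 > 0$. Hence $f'(x) < 0$ for every $x \in \R$.

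There is essentially no obstacle here: the only points requiring care are the hypotheses that pin down the signs, namely $\alpha < 1$ (so $1-\alpha > 0$, making $g(x)$ well-defined and positive) and $N > 1$ (so $N - 1 > 0$, which is what makes the derivative strictly rather than weakly negative). I would state both explicitly when concluding. Note the hypothesis $\alpha \ge 0$ is not actually needed for this lemma, but it is harmless and matches the usage in Theorem~\ref{thm:FiniteSampleBayesOptimalNoisyLearning} and the preceding computations where $\alpha \in [0,1)$ is the standing assumption on the noise level.
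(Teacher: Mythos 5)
Your proof is correct and follows essentially the same route as the paper: compute $f'(x)$ by the chain rule, combine over a common denominator to get $\frac{C(1-N)}{\exp(Cx)/(1-\alpha) + N-1}$, and observe this is strictly negative under the stated hypotheses. The only difference is that you spell out the sign checks a bit more explicitly, which is fine.
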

\begin{proof}
    We have $\frac{df}{dx} = -C + \frac{\frac{Ce^{Cx}}{1-\alpha}}{\frac{\exp(Cx)}{1-\alpha} + N-1} = \frac{C(1-N)}{\frac{\exp(Cx)}{1-\alpha} + N-1} < 0.$
\end{proof}

\begin{lemma}
    Let $\alpha \in (0,1)$. Let $M$ i.i.d samples $(X_i)_{i \in [M]}$ be drawn from $X_i \sim \mathrm{Ber}(\alpha)$. Let $\hat{\alpha} = \frac{1}{M}\sum_{i=1}^M X_i$. With probability at least $1-\delta$, we have
    \begin{align*}
        \infdiv{\alpha}{\hat{\alpha}} &\leq 
        \frac{1}{\min(\alpha, 1-\alpha) -\sqrtfrac{\ln(2/\delta)}{2M}} \frac{\ln(2/\delta)}{2M}.
        % \leq \frac{1}{\min(\alpha, 1-\alpha)}\frac{\ln(2/\delta)}{2M}.
    \end{align*}
    % In particular, for $\alpha \in [0.2, 0.8]$ and $\delta \geq 0.01$, we obtain $\infdiv{\alpha}{\hat{\alpha}} \leq \sqrtfrac{\ln(2/\delta)}{2M}$ for all $M \geq 400$.
    \label{lemma:estimateKL}
\end{lemma}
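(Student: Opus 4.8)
The plan is to separate the probabilistic part (controlling $|\alpha - \hat\alpha|$) from a purely analytic part (bounding $\infdiv{\alpha}{\hat\alpha}$ by a multiple of $(\alpha-\hat\alpha)^2$ on a neighbourhood of $\alpha$), and then combine the two.

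For the analytic part, I would view $g(t) \eqdef \infdiv{\alpha}{t} = \alpha\ln\frac{\alpha}{t} + (1-\alpha)\ln\frac{1-\alpha}{1-t}$ as a function of $t \in (0,1)$. Differentiating gives $g'(t) = \frac{t-\alpha}{t(1-t)}$, and $g(\alpha) = 0$, so $\infdiv{\alpha}{\hat\alpha} = \int_{\alpha}^{\hat\alpha} \frac{t-\alpha}{t(1-t)}\,dt$. Since the integrand keeps the sign of $t-\alpha$ on the whole interval $I$ between $\alpha$ and $\hat\alpha$, this equals $\int_I \frac{|t-\alpha|}{t(1-t)}\,dt \le \frac{1}{c}\int_I |t-\alpha|\,dt = \frac{(\alpha-\hat\alpha)^2}{2c}$, where $c \eqdef \min_{t \in I} t(1-t)$. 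It is important here to keep $|t-\alpha|$ inside the integral — integrating it contributes the factor $\tfrac12$ that will ultimately produce the exact constant.

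For the probabilistic part, I would apply Hoeffding's inequality to the bounded i.i.d.\ summands $X_i \in \{0,1\}$: $\Pr\!\big[|\hat\alpha - \alpha| > \varepsilon\big] \le 2e^{-2M\varepsilon^2}$. Taking $\varepsilon = \sqrtfrac{\ln(2/\delta)}{2M}$ makes the right-hand side at most $\delta$, so with probability at least $1-\delta$ we have $|\hat\alpha - \alpha| \le \varepsilon$. On this event the interval $I$ lies inside $[\alpha-\varepsilon,\alpha+\varepsilon]\cap[0,1]$, hence every $t \in I$ has $\min(t,1-t) \ge \min(\alpha,1-\alpha) - \varepsilon$ while $\max(t,1-t)\ge \tfrac12$, so $t(1-t) \ge \tfrac12(\min(\alpha,1-\alpha)-\varepsilon)$ and therefore $c \ge \tfrac12(\min(\alpha,1-\alpha)-\varepsilon)$. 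Plugging this and $(\alpha-\hat\alpha)^2\le\varepsilon^2$ into the analytic bound gives $\infdiv{\alpha}{\hat\alpha} \le \frac{\varepsilon^2}{\min(\alpha,1-\alpha)-\varepsilon}$; recalling $\varepsilon^2 = \frac{\ln(2/\delta)}{2M}$ and $\varepsilon = \sqrtfrac{\ln(2/\delta)}{2M}$ recovers exactly the stated bound.

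There is no genuinely hard step here; the only things to be careful about are (i) the two factors of $\tfrac12$ — one gained from $\int_I|t-\alpha|\,dt$, one lost in $t(1-t)\ge\tfrac12\min(t,1-t)$ — which must cancel to get the precise constant rather than a factor-$2$-looser version, and (ii) the implicit regime $\varepsilon < \min(\alpha,1-\alpha)$: when it fails the right-hand side of the claim is non-positive and the statement is vacuously true, and on the event $|\hat\alpha-\alpha|\le\varepsilon$ this same condition also guarantees $\hat\alpha\in(0,1)$, so $\infdiv{\alpha}{\hat\alpha}$ is finite.
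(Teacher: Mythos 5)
Your proof is correct, and it takes a genuinely different route from the paper's. The paper invokes the reverse Pinsker inequality of Sason (2015, Theorem 3) as a black box, which for Bernoulli measures it writes as $\infdiv{\alpha}{\hat\alpha} \le \frac{2(\alpha-\hat\alpha)^2}{\min(\hat\alpha,1-\hat\alpha)}$, and then combines this with the same two-sided Hoeffding bound you use. You instead compute $\frac{d}{dt}\infdiv{\alpha}{t} = \frac{t-\alpha}{t(1-t)}$, integrate from $\alpha$ to $\hat\alpha$, and lower-bound the integrand's denominator pointwise on the interval $I$ between them, additionally exploiting the universal inequality $\max(t,1-t)\ge\tfrac12$. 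This direct computation yields a factor-of-two sharper analytic bound, $\infdiv{\alpha}{\hat\alpha}\le\frac{(\alpha-\hat\alpha)^2}{\min(\alpha,1-\alpha)-\varepsilon}$ rather than $\frac{2(\alpha-\hat\alpha)^2}{\min(\alpha,1-\alpha)-\varepsilon}$, and that factor of two is precisely what is needed: feeding $(\alpha-\hat\alpha)^2\le\frac{\ln(2/\delta)}{2M}$ into the paper's reverse-Pinsker step gives a numerator of $\frac{\ln(2/\delta)}{M}$, twice the lemma's stated $\frac{\ln(2/\delta)}{2M}$. Your bookkeeping — gaining a $\tfrac12$ from $\int_I|t-\alpha|\,dt=\tfrac12(\alpha-\hat\alpha)^2$ and spending one on $t(1-t)\ge\tfrac12\min(t,1-t)$ — recovers the stated constant exactly, and your remark about the degenerate regime $\varepsilon\ge\min(\alpha,1-\alpha)$ (where the bound is vacuous and, complementarily, the good event forces $\hat\alpha\in(0,1)$) closes the one genuine loophole. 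The factor-of-two discrepancy is cosmetic for the paper's downstream $O(\cdot)$ statements, but the lemma as literally stated is established by your computation; the paper's own chain of inequalities, read at face value, proves the bound with $\frac{\ln(2/\delta)}{M}$ in the numerator instead.
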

\begin{proof}
    By the reverse Pinsker's inequality~\citep[Theorem 3]{sason2015reverse},  we have 
    \begin{align*}
        \infdiv{\alpha}{\hat{\alpha}} \leq \frac{2(\alpha-\hat{\alpha})^2}{\min(\hat{\alpha}, 1-\hat{\alpha})}.
    \end{align*}
    By Hoeffding's inequality, the event $\abs{\alpha - \hat{\alpha}} \leq \sqrtfrac{\ln(2/\delta)}{2M}$ holds with probability at least $1-\delta$.
    Under this event, we have $(\alpha - \hat{\alpha})^2 \leq \frac{\ln(2/\delta)}{2M}$.
    Also, $\hat{\alpha} \geq \alpha -  \sqrtfrac{\ln(2/\delta)}{2M}$ and $1 - \hat{\alpha} \geq 1-\alpha -  \sqrtfrac{\ln(2/\delta)}{2M}$. 
    Hence, $\min(\hat{\alpha}, 1 - \hat{\alpha}) \geq \min(\alpha, 1-\alpha) -\sqrtfrac{\ln(2/\delta)}{2M}$.
    The statement follows immediately.

    % The second statement follows from the fact that $\min(\alpha, 1-\alpha) \geq 2\sqrtfrac{\ln(2/\delta)}{2M}$ for $\alpha, \delta$ and $M$ in the defined ranges.
\end{proof}

% \begin{lemma}
%     Let $\alpha \in (0,1)$ be a constant. Let $\gamma = \ln\frac{\alpha}{1-\alpha}$. Let $f(x): \R \mapsto \R, h(x): \R \mapsto \R, u(x): \R \mapsto \R$ and $g(x): \R \mapsto \R$ be continuous functions satisfying $\lim_{x \to \infty} f(x) < \infty$ and $\lim_{x \to \infty} h(x) = \lim_{x \to \infty} u(x) = \lim_{x \to \infty} g(x) = 0$.
%     We have
%     \begin{align*}
%         \lim_{x \to \infty} (1-\alpha)\left(-\ln\frac{e^{f(x) + h(x)}}{e^{f(x) + h(x)} + e^{f(x) + u(x) + \gamma} + g(x)} \right) = -(1-\alpha)\ln(1-\alpha).
%     \end{align*}
%     and
%     \begin{align*}
%         \lim_{x \to \infty} \alpha\left(-\ln\frac{e^{f(x)+ u(x) + \gamma}}{e^{f(x) + h(x)} + e^{f(x) + u(x) + \gamma} + g(x)} \right) = -\alpha\ln(\alpha).
%     \end{align*}
%     \label{lemma:limitEntropy}
% \end{lemma}
% \begin{proof}
% % The statements follow directly from taking the limits inside the logarthimic expressions.
%     The first equality is from 
%     \begin{align*}
%         \lim_{x \to \infty} \frac{e^{f(x) + h(x)}}{e^{f(x) + h(x)} + e^{f(x) + u(x) + \gamma} + g(x)} &= \lim_{x \to \infty} \frac{1}{1 + e^{u(x) - h(x) + \gamma} + \frac{g(x)}{e^{f(x) + h(x)}}} \\
%         &= \frac{1}{1 + e^{\gamma}} = 1-\alpha.
%     \end{align*}
%     Similarly, the second equality is from
%     \begin{align*}
%         \lim_{x \to \infty} \frac{e^{f(x)+ u(x)+ \gamma}}{e^{f(x) + h(x)} + e^{f(x) + u(x) + \gamma} + g(x)} &= \lim_{x \to \infty} \frac{e^\gamma}{1 + e^{\gamma} + \frac{g(x)}{e^{f(x) + h(x)}}} \\
%         &= \frac{e^\gamma}{1+e^\gamma} = \alpha.
%     \end{align*}
% \end{proof}

\section{Further Details on Experiments}
\label{sec:expDetails}

\subsection{Additional Details on the Experimental Setup}

\textbf{Hyperparameters}
The majority of our experiments are repeated five times with five random seeds from $0$ to $5$. However, possibly due to the large number of iterations and large size of the finite dataset, no significant differences are observed between different random seeds.

We also experimented with several different values of learning rates ranging from $0.1$ to $0.8$. Consistent with the theoretical findings, we find that the more reparamterized a model is, the less sensitive it is to changes in the learning rate. All of our results are reported for learning rates set at either $0.1, 0.2$ or $0.8$.

In finite-sample experiments, we train the models on a dataset of size $M = 2048$ samples and then compute the models' population losses and unseen output test losses.
To calculate the population loss, we use a freshly sampled dataset of size $10M = 20480$ samples. 
To calculate the unseen output test losses, we use a freshly sampled dataset of size $512$ samples, where $y \in [1, 4]$ is replaced by a randomly chosen $y_{\mathrm{test}} \in [5, 59]$. 

\textbf{Computing Resources}
The experiments are implemented in PyTorch.
All experiments are run on a single-CPU computer. The processor is \texttt{11th Gen Intel(R) Core i7-11700K} with $32$ GB RAM. Training all models simultaneously takes about $30$ minutes from start to finish.

\begin{figure}[t]
    \centering
    % Row of figures
    \begin{subfigure}[b]{0.45\textwidth}
        \centering
        \includegraphics[width=\linewidth]{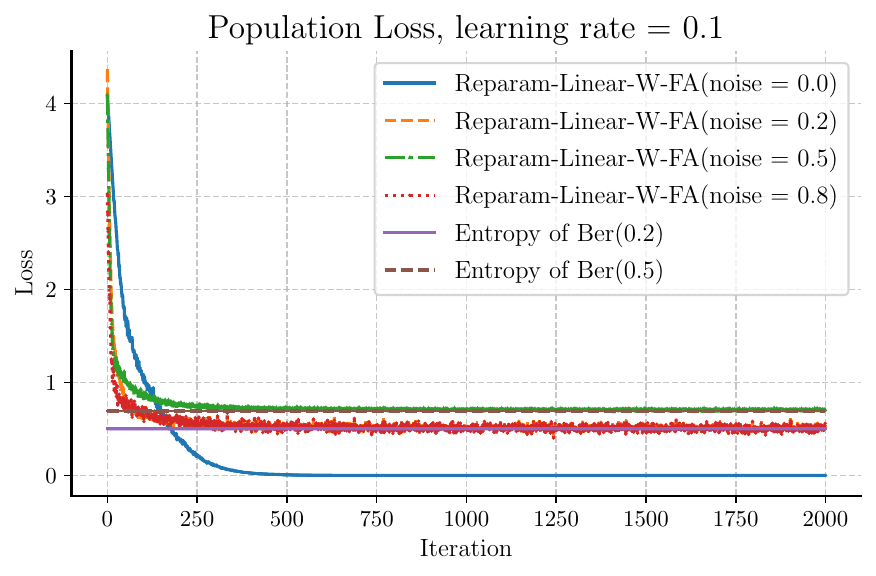}
        \caption{}
        % \label{fig:2a}
    \end{subfigure}
    \hfill % Maximize horizontal spacing
    \begin{subfigure}[b]{0.45\textwidth}
        \centering
        \includegraphics[width=\linewidth]{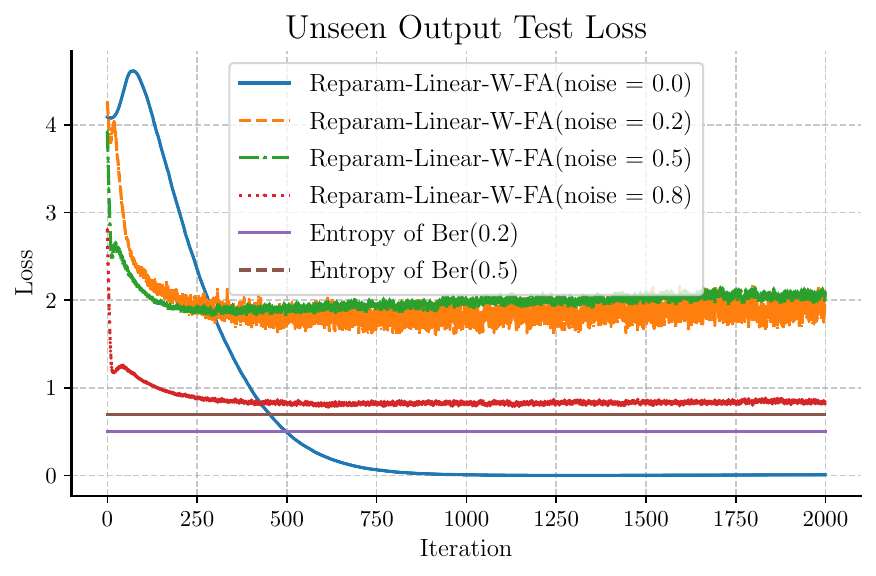}
       \caption{}
        % \label{fig:2b}
    \end{subfigure}

    % Row of figures
    \begin{subfigure}[b]{0.45\textwidth}
        \centering
        \includegraphics[width=\linewidth]{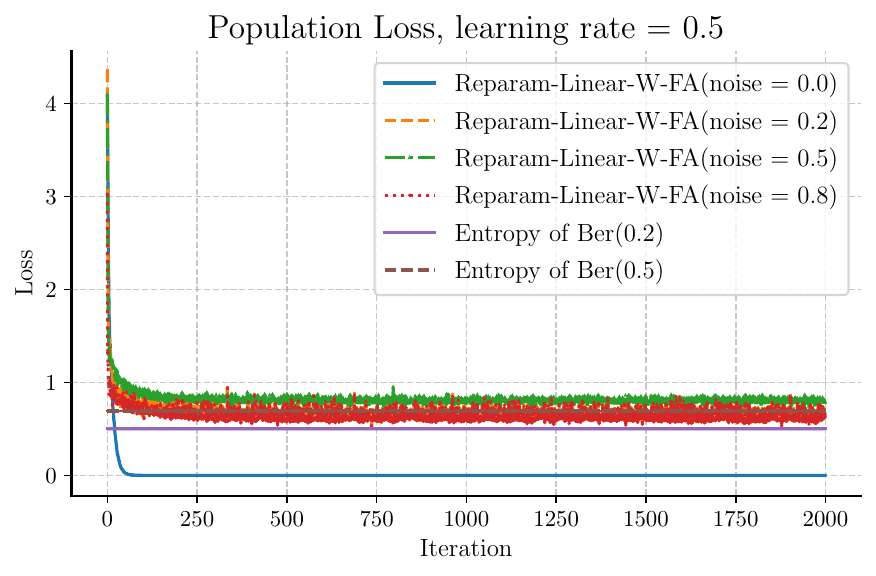}
        \caption{}
        % \label{fig:2a}
    \end{subfigure}
    \hfill % Maximize horizontal spacing
    \begin{subfigure}[b]{0.45\textwidth}
        \centering
        \includegraphics[width=\linewidth]{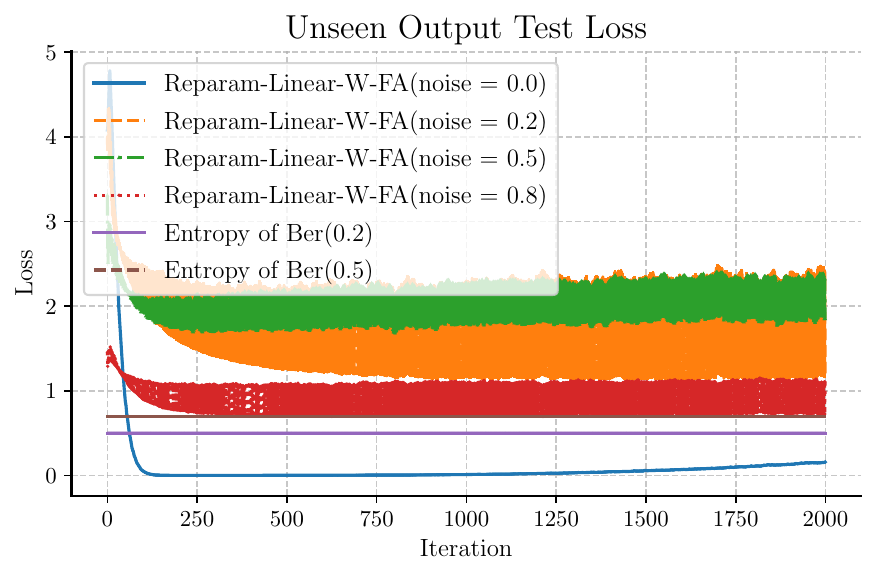}
       \caption{}
        % \label{fig:2b}
    \end{subfigure}
    \caption{Population and Unseen Output Test Losses of \texttt{Reparam-Linear-W} with $\eta  =0.1$ (first row), $\eta = 0.5$ (second row). Population losses converge with limited generalization to unseen output words.}
    \label{fig:eachmodelReparamWLinearFA}
\end{figure}

\subsection{Attention Layer Learns to Predict Output Tokens while Feed-Forward Layer Learns to Predict Noise Token}
To measure the extent to which we can separate the learning functionality of the attention layer and the feed-forward layer, we train three models \texttt{Origin-Linear, Reparam-Linear} and~\texttt{Reparam-Linear-W}, and record the logits of each layer on the output tokens, the noise tokens and the maximum values in the logits of the two layers on all three noisy tasks with $\alpha = 0.2, 0.5$ and $0.8$. 
We use $\eta = 0.1$ in all experiments.
The results are reported in Figures~\ref{fig:layerwiseOriginLinearFA} to~\ref{fig:layerwiseReparamLinearFA}.

We say that the attention layer and the feed-forward layer learns to predict the output and noise tokens, respectively, if $\xi_{A,y} = \max_{j} \xi_{A,j}$ and $\xi_{F,\tau} = \max_j \xi_{F,\tau}$.
It can be observed that all three models exhibit some layer-specific learning mechanism, including the original model where all three matrices $\mV, \mW$ and $\mF$ are trained from scratch without reparameterization.
However, depending on the noise level, at least one of the two layers in the original model do not fully specialize in either the output nor the noise tokens. 
For $\alpha \leq 0.5$, Figures~\ref{fig:10a} and~\ref{fig:10d} show that the attention layer in~\texttt{Origin-Linear} learns to predict output tokens perfectly, however the feed-forward layer does not always predict $\tau$.
At $\alpha = 0.8$, the feed-forward layer succeeds in learning to predict $\tau$ but the attention layer fails to focus entirely on the output tokens.

In contrast, the fully-reparameterized model~\texttt{Reparam-Linear} exhibits perfect separation in the functionality of the two layers, which verified our Theorem~\ref{thm:flipConditionGuarantee}. 
The same phenomenon is also observed in~\texttt{Reparam-Linear-W}, which suggests that reparameterizing $\mF$ is sufficient to force the two layers to be biased towards two different types of tokens.

\begin{figure}[t]
    \centering
    % Row of figures
    \begin{subfigure}[b]{0.32\textwidth}
        \centering
        \includegraphics[width=\linewidth]{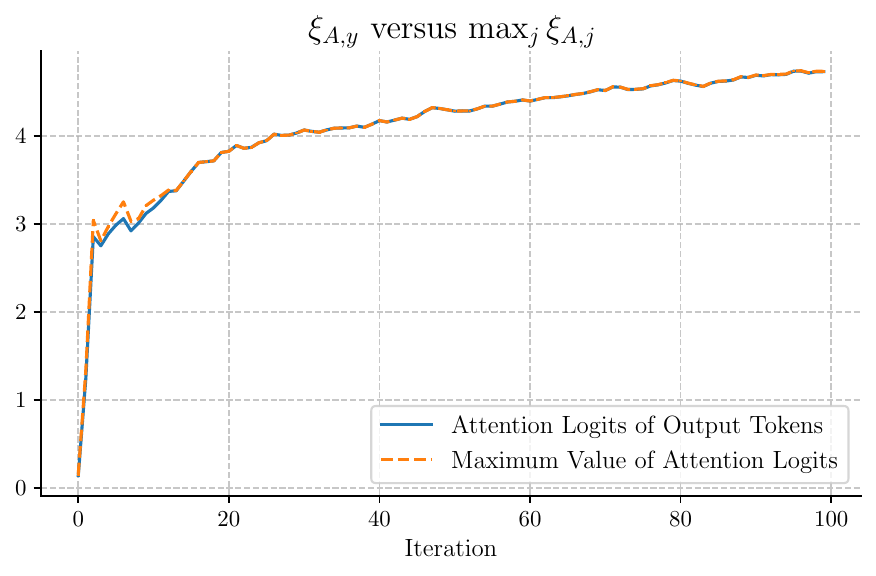}
        \caption{}
        \label{fig:10a}
    \end{subfigure}
    \hfill % Maximize horizontal spacing
    \begin{subfigure}[b]{0.32\textwidth}
        \centering
        \includegraphics[width=\linewidth]{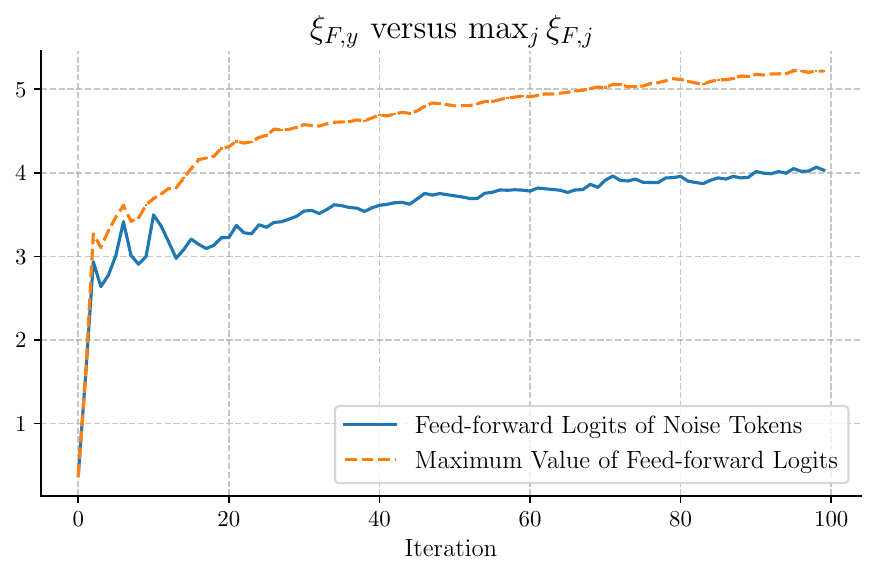}
       \caption{}
        % \label{fig:2b}
    \end{subfigure}
    \hfill
    \begin{subfigure}[b]{0.32\textwidth}
        \centering
        \includegraphics[width=\linewidth]{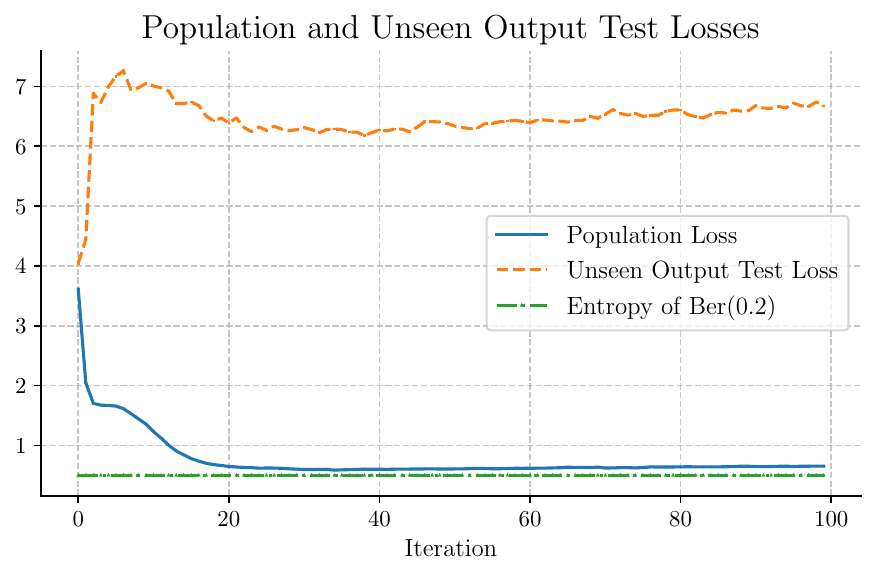}
        \caption{}
        % \label{fig:2c}
    \end{subfigure}    

    % Row of figures
    \begin{subfigure}[b]{0.32\textwidth}
        \centering
        \includegraphics[width=\linewidth]{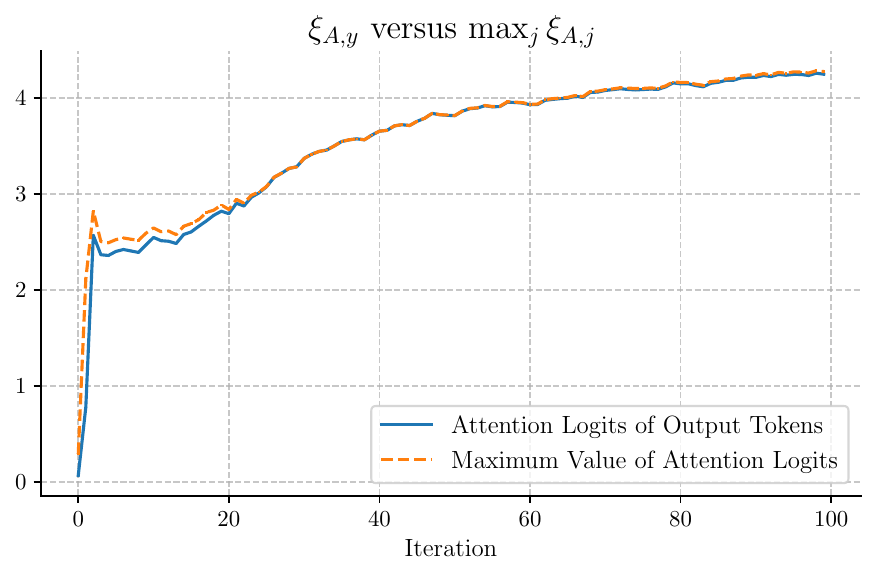}
        \caption{}
        \label{fig:10d}
    \end{subfigure}
    \hfill % Maximize horizontal spacing
    \begin{subfigure}[b]{0.32\textwidth}
        \centering
        \includegraphics[width=\linewidth]{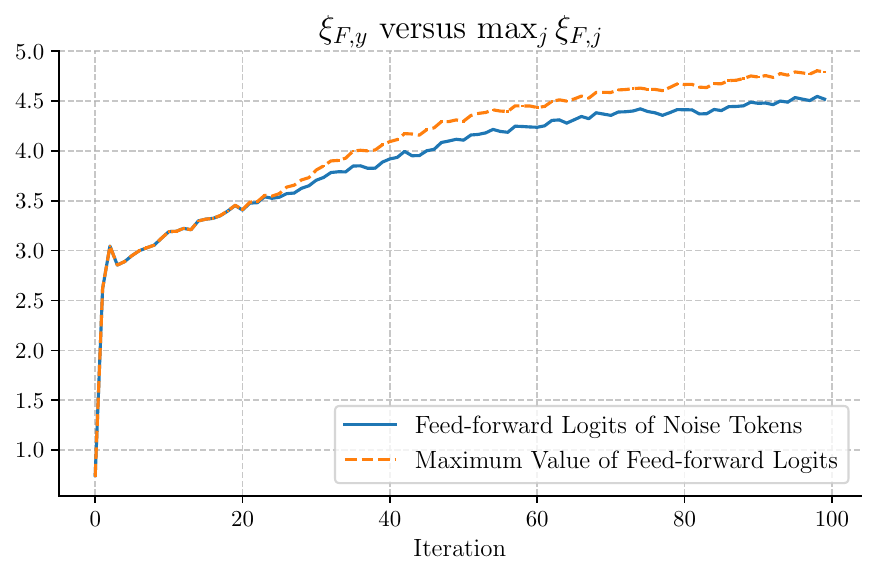}
       \caption{}
        % \label{fig:2b}
    \end{subfigure}
    \hfill
    \begin{subfigure}[b]{0.32\textwidth}
        \centering
        \includegraphics[width=\linewidth]{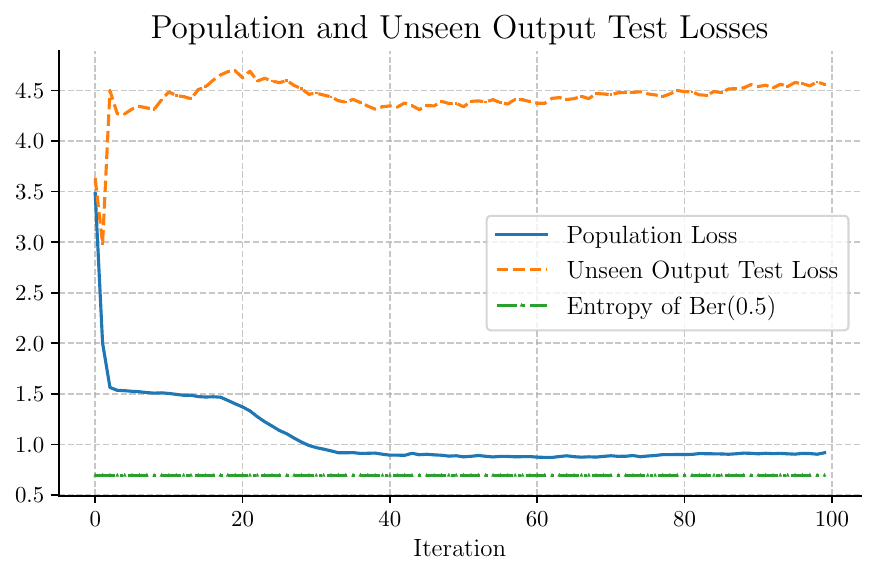}
        \caption{}
        % \label{fig:2c}
    \end{subfigure}

    % Row of figures
    \begin{subfigure}[b]{0.32\textwidth}
        \centering
        \includegraphics[width=\linewidth]{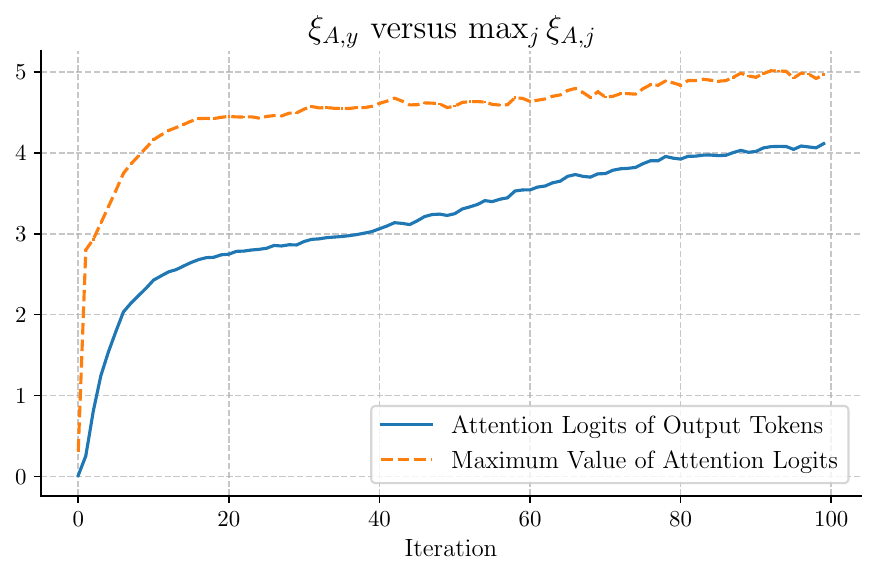}
        \caption{}
        % \label{fig:2a}
    \end{subfigure}
    \hfill % Maximize horizontal spacing
    \begin{subfigure}[b]{0.32\textwidth}
        \centering
        \includegraphics[width=\linewidth]{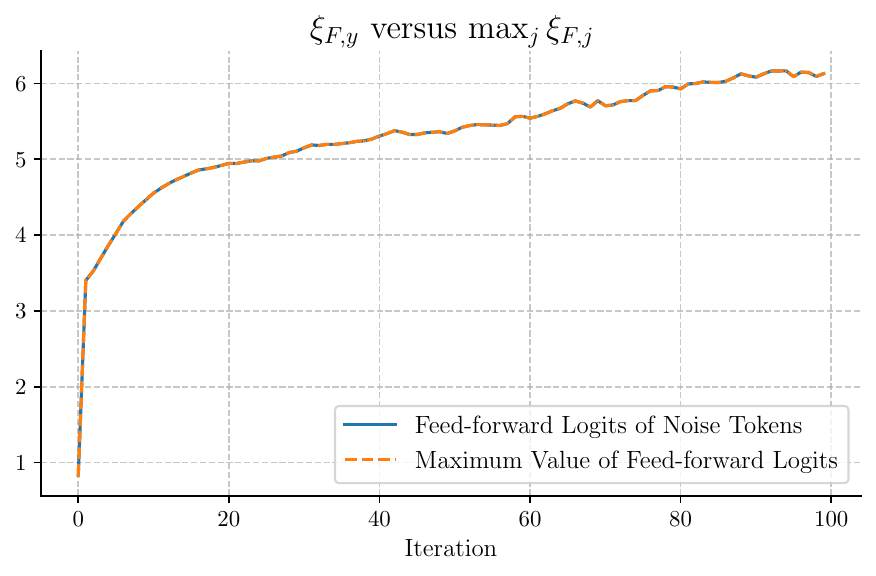}
       \caption{}
        % \label{fig:2b}
    \end{subfigure}
    \hfill
    \begin{subfigure}[b]{0.32\textwidth}
        \centering
        \includegraphics[width=\linewidth]{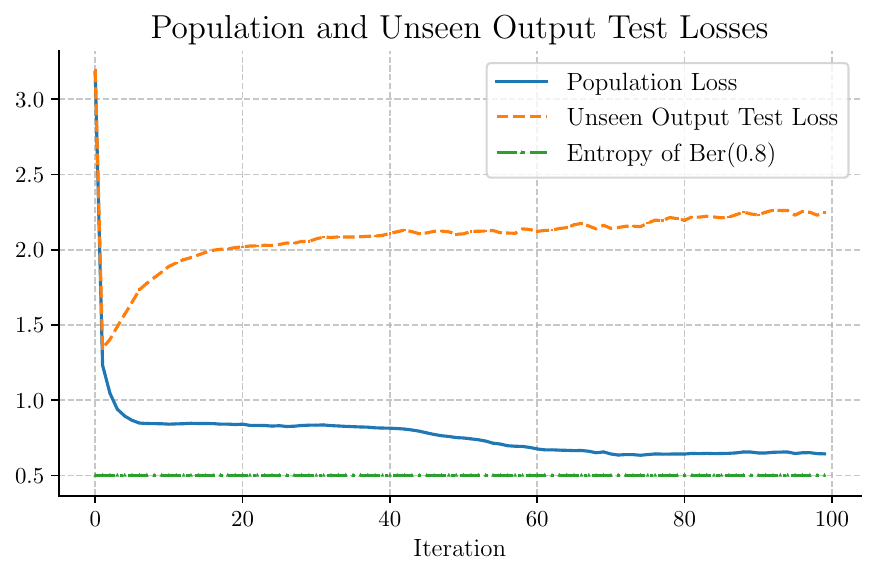}
        \caption{}
        % \label{fig:2c}
    \end{subfigure}
    
    \caption{\texttt{Origin-Linear} with $\alpha  =0.2$ (first row), $\alpha = 0.5$ (second row) and $\alpha = 0.8$ (third row). The learning rate is $\eta = 0.1$.}
    \label{fig:layerwiseOriginLinearFA}
\end{figure}
%%%%%%%%%%%%%%%%%%%%%%%%%%
\begin{figure}[t]
    \centering
    % Row of figures
    \begin{subfigure}[b]{0.32\textwidth}
        \centering
        \includegraphics[width=\linewidth]{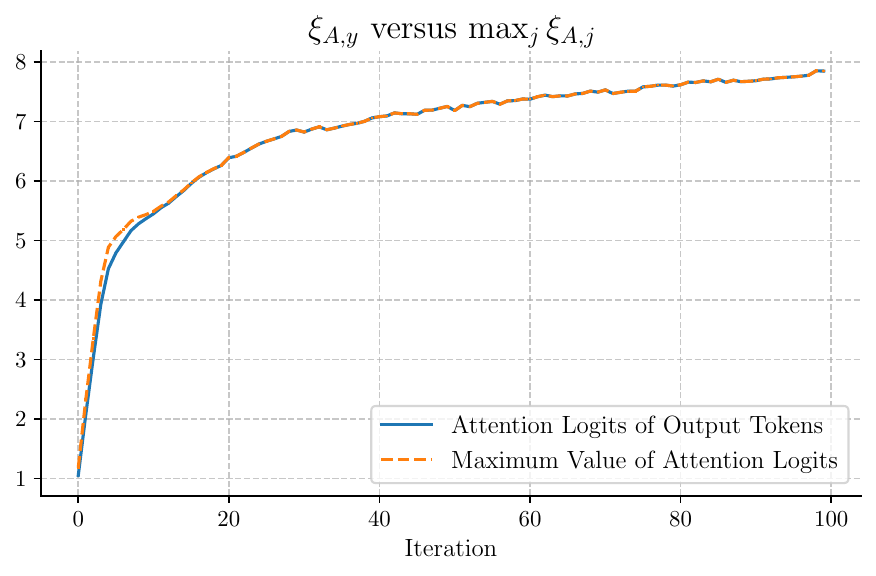}
        \caption{}
        % \label{fig:2a}
    \end{subfigure}
    \hfill % Maximize horizontal spacing
    \begin{subfigure}[b]{0.32\textwidth}
        \centering
        \includegraphics[width=\linewidth]{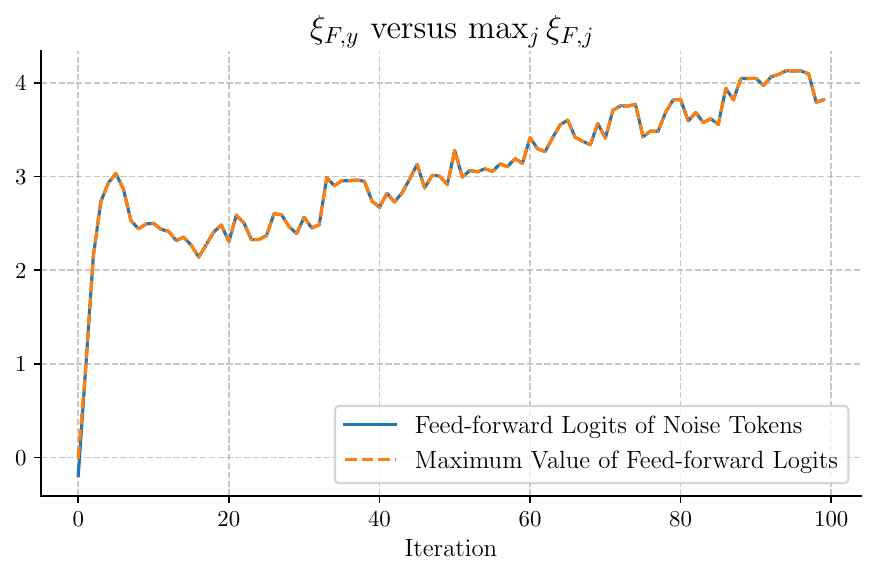}
       \caption{}
        % \label{fig:2b}
    \end{subfigure}
    \hfill
    \begin{subfigure}[b]{0.32\textwidth}
        \centering
        \includegraphics[width=\linewidth]{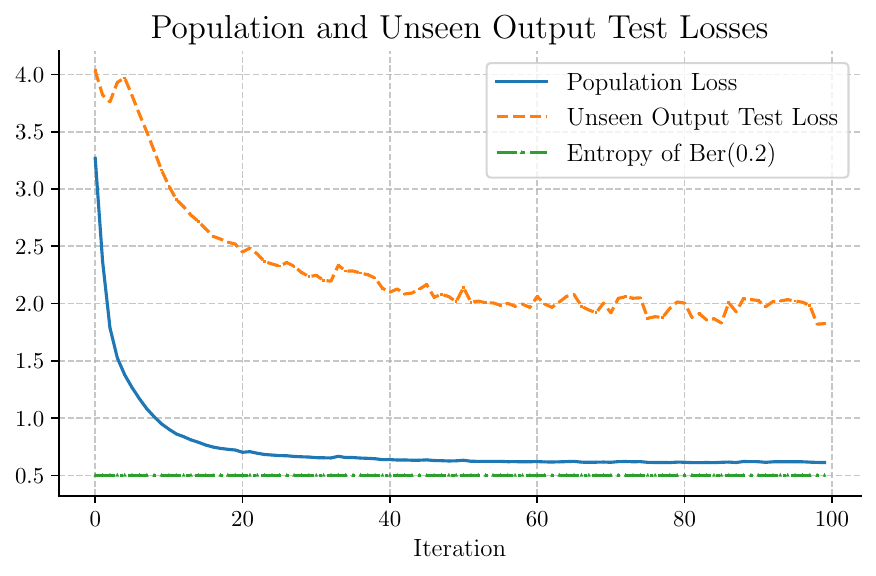}
        \caption{}
        % \label{fig:2c}
    \end{subfigure}    

    % Row of figures
    \begin{subfigure}[b]{0.32\textwidth}
        \centering
        \includegraphics[width=\linewidth]{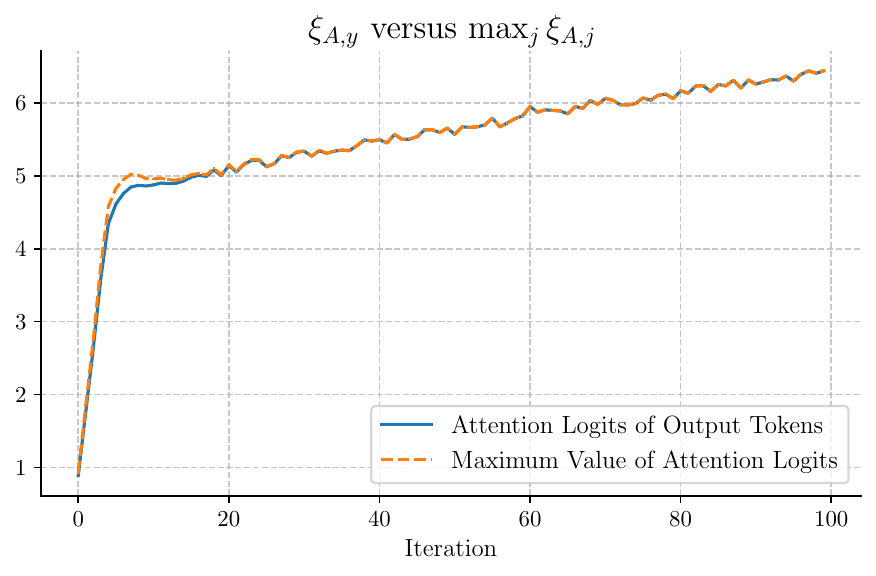}
        \caption{}
        % \label{fig:2a}
    \end{subfigure}
    \hfill % Maximize horizontal spacing
    \begin{subfigure}[b]{0.32\textwidth}
        \centering
        \includegraphics[width=\linewidth]{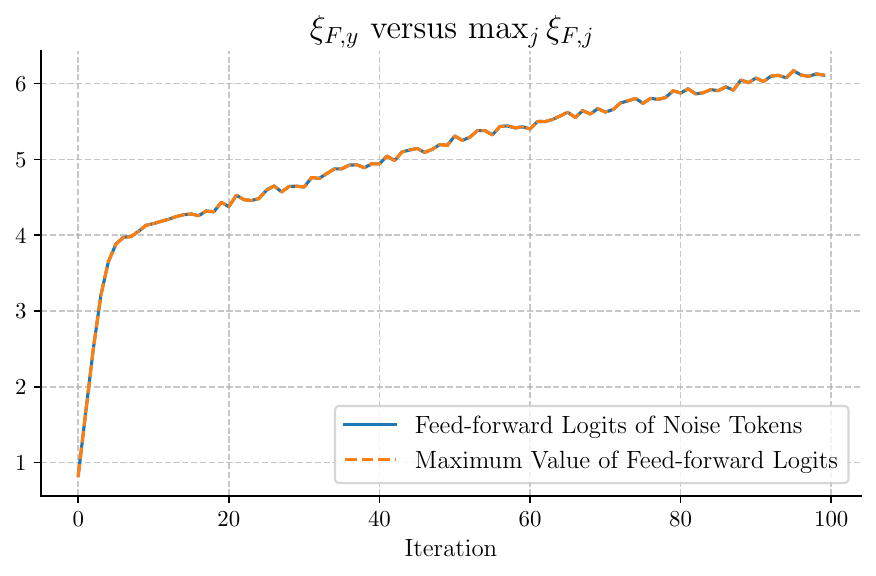}
       \caption{}
        % \label{fig:2b}
    \end{subfigure}
    \hfill
    \begin{subfigure}[b]{0.32\textwidth}
        \centering
        \includegraphics[width=\linewidth]{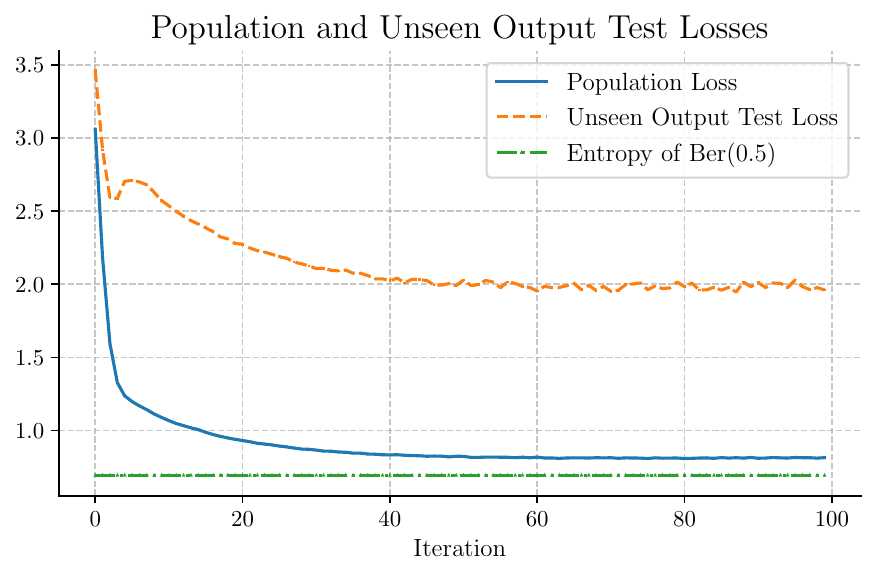}
        \caption{}
        % \label{fig:2c}
    \end{subfigure}

    % Row of figures
    \begin{subfigure}[b]{0.32\textwidth}
        \centering
        \includegraphics[width=\linewidth]{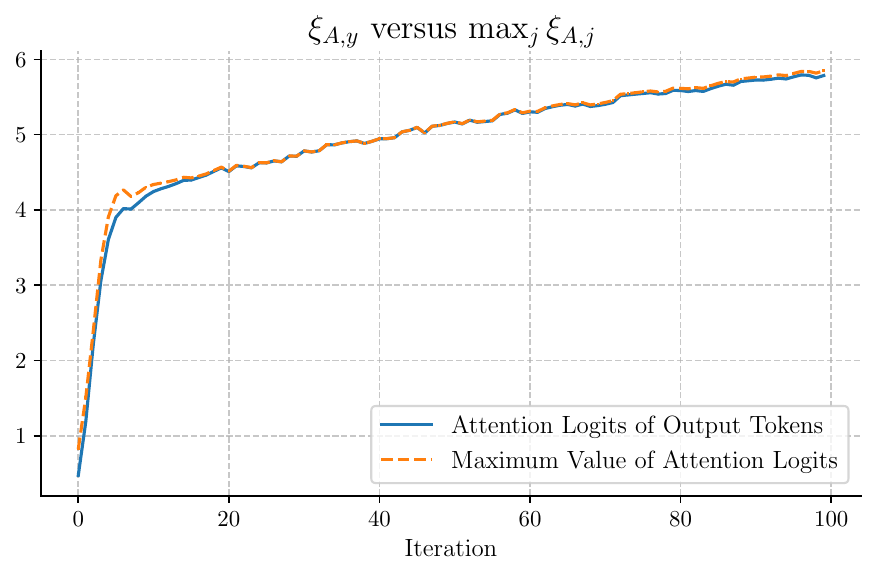}
        \caption{}
        % \label{fig:2a}
    \end{subfigure}
    \hfill % Maximize horizontal spacing
    \begin{subfigure}[b]{0.32\textwidth}
        \centering
        \includegraphics[width=\linewidth]{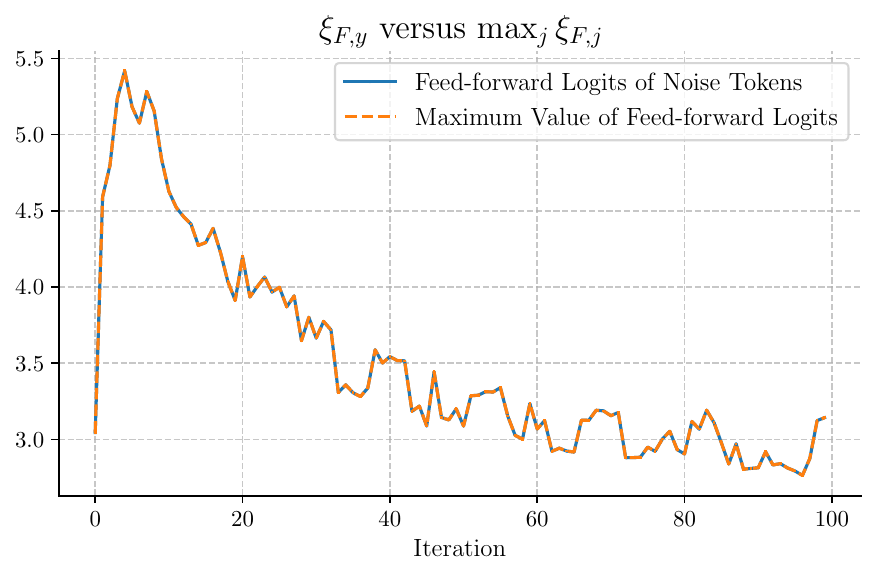}
       \caption{}
        % \label{fig:2b}
    \end{subfigure}
    \hfill
    \begin{subfigure}[b]{0.32\textwidth}
        \centering
        \includegraphics[width=\linewidth]{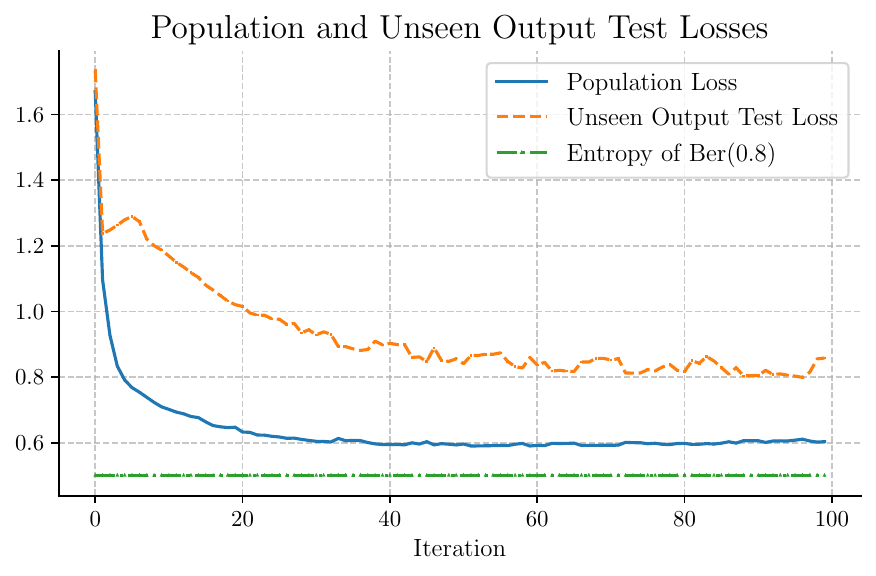}
        \caption{}
        % \label{fig:2c}
    \end{subfigure}
    \caption{\texttt{Reparam-Linear-W} with $\alpha  =0.2$ (first row), $\alpha = 0.5$ (second row) and $\alpha = 0.8$ (third row). The learning rate is $\eta = 0.1$.}
\end{figure}
%%%%%%%%%%%%%%%%%%%%%%%%%%%%%%%%%
\begin{figure}[t]
    \centering
    % Row of figures
    \begin{subfigure}[b]{0.32\textwidth}
        \centering
        \includegraphics[width=\linewidth]{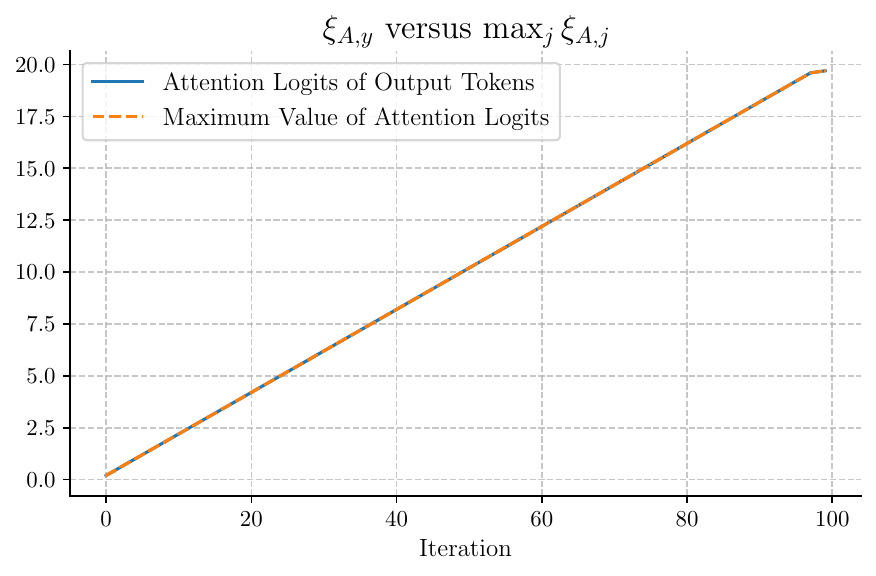}
        \caption{}
        % \label{fig:2a}
    \end{subfigure}
    \hfill % Maximize horizontal spacing
    \begin{subfigure}[b]{0.32\textwidth}
        \centering
        \includegraphics[width=\linewidth]{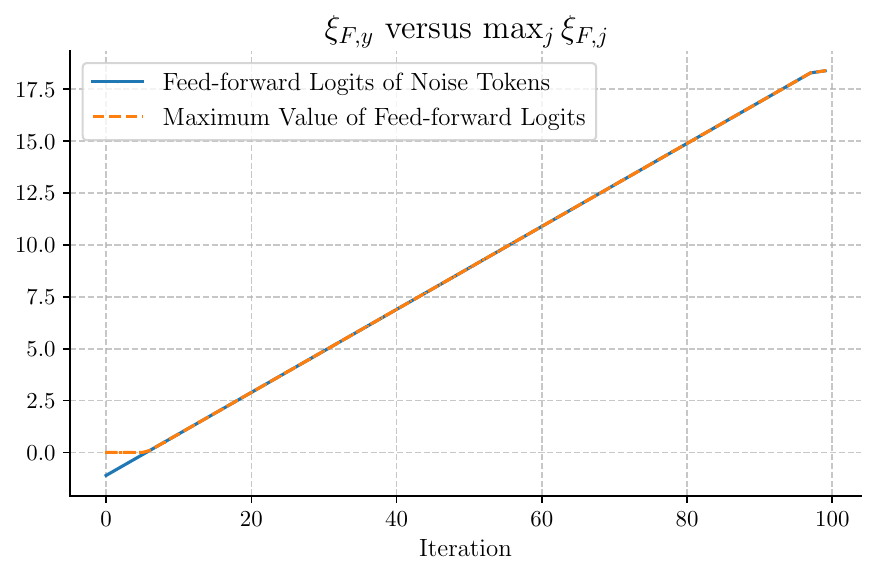}
       \caption{}
        % \label{fig:2b}
    \end{subfigure}
    \hfill
    \begin{subfigure}[b]{0.32\textwidth}
        \centering
        \includegraphics[width=\linewidth]{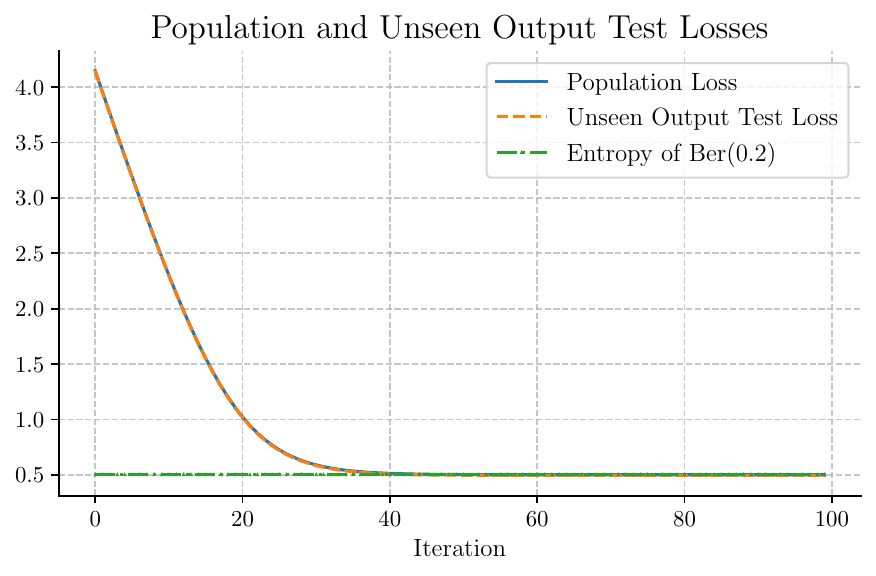}
        \caption{}
        % \label{fig:2c}
    \end{subfigure}    

    % Row of figures
    \begin{subfigure}[b]{0.32\textwidth}
        \centering
        \includegraphics[width=\linewidth]{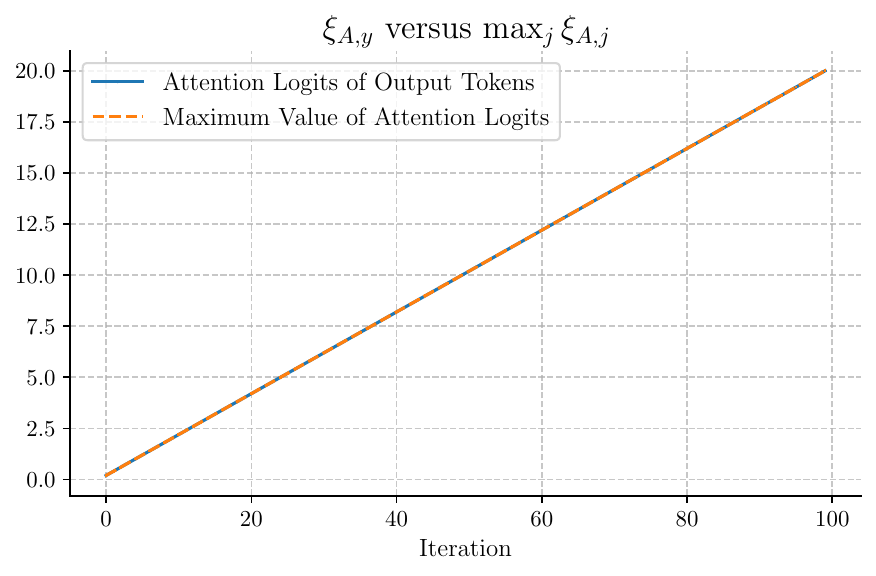}
        \caption{}
        % \label{fig:2a}
    \end{subfigure}
    \hfill % Maximize horizontal spacing
    \begin{subfigure}[b]{0.32\textwidth}
        \centering
        \includegraphics[width=\linewidth]{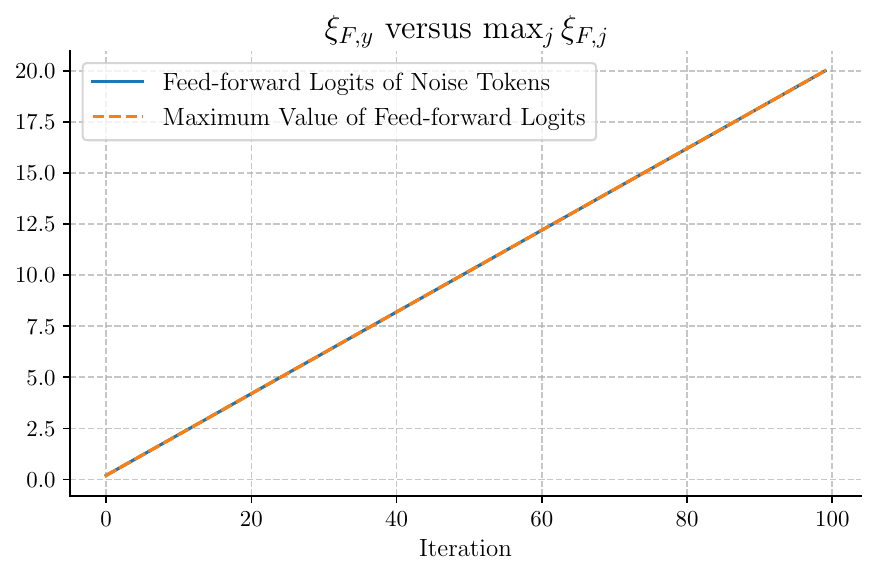}
       \caption{}
        % \label{fig:2b}
    \end{subfigure}
    \hfill
    \begin{subfigure}[b]{0.32\textwidth}
        \centering
        \includegraphics[width=\linewidth]{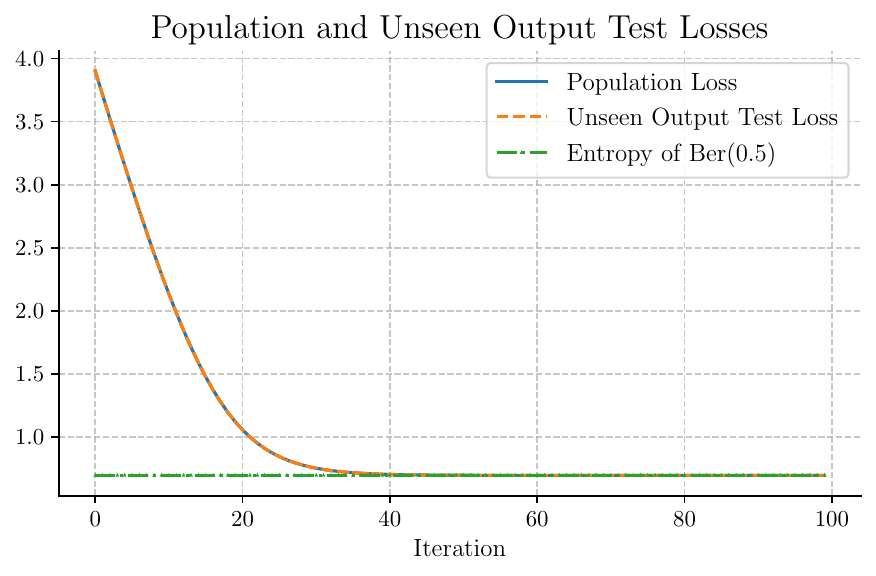}
        \caption{}
        % \label{fig:2c}
    \end{subfigure}

    % Row of figures
    \begin{subfigure}[b]{0.32\textwidth}
        \centering
        \includegraphics[width=\linewidth]{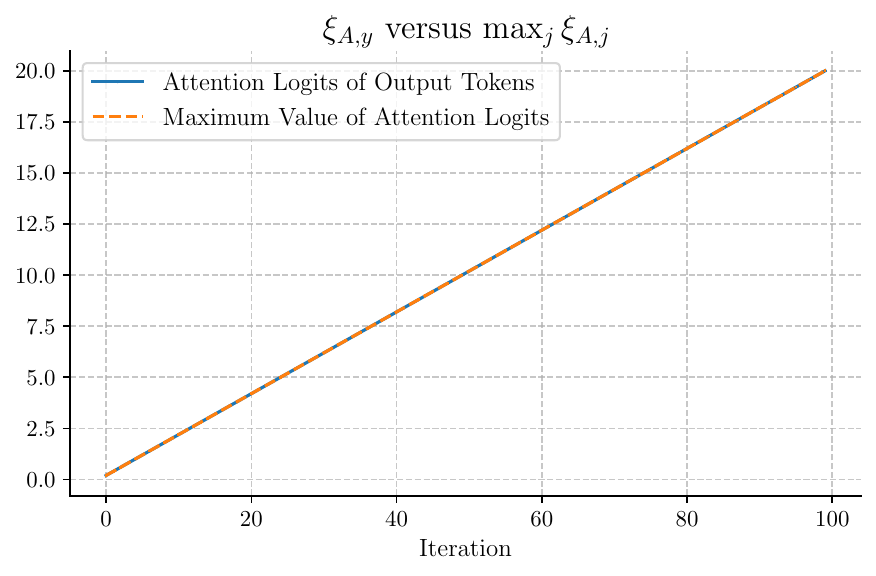}
        \caption{}
        % \label{fig:2a}
    \end{subfigure}
    \hfill % Maximize horizontal spacing
    \begin{subfigure}[b]{0.32\textwidth}
        \centering
        \includegraphics[width=\linewidth]{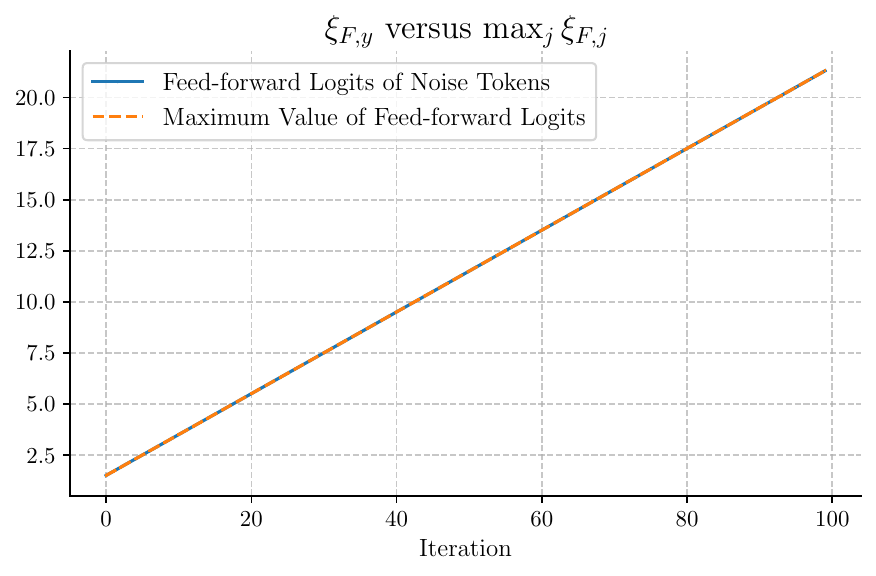}
       \caption{}
        % \label{fig:2b}
    \end{subfigure}
    \hfill
    \begin{subfigure}[b]{0.32\textwidth}
        \centering
        \includegraphics[width=\linewidth]{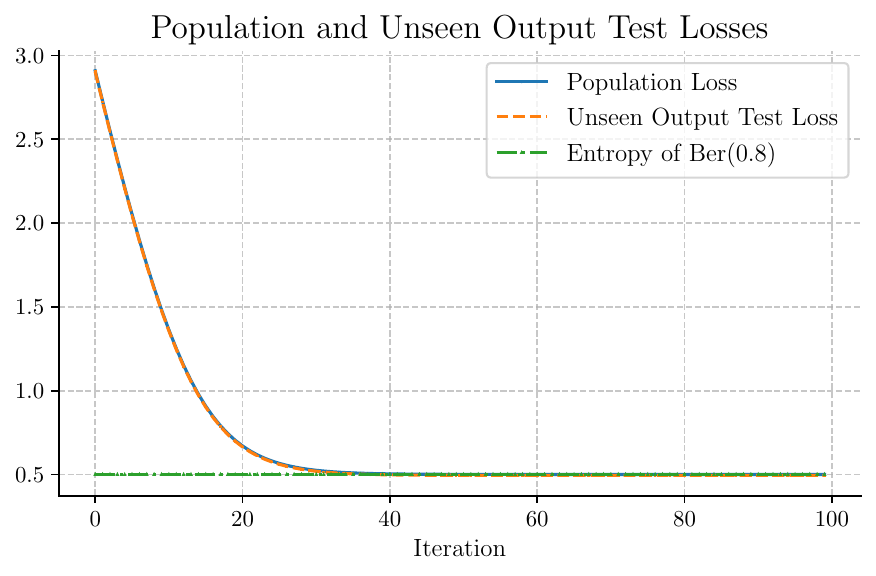}
        \caption{}
        % \label{fig:2c}
    \end{subfigure}
    
    \caption{\texttt{Reparam-Linear} with $\alpha  =0.2$ (first row), $\alpha = 0.5$ (second row) and $\alpha = 0.8$ (third row). The learning rate is $\eta = 0.1$.}
    \label{fig:layerwiseReparamLinearFA}
\end{figure}

\end{document}